\title{Imitation Learning as Return Distribution Matching}
\author{%
Filippo Lazzati\\
Politecnico di Milano\\
% Piazza Leonardo da Vinci, 32\\
Milan, Italy\\
\texttt{filippo.lazzati@polimi.it} \\
\And
Alberto Maria Metelli \\
Politecnico di Milano\\
% Piazza Leonardo da Vinci, 32\\
Milan, Italy\\
}
\begin{document}

\maketitle

\begin{abstract}
We study the problem of training a risk-sensitive reinforcement learning (RL)
agent through imitation learning (IL). Unlike standard IL, our goal is not only
to train an agent that matches the expert's expected return (i.e., its
\emph{average performance}) but also its \emph{risk attitude} (i.e., other
features of the return distribution, such as variance).
We propose a general formulation of the risk-sensitive IL problem in which the
objective is to match the expert's return distribution in Wasserstein distance.
We focus on the tabular setting and assume the expert's reward is \emph{known}.
After demonstrating the limited expressivity of Markovian policies for this
task, we introduce an efficient and sufficiently expressive subclass of
non-Markovian policies tailored to it.
Building on this subclass, we develop two provably efficient algorithms—\rsbc
and \rskt—for solving the problem when the transition model is unknown and
known, respectively. We show that \rskt achieves substantially lower sample
complexity than \rsbc by exploiting dynamics information.
We further demonstrate the sample efficiency of return distribution matching in
the setting where the expert's reward is \emph{unknown} by designing an
oracle-based variant of \rskt.
Finally, we complement our theoretical analysis of \rskt and \rsbc with
numerical simulations, highlighting both their sample efficiency and the
advantages of non-Markovian policies over standard sample-efficient IL
algorithms.
\end{abstract}

\section{Introduction}\label{sec: introduction}

Imitation Learning (IL) \citep{abbeel2004apprenticeship,osa2018IL} is the
problem of training an agent to behave by mimicking demonstrations from an
expert. By removing the need for designing a reward function for the task—which
is often a difficult challenge \citep{hadfield2017inverse}—IL has been
successfully applied in diverse domains, including robotics
\citep{argall2009surveyrobotlearning}, autonomous driving
\citep{lemero2022surveyautonomousdrivingIL}, finance
\citep{goluza2023ilfinance}, and LLMs \citep{zhao2025surveylargelanguagemodels}.

Most existing IL algorithms—including \bc (Behavioral Cloning)
\citep{pomerleau1988alvinn}, \texttt{GAIL}
\citep{ho2016generativeadversarialimitationlearning}, and others
\citep{Ziebart2010ModelingPA,reddy2020sqil,garg2021IQlearn}—focus on finding the
\emph{Markovian} policy that best matches the expert's \emph{occupancy measure}.
This focus is motivated by two observations. First, matching occupancy measures
guarantees that the \emph{expected return} of our policy is close to the
expert's, regardless of the expert's \emph{unknown} reward
\citep{abbeel2004apprenticeship}. Second, Markovian policies are sufficiently
expressive. Indeed, for any arbitrary policy, there exists a Markovian policy
with the same occupancy measure \citep{puterman1994markov}.

By focusing solely on the occupancy measure—which captures the expected value of
the return distribution—standard IL algorithms are inherently
\emph{risk-neutral}, ignoring other characteristics of the return distribution
such as the variance \citep{mannor2011meanvarianceoptimizationmarkovdecision}.
However, expert demonstrations often come from humans who, in domains like
finance \citep{foellmer2004stochastic} or autonomous driving
\citep{Bernhard2019AddressingIU}, exhibit \emph{risk-sensitive} behavior under
stochasticity.
In these settings, the key aspect of the demonstrations is the expert's
\emph{risk attitude}, encoded in the shape of the return distribution
\citep{bellemare2023distributional}, but overlooked by standard IL methods.

To address this, \citet{santara2017rail,lacotte2019rsgail} proposed extending
occupancy measure matching to risk-sensitive settings by additionally matching
the Conditional Value at Risk (CVaR) \citep{rockafellar2000cvar} at a chosen
level $\alpha$ of the expert's return distribution.
Intuitively, by seeking the \emph{Markovian} policy that best matches both the
expectation and the CVaR at level $\alpha$, these algorithms attempt to imitate
not only the expert's average performance but also its tail behavior.

However, this approach faces two main limitations: $(i)$ matching only the
expectation and the CVaR at level $\alpha$ captures a narrow slice of the
expert's full return distribution and thus provides a weak imitation of risk
attitude, and $(ii)$ Markovian policies are not expressive enough to capture
all relevant risk-sensitive behaviors \citep{bellemare2023distributional},
leading to misspecification error.
To overcome these challenges, we reformulate risk-sensitive IL as matching the
expert's \emph{entire} return distribution, and we design algorithms that
perform policy search efficiently in the space of non-Markovian policies.

\paragraph{Contributions.} Our main contributions are as follows:
\begin{itemize}[leftmargin=*, topsep=-2pt]
\item We formulate IL as the problem of matching the expert's return
distribution in Wasserstein distance. We motivate this setting and demonstrate
the importance of non-Markovian policies (Section~\ref{sec: IL as return distrib
matching}).
\item We introduce an efficient and expressive subclass of non-Markovian
policies for the tabular setting with a \emph{known} expert reward, and use it
to develop two provably efficient algorithms, \rsbc and \rskt, for the cases
where the transition model is unknown and known, respectively (Section~\ref{sec:
r known}).
\item We show that in the tabular setting with an \emph{unknown} expert reward
but a known transition model, sample efficiency can still be achieved by
devising an oracle-based variant of \rskt (Section~\ref{sec: r unknown}).
\item Finally, we conduct numerical simulations to empirically evaluate \rsbc
and \rskt, comparing them in particular against standard provably efficient IL
algorithms (Section~\ref{sec: num simulations}).
\end{itemize}
All proofs are provided in Appendix~\ref{apx: sec ret distrib match}--\ref{apx:
sec r unknown}, and additional related work is discussed in Appendix~\ref{apx:
rel work}.

\section{Preliminaries}\label{sec: preliminaries}

\paragraph{Notation.}

Given a natural number $n\in\Nat$, we define $\dsb{n}\coloneqq\{1,2,\dotsc,n\}$.
Given a real number $x\in\RR$, we let $\floor{x}\coloneqq\max_{m\in\bZ: m\le x}
m$ be the floor function.
Given two sets $\cX,\cY$, we denote by $\Delta^\cX$ and $\Delta_\cY^\cX$,
respectively, the set of probability measures on $\cX$ and the set of functions
from $\cY$ to $\Delta^\cX$.
Given probabilities $p,q\in\Delta^\cX$ on a finite support $\cX\subset\RR$, with
cumulative distributions $F_p$ and $F_q$, the (1-)Wasserstein distance is
$\cW(p,q)\coloneqq \int_{\RR} |F_p(x)-F_q(x)|dx$ \citep{Villani2008OptimalTO},
and the total variation distance is $\text{TV}(p,q)\coloneqq
\frac{1}{2}\sum_{x\in\cX}|p(x)-q(x)|$.
The CVaR at level $\alpha\in(0,1)$ of $p$ is
$\text{CVaR}_\alpha(p)\coloneqq\frac{1}{\alpha}\int_0^\alpha F_p^{-1}(u)du$,
where $F_p^{-1}(u)\coloneqq \inf_{z\in\RR:F_p(z)\ge u}z$
\citep{rockafellar2000cvar}.

\paragraph{Markov Decision Processes (MDPs).}

A tabular finite-horizon episodic Markov Decision Process without reward
(MDP$\setminus$R) \citep{puterman1994markov,abbeel2004apprenticeship} is a tuple
$\cM\coloneqq\tuple{\cS,\cA,H,s_0,p}$, where $\cS$ is the finite state space
($S\coloneqq |\cS|$), $\cA$ is the finite action space ($A\coloneqq |\cA|$),
$H\in\Nat$ is the horizon, $s_0\in\cS$ is the initial state, and
$p\in\Delta_{\SAH}^\cS$ is the transition model.
An \MDPr $\cM$ can be enriched with a reward $r:\SAH\to[0,1]$, to obtain an MDP
$\cM_r\coloneqq\tuple{\cS,\cA,H,s_0,p,r}$.
We denote the set of state-action trajectories of length $h-1$ as
$\Omega_h\coloneqq (\SA)^{h-1}$ for all $h\in\dsb{H+1}$, and define
$\Omega\coloneqq\bigcup_{h\in\dsb{H}} \Omega_h$.
For any trajectory $\omega=(s_1,a_1,\dotsc,s_h,a_h)$ and reward $r$, we let
$G(\omega;r)=\sum_{h'\in\dsb{h}} r_{h'}(s_{h'},a_{h'})$ denote the sum of rewards of $\omega$.
A policy $\pi$ prescribes actions in states. We denote by
$\Pi^{\text{NM}}\coloneqq\Delta_{\Omega\times\cS}^\cA$ the set of non-Markovian
(i.e., history-dependent) policies,\footnote{Neglecting past rewards in
$\Pi^{\text{NM}}$ is without loss of generality since we consider deterministic
rewards.} and by $\Pi^{\text{M}}\coloneqq\Delta_{\dsb{H}\times\cS}^\cA$ the set
of Markovian policies. Note that $\Pi^{\text{M}}\subseteq \Pi^{\text{NM}}$.
Playing a policy $\pi\in\Pi^{\text{NM}}$ in an \MDPr $\cM$ (or an MDP) induces a
probability distribution over trajectories $\P^\pi \in\Delta^{\Omega_{H+1}}$.
The occupancy measure $d^\pi$ of $\pi$ in $\cM$ is the marginal of $\P^\pi$ over
state-action pairs at a given stage: $d_h^\pi(s,a)\coloneqq
\P^\pi(s_h=s,a_h=a)$.
Given a reward $r$, the \emph{random} sum of rewards $\sum_{h=1}^H r_h(s_h,a_h)$
induced by the execution of $\pi$ is the return, and we denote its distribution,
called the \emph{return distribution} \citep{bellemare2023distributional}, as
$\eta^\pi_r(g)\coloneqq \P^\pi\bigr{\sum_{h=1}^H r_h(s_h,a_h)=g}$ for all
$g\in[0,H]$.
Lastly, we denote by $J^\pi_r$ the expectation of $\eta^\pi_r$.

\paragraph{Imitation Learning (IL).}

In IL, we are given a dataset $\cD^E=\{\omega_i\}_{i\in\dsb{N}}$ of $N$
trajectories $\omega_i\in\Omega_{H+1}$, collected by a (potentially
non-Markovian) expert policy $\pi^E\in\Pi^{\text{NM}}$, and the goal is to
find a policy $\widehat{\pi}$ with expected return close to that of $\pi^E$
under the expert's reward $r^E$ \citep{abbeel2004apprenticeship}:
\begin{align}\label{eq: IL known r}
    \widehat{\pi}\in\argmin\limits_{\pi\in\Pi^{\text{NM}}}
    \Biga{J^{\pi^E}_{r^E}-J^\pi_{r^E}}.
\end{align}
Since $r^E$ is usually unknown, the problem is reformulated in robust terms as
finding a policy that performs comparably to the expert for any possible reward:
\begin{align}\label{eq: standard IL minimax}
  \widehat{\pi}\in\argmin\limits_{\pi\in\Pi^{\text{NM}}}
  \max\limits_{r
  :\SAH\to[0,1]
  }\Biga{J^{\pi^E}_r-J^\pi_r}.
\end{align}
Interestingly, \citet{abbeel2004apprenticeship,ho2016generativeadversarialimitationlearning}
showed that Eq.~\eqref{eq: standard IL minimax} essentially reduces to finding
a policy $\widehat{\pi}$ whose occupancy measure $d^{\widehat{\pi}}$ is close to
the expert's $d^{\pi^E}$. Thus, Eq.~\eqref{eq: standard IL minimax} (and
Eq.~\ref{eq: IL known r}) can be addressed with Markovian policies
$\Pi^{\text{M}}$, which are known to be expressive enough for occupancy measure
matching problems (e.g., see \citet{laroche2023occupancy}).
Based on these insights, recent theoretical work
\citep{rajaraman2020fundamentalimitationlearning,foster2024bcallyouneed}
demonstrated that IL can be solved \emph{provably} efficiently.

\paragraph{Risk-sensitive IL.}

Optimizing Eqs.~\eqref{eq: IL known r}--\eqref{eq: standard IL minimax}
guarantees imitation of the expert's \emph{average performance}, i.e., its
expected return $J^{\pi^E}_{r^E}$, but does not guarantee imitation of its
\emph{risk attitude}, encoded in the shape of its return distribution
$\eta^{\pi^E}_{r^E}$.
For this reason, \citet{santara2017rail,lacotte2019rsgail} proposed
strengthening the standard IL formulation by also matching the CVaR at some
level $\alpha\in(0,1)$ of $\eta^{\pi^E}_{r^E}$ in addition to $J^{\pi^E}_{r^E}$.
Formally, in the unknown $r^E$ setting, they propose:\footnote{The formulation
of \citet{santara2017rail} is slightly different, as they require matching the
expectation while optimizing the CVaR; however, the high-level idea and the
issues with non-Markovian policies remain the same.}
\begin{align}\label{eq: formulation IL cvar}
  \widehat{\pi}\in\argmin\limits_{\pi\in\Pi^{\text{NM}}}\max\limits_{r
  :\SAH\to[0,1]  }
  \biggr{\Bigr{J^{\pi^E}_r-J^\pi_r}+\rho\Bigr{\text{CVaR}_\alpha
  (\eta^{\pi^E}_r)-\text{CVaR}_\alpha(\eta^{\pi}_r)}},
\end{align}
where {\thinmuskip=3mu \medmuskip=3mu \thickmuskip=3mu$\rho(x)=x$ if
$x\le0$ and $+\infty$} otherwise.
This extension, however, makes the problem substantially harder than standard
IL, since the optimal solution to Eq.~\eqref{eq: formulation IL cvar} cannot, in
general, be found among the Markovian policies $\Pi^{\text{M}}$ (even if $r^E$
was known).\footnote{Indeed, since the optimal policy to a CVaR optimization
problem is, in general, non-Markovian \citep{bauerle2011cvar}, the solution to
Eq.~\eqref{eq: formulation IL cvar} also belongs to $\Pi^{\text{NM}}$ due to the
hard constraint enforced by $\rho$.}
Nevertheless, \citet{santara2017rail,lacotte2019rsgail} ignored this aspect and
proposed algorithms that output Markovian policies, which, however, may not be
suited for general non-Markovian experts like humans
\citep{mandlekar2022whatmatters}.

\section{Return Distribution Matching}
\label{sec: IL as return distrib matching}

Our goal is to train agents that match not only the expert's expected return but
also its risk attitude. Standard IL is unsuitable since it ignores risk, while
existing risk-sensitive IL methods only capture a limited aspect of the expert's
return distribution, i.e., the CVaR at a fixed level.
We therefore propose an alternative formulation, called \emph{return
distribution matching} (RDM), which requires matching the \emph{entire} expert
return distribution \scalebox{0.93}{$\eta^{\pi^E}_{r^E}$} in Wasserstein
distance:
\begin{align}\label{eq: return distribution matching}
  \widehat{\pi}\in\argmin_{\pi\in\Pi^{\text{NM}}}
  \cW\Bigr{\eta_{r^E}^{\pi},\eta_{r^E}^{\pi^E}}.
\end{align}
This objective extends Eq.~\eqref{eq: IL known r} and assumes knowledge of the
expert reward $r^E$. Our focus will primarily be on this known-reward setting
(see Section~\ref{sec: r known}), both because it is of independent interest
(similar to inverse constrained RL \citep{malik2021icrl} and utility learning
\citep{lazzati2025utility}), and because it provides a foundation for the more
challenging unknown-reward case, which we next formalize.
When $r^E$ is unknown, following Eqs.~\eqref{eq: standard IL
minimax}--\eqref{eq: formulation IL cvar}, we propose a \emph{robust} version of
RDM:
\begin{align}\label{eq: robust return distribution matching}
  \widehat{\pi}\in\argmin_{\pi\in\Pi^{\text{NM}}}\max_{r:\SAH\to[0,1]}
  \cW\Bigr{\eta_{r}^{\pi},\eta_{r}^{\pi^E}},
\end{align}
which requires matching the expert's return distribution for \emph{all} possible
rewards (see Section~\ref{sec: r unknown}).
We now provide three key observations about RDM.
First, matching return distributions in Wasserstein distance is strictly more
general than matching expected return or CVaR. Indeed, if
$\cW(\eta_{r^E}^{\widehat{\pi}},\eta_{r^E}^{\pi^E})\le \epsilon$, then
$J_{r^E}^{\pi^E}-J_{r^E}^{\widehat{\pi}}\le\epsilon$ and
$\biga{\text{CVaR}_\alpha(\eta_{r^E}^{\pi^E})-
\text{CVaR}_\alpha(\eta_{r^E}^{\widehat{\pi}})}\le \epsilon/\alpha$ for
\emph{any} $\alpha\in(0,1)$ (see Appendix~\ref{apx: probl formulation}).
Second, Wasserstein distance is essential for favorable sample complexity.
Stronger metrics, such as total variation, require exponentially many expert
trajectories in some instances, even if the MDP $\cM_{r^E}$ is fully known (see
Appendix \ref{apx: lower bound total variation} for the proof):
\begin{restatable}{thr}{lowerboundtv}\label{thr: lower bound TV} There exist an
  MDP $\cM_{r^E}$ with $S,A,H\ge2$ and an expert policy
  $\pi^E\in\Pi^{\text{NM}}$ such that, even with $N= (S-1)^{H-1}-1$
  trajectories, any algorithm $\fA$ satisfies
  \begin{align*}
    \E_{\cD^E\sim\P^{\pi^E}}\text{TV}
    \Bigr{\eta^{\pi^E}_{r^E}, \eta^{\widehat{\pi}}_{r^E}}
    \ge \frac{1}{2e},
  \end{align*}
  where $\widehat{\pi}$ is the output of $\fA$ given in input $\cM_{r^E}$ and
  $\cD^E$.
\end{restatable}
Finally, we remark that Markovian policies are not expressive enough for RDM,
since they fail to capture the behavior of non-Markovian experts even for the
simpler risk-sensitive IL problem in Eq. \eqref{eq: formulation IL cvar}. Note
that the gap can be significant even with very short horizons (proof in Appendix
\ref{apx: lower bound total variation}):
\begin{restatable}{prop}{noilofnmpolicies}\label{prop: no il of nm policies}
  There exist an MDP $\cM_{r^E}$ with horizon $H=3$ and an expert policy
  $\pi^E\in\Pi^{\text{NM}}$ such that \emph{any} Markovian policy
  $\pi\in\Pi^{\text{M}}$ satisfies
  \begin{align*}
    \cW\Bigr{\eta^{\pi^E}_{r^E},\eta^{\pi}_{r^E}}\ge 0.5.
  \end{align*}
\end{restatable}
Therefore, new algorithms that output non-Markovian policies are needed to
tackle RDM effectively.

\section{Known-Reward Setting}\label{sec: r known}

In this section, we assume the expert's reward $r^E$ is known and present our
main contributions.
In Section~\ref{sec: policy class}, we introduce an efficient and sufficiently
expressive subset of non-Markovian policies for RDM.
Building on this, in Sections~\ref{sec: rsbc} and~\ref{sec: rskt}, we develop
two provably efficient algorithms, \rsbc and \rskt, for the cases where the
transition model is unknown and known, respectively.

\subsection{An Efficient Class of Non-Markovian Policies}
\label{sec: policy class}

Proposition~\ref{prop: no il of nm policies} shows that Markovian policies
$\Pi^{\text{M}}$ are not expressive enough for RDM.
At the same time, optimizing Eq.~\eqref{eq: return distribution matching} over
the entire set of non-Markovian policies $\Pi^{\text{NM}}$ is intractable due to
the curse of dimensionality.
In this section, we introduce a subclass of non-Markovian policies
$\Pi(r^E_\theta)$, lying between $\Pi^{\text{NM}}$ and $\Pi^{\text{M}}$, that
allows us to address RDM accurately without sacrificing efficiency.
The trade-off between accuracy and efficiency is controlled by a parameter
$\theta>0$.
To this end, we first establish some notation.
For any reward $r$, define $\Pi(r)\subseteq\Pi^{\text{NM}}$ as the set of
policies whose choice of action depends only on the current state $s$, stage
$h$, and the cumulative reward so far $G(\omega;r)$:
\begin{align*}
  % \label{eq: def Pi r}
  \scalebox{0.935}{$  \displaystyle
    \Pi(r)\coloneqq\Bigc{\pi\in\Pi^{\text{NM}}\,\Big|\,
    \exists \phi\in\Delta^\cA_{\dsb{H}\times\cS\times\cG_{r}}:
    \,\pi(a|s,\omega)=\phi_h(a|s,G(\omega;r))\;
   \forall s\in\cS,a\in\cA,h\in\dsb{H},\omega\in\Omega_h
    },
    $}
\end{align*}
where $\cG_{r}\coloneqq\{g\in[0,H-1]\,| \,\exists \omega\in\Omega:\,
G(\omega;r)=g\}$ denotes the set of possible cumulative reward values attainable
at any stage except the last.\footnote{Note that $\cG_{r}$ is always finite in
tabular MDPs with deterministic rewards.}
Observe that each $\pi\in\Pi(r)$ can be interpreted as a Markovian policy in the
MDP obtained by augmenting the state space $\cS$ with the cumulative reward
$\cG_r$.
Next, for any reward $r$ and expert policy $\pi^E\in\Pi^{\text{NM}}$, define
$\pi_{r}\in\Pi(r)$ as the policy whose probability of taking an action $a$ in
state $s$ with history $\omega\in\Omega_h$ coincides with the ``average''
probability with which $\pi^E$ selects $a$ in $s$ after accumulating
$G(\omega;r)$ reward:
\begin{align}\label{eq: def policy imitate same occ meas}
    \pi_{r}(a|s,\omega)\coloneqq
      \frac{\P^{\pi^E}(s_h=s,\;a_h=a,\;\sum_{h'=1}^{h-1} r_{h'}(s_{h'},a_{h'})=G(\omega;r))}{
        \P^{\pi^E}(s_h=s,\;\sum_{h'=1}^{h-1} r_{h'}(s_{h'},a_{h'})=G(\omega;r))}.
\end{align}
If the denominator is zero, we set $\pi_{r}(a|s,\omega)=1/A$.
With these two ingredients, $\Pi(r)$ and $\pi_{r}$, we can state the following
important intermediate result (see Appendix~\ref{apx: proofs lemmas policy
class} for the proof):
\begin{restatable}{lemma}{lemmasameretdistrib}\label{lemma: same return distribution}
    Let $\cM_{r^E}$ be any MDP and let $\pi^E\in\Pi^{\text{NM}}$ be any policy.
    Then, the policy $\pi_{r^E}\in\Pi(r^E)$ satisfies
    $\eta^{\pi_{r^E}}_{r^E}(g)=\eta^{\pi^E}_{r^E}(g)$ for all $g\in[0,H]$.
\end{restatable}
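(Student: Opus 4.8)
The plan is to track, for every stage $h\in\dsb{H}$, the joint law of the current state $s_h$ and the reward $G_h\coloneqq\sum_{h'=1}^{h-1} r^E_{h'}(s_{h'},a_{h'})$ accumulated so far (with $G_1=0$), and to show by induction on $h$ that this law is the same under $\pi^E$ and under $\pi_{r^E}$. For a policy $\pi$, write $\mu^\pi_h(s,g)\coloneqq\P^\pi(s_h=s,\,G_h=g)$ and $q^\pi_h(s,a,g)\coloneqq\P^\pi(s_h=s,\,a_h=a,\,G_h=g)$. Since the return equals $\sum_{h=1}^H r^E_h(s_h,a_h)=G_{H+1}$, we have $\eta^\pi_{r^E}(g)=\P^\pi(G_{H+1}=g)=\sum_{(s,a,g')\,:\;g'+r^E_H(s,a)=g} q^\pi_H(s,a,g')$, so it suffices to prove $q^{\pi^E}_H=q^{\pi_{r^E}}_H$; in fact I will prove $\mu^{\pi^E}_h=\mu^{\pi_{r^E}}_h$ for all $h\in\dsb{H}$ and deduce matching of the $q_h$'s along the way.

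The key structural observation, which I would establish first, is why $\Pi(r^E)$ is the right class: for any $\pi\in\Pi(r^E)$ with representation $\pi(a|s,\omega)=\phi_h(a|s,G(\omega;r^E))$, the conditional action law $\P^\pi(a_h=a\mid s_h=s,\,G_h=g)$ is a well-defined function of $(h,s,g)$ and equals $\phi_h(a|s,g)$, because the event $\{s_h=s,\,G_h=g\}$ decomposes into histories $\omega\in\Omega_h$ all having $G(\omega;r^E)=g$, on each of which $\pi$ plays $\phi_h(\cdot|s,g)$. Combined with the definition of $\pi_{r^E}$ in Eq.~\eqref{eq: def policy imitate same occ meas}, this gives, whenever $\P^{\pi^E}(s_h=s,G_h=g)>0$ (so the $1/A$ fallback is not triggered),
\[
 \P^{\pi_{r^E}}(a_h=a\mid s_h=s,G_h=g)=\pi_{r^E}(a|s,\omega)=\frac{\P^{\pi^E}(s_h=s,a_h=a,G_h=g)}{\P^{\pi^E}(s_h=s,G_h=g)}=\P^{\pi^E}(a_h=a\mid s_h=s,G_h=g),
\]
i.e.\ $\pi^E$ and $\pi_{r^E}$ induce \emph{the same} conditional action distribution given the current (state, cumulative-reward) pair on every pair that $\pi^E$ visits with positive probability.

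The induction then proceeds as follows. The base case $h=1$ is immediate since $s_1=s_0$ and $G_1=0$ deterministically, so $\mu^{\pi^E}_1=\mu^{\pi_{r^E}}_1$. For the step, assume $\mu^{\pi^E}_h=\mu^{\pi_{r^E}}_h$ and call the common function $\mu_h$. I first upgrade this to $q^{\pi^E}_h=q^{\pi_{r^E}}_h$: on pairs with $\mu_h(s,g)=0$ both joints vanish, while on pairs with $\mu_h(s,g)>0$ we write $q^\pi_h(s,a,g)=\mu_h(s,g)\,\P^\pi(a_h=a\mid s_h=s,G_h=g)$ for $\pi\in\{\pi^E,\pi_{r^E}\}$ and invoke the displayed identity. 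Next, since transitions are Markovian in $(s,a)$ (even when $\pi$ is history-dependent) and $G_{h+1}=G_h+r^E_h(s_h,a_h)$, the push-forward
\[
 \mu^\pi_{h+1}(s',g')=\sum_{(s,a,g)\,:\;g+r^E_h(s,a)=g'} q^\pi_h(s,a,g)\,p_h(s'|s,a)
\]
depends on $\pi$ only through $q^\pi_h$, hence $\mu^{\pi^E}_{h+1}=\mu^{\pi_{r^E}}_{h+1}$, completing the induction. Applying the same reasoning one more time at $h=H$ (now summing $q^\pi_H$ over $\{g'+r^E_H(s,a)=g\}$, with no transition factor since the return does not involve $s_{H+1}$) yields $\eta^{\pi^E}_{r^E}=\eta^{\pi_{r^E}}_{r^E}$.

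The main obstacle is not conceptual but lies in the degenerate-case bookkeeping: one must check that the induction hypothesis on $\mu_h$ guarantees the $1/A$ fallback in Eq.~\eqref{eq: def policy imitate same occ meas} is never invoked on pairs $(s,g)$ of positive mass, and one must be careful to assert each conditional probability only on events of positive probability. A secondary point worth stating explicitly is that the transition kernel $p_h(s'|s,a)$ factors out of the push-forward regardless of whether $\pi$ depends on the full history; this is exactly what allows the argument to close by propagating the low-dimensional statistics $\mu_h$ and $q_h$ rather than the full trajectory law.
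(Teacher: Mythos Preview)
Your proposal is correct and follows essentially the same approach as the paper: the paper isolates the inductive statement $\P^{\pi_{r^E}}(s_h=s,G_h=g)=\P^{\pi^E}(s_h=s,G_h=g)$ into a separate lemma (Lemma~\ref{lemma: Psg equal Psg}) proved by the same induction you describe, and then derives the return-distribution equality at stage $H$ exactly as you do. Your decomposition $\mu_h\to q_h\to\mu_{h+1}$ and your handling of the $1/A$ fallback on zero-mass pairs match the paper's reasoning step for step.
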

In words, Lemma~\ref{lemma: same return distribution} guarantees that
$\Pi(r^E)$ always contains at least one policy with \emph{exactly} the same
return distribution as the expert, namely one that minimizes
Eq.~\eqref{eq: return distribution matching}.
Moreover, it provides an analytical expression for such a policy,
$\pi_{r^E}$ (see Eq.~\ref{eq: def policy imitate same occ meas}).
Unfortunately, $\Pi(r^E)$ is not always desirable. As discussed in
Appendix~\ref{apx: too many policies}, there exist reward functions $r^E$ for
which $\Pi(r^E)=\Pi^{\text{NM}}$, and, thus, $\pi_{r^E}$ may be an arbitrary
non-Markovian policy, inefficient to store.
Intuitively, this occurs when each trajectory yields a different return value,
leading to the \emph{exponential} dependence $|\cG_{r^E}|=(SA)^{H-1}$.
To overcome this limitation, inspired by prior work
\citep{bastani2022discretizationrisk,lazzati2025utility}, we adopt a
\emph{discretization} approach.
Given a parameter $\theta\in(0,1]$, we define a $\theta$-covering of the
interval $[0,h-1]$ as $\cY_h^\theta\coloneqq\{0,\theta,2\theta,
\dotsc,\floor{h-1/\theta}\theta\}$ for all $h\in\dsb{H+1}$, and set
$\cY^\theta\coloneqq\cY^\theta_{H+1}$.
Then, for any reward $r$, we construct the discretized reward $r_\theta$ as
(breaking ties arbitrarily):
\begin{align}\label{eq: discretized r}
   r_{\theta,h}(s,a)\coloneqq \argmin\limits_{x\in\cY^\theta_2}
    |x-r_h(s,a)|, \qquad \forall (s,a,h)\in\SAH.
\end{align}
Crucially, note that $\cG_{r_\theta}\subseteq\cY^\theta$ for any reward $r$,
since summing discretized rewards always yields discretized values.
Because $\cY^\theta$ has ``small'' (polynomial) size,
$|\cY^\theta|=\cO(H/\theta)$, the policy set $\Pi(r^E_\theta)$ is also small,
$|\Pi(r^E_\theta)|\ll|\Pi^{\text{NM}}|$, and every policy in
$\Pi(r^E_\theta)$, including $\pi_{r_\theta^E}$, can be efficiently stored (with
$\cO(SAH|\cY^\theta|)$ memory).
The following lemma shows that the approximation error introduced by using
policies in $\Pi(r_\theta^E)$ instead of $\Pi(r^E)$ for RDM can be tightly
controlled by~$\theta$ (see Appendix~\ref{apx: proofs lemmas policy class} for
the proof):
\begin{restatable}{lemma}{lemmasameretdistribwithlesspolicies}\label{lemma: apx policies}
  Let $\theta\in(0,1]$.
  Let $\cM_{r^E}$ be any MDP and $\pi^E\in\Pi^{\text{NM}}$ any policy.
  Then, the policy $\pi_{r_\theta^E}\in\Pi(r_\theta^E)$ satisfies
  $    \cW\bigr{\eta^{\pi_{r_\theta^E}}_{r^E},\eta^{\pi^E}_{r^E}}
    \le H\theta$.
\end{restatable}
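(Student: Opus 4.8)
The plan is to compare the target policy $\pi_{r_\theta^E}$ against the ideal policy $\pi_{r^E}$ from Lemma~\ref{lemma: same return distribution} and bound the Wasserstein gap by a \emph{coupling} argument. Since $\eta^{\pi_{r^E}}_{r^E}=\eta^{\pi^E}_{r^E}$ by Lemma~\ref{lemma: same return distribution}, it suffices to show $\cW(\eta^{\pi_{r_\theta^E}}_{r^E},\eta^{\pi_{r^E}}_{r^E})\le H\theta$. The natural idea is to run $\pi_{r_\theta^E}$ and $\pi_{r^E}$ \emph{on the same trajectory randomness}: both policies choose actions based on (state, stage, accumulated reward), but one uses accumulated \emph{true} reward $G(\omega;r^E)$ and the other uses accumulated \emph{discretized} reward $G(\omega;r^E_\theta)$. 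From Eq.~\eqref{eq: discretized r}, $|r_{\theta,h}(s,a)-r_h(s,a)|\le\theta/2$ pointwise (the grid $\cY^\theta_2$ has spacing $\theta$), so along any prefix of length $h-1$ the two accumulators differ by at most $(h-1)\theta/2\le (H-1)\theta/2$.

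First I would make precise the sense in which $\pi_{r_\theta^E}$ and $\pi_{r^E}$ are ``the same'' policy up to reparametrizing their conditioning variable. Unwinding Eq.~\eqref{eq: def policy imitate same occ meas}, $\pi_{r_\theta^E}(a\mid s,\omega)$ depends on $\omega$ only through the pair $(s,h,G(\omega;r^E_\theta))$, and its value equals an average of $\pi^E$-conditional action probabilities over all histories reaching $s$ at stage $h$ with that discretized cumulative reward. The cleanest way to proceed is to construct an explicit coupling on $\Omega_{H+1}\times\Omega_{H+1}$: a joint law whose first marginal is $\P^{\pi_{r^E}}$ and second marginal is $\P^{\pi_{r_\theta^E}}$, built stage by stage by using shared transition noise and a shared action draw whenever the two conditional action distributions agree. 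However, the two conditional distributions need \emph{not} agree, because the discretized accumulator can lump together histories that the true accumulator separates (and vice versa). So a per-trajectory pointwise equality of returns will fail, and I instead need to track how much the \emph{return distributions} can drift.

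The key step is therefore a telescoping/inductive bound on $\cW(\eta^{\pi_{r_\theta^E}}_{r^E_\theta},\eta^{\pi^E}_{r^E_\theta})$, the Wasserstein distance between the two \emph{discretized} return distributions — because on the discretized MDP, $\pi_{r_\theta^E}$ plays exactly the role that $\pi_{r^E}$ plays on the true MDP, so by the same argument as Lemma~\ref{lemma: same return distribution} (applied to reward $r^E_\theta$) one gets $\eta^{\pi_{r_\theta^E}}_{r^E_\theta}=\eta^{\pi^E}_{r^E_\theta}$ exactly, i.e. that distance is $0$. Then I add and subtract: $\cW(\eta^{\pi_{r_\theta^E}}_{r^E},\eta^{\pi^E}_{r^E}) \le \cW(\eta^{\pi_{r_\theta^E}}_{r^E},\eta^{\pi_{r_\theta^E}}_{r^E_\theta}) + \cW(\eta^{\pi_{r_\theta^E}}_{r^E_\theta},\eta^{\pi^E}_{r^E_\theta}) + \cW(\eta^{\pi^E}_{r^E_\theta},\eta^{\pi^E}_{r^E})$, where the middle term vanishes. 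For each of the two outer terms, fix the \emph{policy} and compare the return under $r^E$ versus $r^E_\theta$: coupling each trajectory with itself, the two returns differ by $|\sum_{h=1}^H (r^E_h(s_h,a_h)-r^E_{\theta,h}(s_h,a_h))|\le H\theta/2$ almost surely, so each Wasserstein term is at most $H\theta/2$, giving the total bound $H\theta$.

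The main obstacle is establishing the vanishing of the middle term rigorously, i.e. that $\pi_{r_\theta^E}$ reproduces the expert's return distribution \emph{on the discretized MDP} — more carefully, that $\eta^{\pi_{r_\theta^E}}_{r^E_\theta}=\eta^{\pi^E}_{r^E_\theta}$. This is not literally Lemma~\ref{lemma: same return distribution}: that lemma says the policy $\pi_r$ defined via $\P^{\pi^E}$-conditionals on the \emph{true} accumulated reward $G(\omega;r)$ matches $\eta^{\pi^E}_r$. Here $\pi_{r_\theta^E}$ conditions on $G(\omega;r_\theta^E)$, which \emph{is} the true accumulated reward of the MDP $\cM_{r^E_\theta}$ — so applying Lemma~\ref{lemma: same return distribution} with the MDP$\setminus$R underlying $\cM_{r^E}$ equipped with reward $r^E_\theta$ in place of $r^E$, and with the same expert $\pi^E$, yields exactly $\eta^{\pi_{r_\theta^E}}_{r^E_\theta}=\eta^{\pi^E}_{r^E_\theta}$, since $r_{(r^E_\theta)}=\pi_{r_\theta^E}$ by definition. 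So the subtlety is just bookkeeping about which reward plays the role of ``$r^E$'' in the earlier lemma; once that is spelled out, the rest is the two-term coupling estimate sketched above, plus the opening reduction $\cW(\eta^{\pi_{r_\theta^E}}_{r^E},\eta^{\pi^E}_{r^E})=\cW(\eta^{\pi_{r_\theta^E}}_{r^E},\eta^{\pi_{r^E}}_{r^E})$ via Lemma~\ref{lemma: same return distribution}, which is not even needed if one argues directly with $\pi^E$ as above.
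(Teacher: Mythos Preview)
Your proposal is correct and arrives at essentially the same argument as the paper: the triangle-inequality decomposition
\[
\cW\bigl(\eta^{\pi_{r_\theta^E}}_{r^E},\eta^{\pi^E}_{r^E}\bigr)
\le
\cW\bigl(\eta^{\pi_{r_\theta^E}}_{r^E},\eta^{\pi_{r_\theta^E}}_{r^E_\theta}\bigr)
+\cW\bigl(\eta^{\pi_{r_\theta^E}}_{r^E_\theta},\eta^{\pi^E}_{r^E_\theta}\bigr)
+\cW\bigl(\eta^{\pi^E}_{r^E_\theta},\eta^{\pi^E}_{r^E}\bigr),
\]
with the middle term vanishing by Lemma~\ref{lemma: same return distribution} applied to the reward $r^E_\theta$, and each outer term bounded by $H\theta/2$. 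The only cosmetic difference is that the paper packages your ``couple each trajectory with itself'' estimate as a standalone lemma (Lemma~\ref{lemma: different r same p}, proved via the Kantorovich dual rather than the primal coupling), and that your opening detour through $\pi_{r^E}$ is indeed unnecessary, as you note yourself.
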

In short, Lemma~\ref{lemma: apx policies} shows that \emph{efficient and
accurate} solutions to the RDM problem can be sought within $\Pi(r_\theta^E)$. In
particular, $\Pi(r_\theta^E)$ contains $\pi_{r_\theta^E}$, whose error can be
reduced by decreasing $\theta$, at the cost of increased memory requirements for
storing the policy, which scale as $\cO(1/\theta)$.
In the next two sections, we show how to build efficient RDM algorithms based on
$\Pi(r_\theta^E)$ and $\pi_{r_\theta^E}$.

\subsection{No-Interaction Setting}\label{sec: rsbc}

\begin{figure}[t!]
\centering
\begin{minipage}[t]{0.98\linewidth}
\input{rs_bc.tex}
\end{minipage}
\end{figure}

In this section, we present \rsbc (\rsbclong, Algorithm~\ref{alg: rsbc}),\footnote{The
``behavior cloning'' part in \rsbc comes from the intuition that \rsbc can be
seen as performing \bc after augmenting the state space with the (discretized)
cumulative reward.} a provably efficient algorithm for RDM in the no-interaction
(offline) setting, where we neither know nor have access to the transition model
of the environment $\cM$, and are instead given only a dataset $\cD^E$ of $N$
expert trajectories together with the expert's reward $r^E$.
The idea of \rsbc is simple: directly use $\cD^E$ to estimate policy
$\pi_{r_\theta^E}$, whose return distribution is guaranteed by
Lemma~\ref{lemma: apx policies} to be close to that of the expert.
\rsbc estimates $\pi_{r_\theta^E}(a|s,\omega)$ as the fraction of the times in
$\cD^E$ that the expert took action $a$ in state $s$ after collecting
$G(\omega;r_\theta^E)$ discretized cumulative reward (see Line~\ref{line: bc
retrieve policy}).
This ``empirical'' estimator follows closely the definition of $\pi_{r_\theta^E}$
in Eq.~\eqref{eq: def policy imitate same occ meas}, with probability terms
replaced by counts $M$ (computed at Line~\ref{line: bc init N}).
The next result shows that \rsbc is sample-efficient by providing a worst-case
upper bound on its sample complexity (proof in Appendix~\ref{apx: proof thr
rsbc}):
\begin{restatable}{thr}{thrbcsamplecompknownr}
\label{thr: rsbc}
Let $\epsilon\in(0,H]$ and $\delta\in(0,1)$.
Let $\cM_{r^E}$ be any MDP and let $\pi^E\in\Pi^{\text{NM}}$ be any policy.
Then, choosing $\theta=\epsilon/(4H)$, with probability at least $1-\delta$, the
policy $\widehat{\pi}$ output by Algorithm~\ref{alg: rsbc} satisfies
$\cW(\eta^{\pi^E}_{r^E}, \eta^{\widehat{\pi}}_{r^E})\le\epsilon$,
with a number of samples:
\begin{align}\label{eq: sample complexity rsbc}
    N\le \widetilde{\cO}\biggr{\frac{SH^6\ln\frac{1}{\delta}}
{\epsilon^3}\Bigr{A+ \ln\frac{1}{\delta}}}.
\end{align}
\end{restatable}
In words, Theorem~\ref{thr: rsbc} shows that \rsbc requires a polynomial (in the
quantities of interest $S,A,H,1/\epsilon,\ln(1/\delta)$) number of samples to
output a good imitation policy for RDM with high
probability.\footnote{$\widetilde{\cO}$ notation omits logarithmic terms in
$S,A,H,1/\epsilon,\ln(1/\delta)$.}
Compared to the best existing upper bound for standard IL,
$\widetilde{\cO}(SAH^3/\epsilon^2\ln(1/\delta))$ (Corollary~3.1 of
\citet{foster2024bcallyouneed}), we observe a gap of
$\cO(H^3/\epsilon\ln(1/\delta))$. This is reasonable, as RDM appears more
complex than occupancy measure matching (e.g., it requires non-Markovian
policies).
We conjecture that the $\epsilon$ gap is unimprovable, while the $\cO(H^6)$
dependence may be large but is comparable to the $\cO(H^5)$ rate in the related
setting of IL from observation alone (Theorem~3.3 of
\citet{Sun2019ProvablyEI}). See Appendix~\ref{apx: more discussion rsbc} for
further discussion.
Finally, \rsbc is also computationally efficient, since both
Lines~\ref{line: bc init N}--\ref{line: bc retrieve policy} require only
$\cO(SAH|\cY^\theta|)$ iterations and memory.

\subsection{Known-Transition Setting}\label{sec: rskt}

\begin{figure}[t!]
  \centering
  \begin{minipage}[t]{0.98\linewidth}
      \input{rs_kt.tex}
  \end{minipage}
\end{figure}

In this section, we present \rskt (\rsktlong, Algorithm~\ref{alg: rskt}), a
provably efficient algorithm for RDM in the known-transition setting, where we
have access to the transition model $p$ of the environment, in addition to the
expert's dataset $\cD^E$ and reward $r^E$.
By leveraging knowledge of $p$, \rskt achieves a drastic reduction in sample
complexity compared to \rsbc.
The idea behind \rskt is straightforward.
First, use the expert dataset $\cD^E$ to compute an estimate $\widehat{\eta}$ of
the expert's return distribution $\eta^{\pi^E}_{r^E}$ (Line~\ref{line: kt
estimate expert ret distrib}).
Then, exploit knowledge of $r^E$ and $p$ to identify the policy in
$\Pi(r_\theta^E)$ whose return distribution is closest to $\widehat{\eta}$
(Line~\ref{line: kt compute policy}).
We now show that \rskt is sample efficient (see Appendix~\ref{apx: proof thr
rskt} for proof):
\begin{restatable}{thr}{rsktupperbound}\label{thr: rskt}
Let $\epsilon\in(0,H]$ and $\delta\in(0,1)$.
Let $\cM_{r^E}$ be any MDP and $\pi^E\in\Pi^{\text{NM}}$ any policy.
Assume that the optimization problem in
Line~\ref{line: kt compute policy} is solved exactly.
Then, choosing $\theta=\epsilon/(7H)$, with probability $1-\delta$, the policy
$\widehat{\pi}$ output by Algorithm~\ref{alg: rskt} satisfies
$\cW(\eta^{\pi^E}_{r^E}, \eta^{\widehat{\pi}}_{r^E})\le\epsilon$,
with:
\begin{align}\label{eq: sample complexity rskt}
    N\le \cO\biggr{\frac{H^2}{\epsilon^2}\ln\frac{1}{\delta}}.
\end{align}
\end{restatable}
Interestingly, Theorem~\ref{thr: rskt} shows that \rskt has sample complexity
\emph{independent} of $S$ and $A$, making it substantially more sample efficient
than any algorithm for standard IL in large MDPs, where $\Omega(S)$ samples are
required even with knowledge of $p$ (Theorem~5.1 of
\citet{rajaraman2020fundamentalimitationlearning}).
We believe the $\cO(1/\epsilon^2)$ rate is tight, as it matches the lower bound
for estimating a distribution in Wasserstein distance (Theorem~3.1 of
\citet{bobkov2019onedimensional}).
Compared to \rsbc, the reduction in sample complexity is drastic,
$\cO(SAH^4/\epsilon)$.
Extending \rskt to settings where $p$ is unknown but can be estimated from
online interaction with the environment is an interesting direction for future
work (see \citet{xu2023AILunknownp} for the standard IL case).
If Line~\ref{line: kt compute policy} is solved with error
$\epsilon_{\text{apx}}$, then Theorem~\ref{thr: rskt} guarantees
$\cW(\eta^{\pi^E}_{r^E}, \eta^{\widehat{\pi}}_{r^E}) \le \epsilon +
\epsilon_{\text{apx}}$.
Finally, we refer the reader to Appendices~\ref{apx: rE in finite set} and
\ref{apx: rE linear} for extensions of \rsbc and \rskt to the cases where $r^E$
is unknown but either belongs to a given finite set or is linear in a given
feature map, and to Appendix \ref{apx: gen bc arb problems} for greater
generalization.

We now turn to the computational complexity of \rskt. Crucially, the
optimization problem in Line~\ref{line: kt compute policy} can be formulated as
a linear program (LP) with a polynomial number
$\cO\!\left(SAH|\cY^\theta|\right)$ of variables and constraints, and therefore
can be solved efficiently.
To see this, let $\overline{\cM}\coloneqq(\overline{\cS}, \cA,H,\overline{s}_0,
\overline{p})$ be the \MDPr with augmented state space
$\overline{\cS}\coloneqq\cS\times\cY^\theta$, initial state
$\overline{s}_0=(s_0,0)\in\overline{\cS}$, and transition model $\overline{p}$
defined as $\overline{p}_h(s,g|s',g',a)\coloneqq
p_h(s'|s,a)\indic{g=g'+r^E_{\theta,h}(s',a')}$, i.e., the probability of
reaching $(s,g)\in\overline{\cS}$ by playing $a$ in $(s',g')$ at stage $h$
coincides with $p_h(s'|s,a)$ if the reward received in $s',a'$ is $g-g'$, and 0
otherwise.
Since $\Pi(r_\theta^E)$ coincides with the set of Markovian policies in
$\overline{\cM}$ \citep{bauerle2014more,lazzati2025utility}, we can rewrite
Line~\ref{line: kt compute policy} as a variant of the occupancy measure
matching problem in $\overline{\cM}$:
\begin{align}\label{eq: opt problem LP rskt}
  \min_{d\in\cK,\eta\in\Delta^{\cY^\theta}}\cW\!\left(\eta,\widehat{\eta}\right)
  \quad\text{s.t. }\;\eta(g)=\sum_{(s,a)\in\SA}
  d_H(s,g-r_{\theta,H}^E(s,a),a)
  \quad \forall g\in \cY^\theta,
\end{align}
where the constraint enforces that $\eta$ is the return distribution induced by
the occupancy measure $d$, and $\cK$ denotes the set of feasible occupancy
measures in $\overline{\cM}$ \citep{puterman1994markov}:
\begin{align*}
  \scalebox{0.95}{$  \displaystyle
  \cK\coloneqq\Bigc{d\in\Delta_{\dsb{H}}^{\overline{\cS}\times\cA}\,\Big|\,
  \sum_a d_1(\overline{s}_0,a)=1
  \wedge\forall \overline{s}\in\overline{\cS},h\ge 2:\;
  \sum_a d_h(\overline{s},a)=
\sum_{\overline{s}',a'}d_{h-1}(\overline{s}',a')\overline{p}_{h-1}(\overline{s}|\overline{s}',a')}
  $}.
\end{align*}
In words, Eq.~\eqref{eq: opt problem LP rskt} searches for an occupancy measure
$d\in\cK$ that induces the return distribution $\eta$ closest to
$\widehat{\eta}$.
From such a solution, a policy $\widehat{\pi}\in\Pi(r_\theta^E)$ with occupancy
$d^{\widehat{\pi}}=d$ (and thus return distribution
$\eta^{\widehat{\pi}}_{r^E_\theta}=\eta$) can be recovered via
\begin{align*}
\widehat{\pi}(a|s,\omega)=\frac{d_h(s,G(\omega;r_\theta^E),a)}{
\sum_{a'}d_h(s,G(\omega;r_\theta^E),a')} \quad
\forall h\in\dsb{H},\;s\in\cS,\;a\in\cA,\;\omega\in\Omega_h,
\end{align*}
when the denominator is nonzero, and $\widehat{\pi}(a|s,\omega)=1/A$
otherwise \citep{syed2008allinear}.
Observe that all the constraints in Eq.~\eqref{eq: opt problem LP rskt} are
linear in $d$ and $\eta$, and that the objective
$\cW\!\left(\eta,\widehat{\eta}\right)$ can also be written linearly (see
\citet{peyre2019computationalOT} and Appendix~\ref{apx: details LP}).

\section{Statistical Insights on the Unknown-Reward Setting}\label{sec: r
unknown}

In this section, we assume that the expert's reward $r^E$ is unknown, and
provide some \emph{statistical} insights on RDM in the ``robust'' form of Eq.
\eqref{eq: robust return distribution matching}.
Specifically, we show that, perhaps surprisingly, this complex problem requires
only a polynomial number of expert demonstrations to be accurately solved when
the transition model is known, even in the \emph{worst case}.
To establish this, we first prove that a polynomial number of expert
demonstrations suffices to accurately estimate the expert's return distribution
\emph{under any reward} (proof in Appendix \ref{apx: sec r unknown}):

\begin{restatable}{thr}{estanypolicyallrewards}
  \label{thr: est any policy all rewards}
Let $\epsilon\in(0,H]$ and $\delta\in(0,1)$.
Let $\cM_{r^E}$ be any MDP and $\pi^E\in\Pi^{\text{NM}}$ any policy.
Then, choosing $\theta=\epsilon/(2H)$, a number of samples
\begin{align}\label{eq: sample complexity est any r}
    N\le \widetilde{\cO}\biggr{\frac{SAH^3}{\epsilon^2}\ln\frac{1}{\delta}},
\end{align}
suffices to guarantee that, with probability at least $1-\delta$, for the
estimator $\widehat{\eta}_r(g)\coloneqq \frac{1}{N} \sum_{\omega\in\cD^E}
\indic{G(\omega;{r_\theta})=g}$ $\forall g,r$, we have:
\begin{align*}
    \max\limits_{r:\SAH\to[0,1]}\cW\Bigr{
      \eta^{\pi^E}_r,\widehat{\eta}_r
    }\le \epsilon.
\end{align*}
\end{restatable}

In brief, an expert dataset $\cD^E$ of size in Eq.~\eqref{eq: sample complexity
est any r} suffices to accurately estimate the expert's return distribution
$\eta^{\pi^E}_r$ under any reward $r$ via the estimator $\widehat{\eta}_r$.
As a consequence, any policy $\widehat{\pi}$ that induces return distributions
close to $\widehat{\eta}_r$ for all possible rewards $r$ accurately solves the
robust RDM problem:

\begin{restatable}{thr}{upperboundexpcompl}\label{thr: upper bound exp compl}
  Under the conditions of Theorem \ref{thr: est any policy all rewards}, assume
  access to a computational oracle that takes as input the dataset $\cD^E$ and
  the transition model $p$, and outputs a solution to
  \begin{align}\label{eq: oracle}
    \widehat{\pi}\in\argmin_{\pi\in\Pi^{\text{NM}}}\max_{r:\SAH\to[0,1]}
    \cW\Bigr{ \eta^{\pi}_r,\widehat{\eta}_r }.
  \end{align}
  Then, with probability at least $1-\delta$, using the number of samples in
  Eq.~\eqref{eq: sample complexity est any r}, it holds that
  \begin{align*}
    \max_{r:\SAH\to[0,1]}\cW\Bigr{
      \eta^{\pi^E}_r,\eta^{\widehat{\pi}}_r
    }\le 2\epsilon.
  \end{align*}
\end{restatable}
In words, Theorem \ref{thr: upper bound exp compl} establishes that the robust
RDM problem is sample efficient for any algorithm that accurately solves
Eq.~\eqref{eq: oracle}.
Intuitively, such an algorithm can be viewed as an extension of \rskt to the
unknown-reward setting.
Note that the minimization in Eq.~\eqref{eq: oracle} is over $\Pi^{\text{NM}}$,
since a satisfactory policy may no longer exist in the policy class
$\Pi(r^E_\theta)$, which is also unknown due to the unknown reward.
We leave to future work the interesting question of whether an (approximate)
solution to Eq.~\eqref{eq: oracle} can be computed efficiently, and note that
restricting the optimization to a subset $\Pi\subset\Pi^{\text{NM}}$ can
reduce computation time at the cost of some misspecification error.

\section{Numerical Simulations}\label{sec: num simulations}

\begin{table}[!t]
  \centering
  \resizebox{0.9\columnwidth}{!}{
  \begin{tabular}{||c | c c c c c||} 
   \hline
    & $N=20$ & $N=80$ & $N=300$ & $N=1000$ & $N=10000$\\
   \hline
   \rsbc & \small \bf 0.081±0.039 & \small \bf0.038±0.016 & \small
   \bf0.022±0.013 & \small \bf0.012±0.005 & \small \bf 0.005±0.002\\
   \hline
    \rskt & \small \bf0.095±0.036 & \small \bf0.049±0.017 & \small \bf0.03±0.013
    & \small \bf0.019±0.007 & \small \bf 0.011±0.006 \\
   \hline
    \bc & \small \bf0.099±0.056 & \small 0.076±0.054 & \small 0.072±0.056 & \small 0.069±0.058 & \small 0.068±0.058 \\
   \hline
    \mimic & \small 0.127±0.062 & \small 0.086±0.055 & \small 0.074±0.056 & \small 0.07±0.057 & \small 0.068±0.058 \\
   \hline
  \end{tabular}%
  }
\vspace{0.7em}\\
  \centering
  \resizebox{0.9\columnwidth}{!}{
  \begin{tabular}{||c | c c c c c||} 
   \hline
    & $N=20$ & $N=80$ & $N=300$ & $N=1000$ & $N=10000$\\
   \hline
   \rsbc & \small \bf0.087±0.04 & \small \bf0.051±0.022 & \small \bf0.035±0.015 & \small \bf0.027±0.016 & \small \bf0.022±0.016\\
   \hline
    \rskt & \small 0.144±0.053 & \small 0.119±0.039 & \small 0.109±0.04 & \small 0.108±0.038 & \small 0.106±0.039 \\
   \hline
    \bc & \small 0.103±0.057 & \small 0.08±0.053 & \small 0.072±0.056 & \small 0.069±0.058 & \small 0.068±0.058 \\
   \hline
    \mimic & \small 0.132±0.065 & \small 0.09±0.055 & \small 0.076±0.055 & \small 0.071±0.057 & \small 0.068±0.058 \\
   \hline
  \end{tabular}%
  }
\vspace{0.7em}\\
  \centering
  \resizebox{0.9\columnwidth}{!}{
  \begin{tabular}{||c | c c c c c||} 
   \hline
    & $N=20$ & $N=80$ & $N=300$ & $N=1000$ & $N=10000$\\
   \hline
   \rsbc & \small 0.102±0.031 & \small 0.052±0.015 & \small \bf0.026±0.008 & \small \bf0.015±0.005 & \small \bf0.004±0.001\\
   \hline
    \rskt & \small 0.118±0.036 & \small 0.059±0.017 & \small 0.031±0.009 &
    \small 0.021±0.007 & \small \bf 0.01±0.004 \\
   \hline
    \bc & \small \bf0.085±0.035 & \small \bf0.041±0.016 & \small \bf0.021±0.008 & \small \bf0.012±0.005 & \small \bf0.003±0.002 \\
   \hline
    \mimic & \small 0.132±0.052 & \small 0.06±0.022 & \small 0.03±0.01 & \small \bf0.016±0.006 & \small \bf0.005±0.002 \\
   \hline
  \end{tabular}%
  }
\vspace{0.7em}\\
  \centering
  \resizebox{0.9\columnwidth}{!}{
  \begin{tabular}{||c | c c c c c||} 
   \hline
   & $N=20$ & $N=80$ & $N=300$ & $N=1000$ & $N=10000$\\
   \hline
   \rsbc & \small 0.169±0.079 & \small 0.168±0.079 & \small 0.165±0.081 & \small 0.165±0.081 & \small 0.166±0.081\\
   \hline
    \bc & \small 0.168±0.078 & \small 0.166±0.078 & \small 0.169±0.085 & \small 0.177±0.091 & \small 0.174±0.093 \\
   \hline
    $\qquad\,\widehat{\eta}\,\qquad$ & \small 0.169±0.049 & \small \bf0.08±0.018 & \small
    \bf0.043±0.01 & \small \bf0.024±0.006 & \small \bf 0.008±0.002 \\
   \hline
  \end{tabular}%
  } \caption{Results of the simulations described in Section~\ref{sec: num
  simulations}. The best results in each column are highlighted in bold. (Top)
  Simulation with $S, A, H = (2,2,5)$ for Q1. (Upper middle) Simulation with
  $\theta = 0.5$ for Q2. (Lower middle) Simulation with a Markovian expert for
  Q3. (Bottom) Simulation with $S, A, H = (300,5,5)$ for Q4.}
\label{table: Q}
\end{table}

In this section, we study \rsbc and \rskt from a practical perspective by
conducting simulations aimed at answering the following four questions:
\begin{enumerate}
  \item What is the performance improvement of \rsbc and \rskt on the RDM
  problem compared to standard IL algorithms?
  \item How are the results affected by the choice of $\theta$?
  \item What happens if the expert's policy is Markovian?
  \item Does \rskt truly reduce sample complexity compared to \rsbc, as
  predicted by theory?
\end{enumerate}
We select \bc \citep{foster2024bcallyouneed} and \mimic
\citep{rajaraman2020fundamentalimitationlearning,rajaraman2021provablybreakingquadraticerror}
as baseline IL algorithms for, respectively, the no-interaction and
known-transition settings, and address all questions by conducting various
simulations following the next three-step process.
$(i)$ First, we randomly generate 50 MDPs and expert policies with fixed size
$S,A,H$ (details in Appendix \ref{apx: details on sampling}).
$(ii)$ Second, for each number of expert trajectories $N\in\{20, 80, 300, 1000,
10000\}$, we collect three different expert datasets of $N$ trajectories for
each MDP, provide them as input to the four algorithms \rsbc, \rskt, \bc, and
\mimic, and record the average (over the three seeds) Wasserstein distance
between the expert's return distribution and the return distribution induced by
each algorithm's output policy.
$(iii)$ Finally, we average these distances across all 50 MDPs, obtaining, for
each algorithm and value of $N$, a number representing the expected error of
that algorithm when given access to $N$ expert trajectories in the considered
setting.

Below, we describe the specific simulations conducted and the results obtained
for each question.

\paragraph{Question 1 (Q1).}

To address Q1, we conducted three simulations with different problem sizes
$S,A,H\in\{(2,2,5),(50,5,5),(2,2,20)\}$, all with non-Markovian policies and
$\theta=0.05$.
The results for $(2,2,5)$ are reported in Table \ref{table: Q} (top), while the
other two are shown in Tables \ref{table: exp 3}--\ref{table: exp 6} in Appendix
\ref{apx: details Q1}.
Crucially, Table \ref{table: Q} (top) reveals that \bc and \mimic, by relying on
\emph{Markovian} policies, are biased and cannot match the expert's return
distribution satisfactorily, even with a large dataset of $N=10000$ trajectories. 
In contrast, by leveraging our efficient non-Markovian policy class, \rsbc and
\rskt continue to reduce the error as the number of trajectories $N$ increases,
up to a limit determined by our choice of $\theta$.
A similar pattern is observed for larger $S,A$ in Table \ref{table: exp 3},
and especially for larger horizons $H$ in Table \ref{table: exp 6}, where the
limited expressivity of Markovian policies becomes even more pronounced.
We also note that larger $S,A,H$ slows down \rskt significantly due to solving
an LP with poly($S,A,H$) variables and constraints, and increases the
approximation error due to $\theta$, as discussed in Appendix
\ref{apx: details Q1} and in Q2.

\paragraph{Question 2 (Q2).}

To address Q2, we first conducted a simulation with an increased $\theta=0.5$,
keeping $S,A,H=(2,2,5)$ and a non-Markovian expert. As shown in Table
\ref{table: Q} (upper middle), a larger $\theta$ consistently increases the
approximation error, causing \rsbc and \rskt to perform worse than with
$\theta=0.05$ (see Table \ref{table: Q} (top)). Nevertheless, while \rsbc
remains fairly robust and continues to outperform \bc and \mimic—intuitively
because it corresponds to \bc with a more expressive policy class—\rskt tends
to reach the worst-case approximation error predicted by Lemma
\ref{lemma: apx policies}, $H\theta/2 \approx 0.2$, as also discussed in
Appendix \ref{apx: details Q1}.
Additionally, we conducted three further simulations with values of $\theta$
small enough to eliminate approximation error, observing a consistent increase
in performance, particularly for \rskt in settings with larger $S,A,H$ (see
Appendix \ref{apx: details Q2}).

\paragraph{Question 3 (Q3).}

We ran a simulation with a \emph{Markovian} expert and $S,A,H=(2,2,5)$, with
results reported in Table \ref{table: Q} (lower middle). Interestingly, \bc
outperforms all other algorithms in terms of both sample and computational
efficiency. Intuitively, this occurs because it operates on a much smaller
hypothesis space (i.e., $\Pi^{\text{M}}$) than \rsbc and \rskt, and it is
\emph{unbiased}, since the expert is Markovian. The policy output by \bc aims
to match the expert's trajectory distribution \citep{foster2024bcallyouneed},
and thus also its return distribution for any reward, which is not the case for
\mimic, explaining its comparatively worse performance.

\paragraph{Question 4 (Q4).} 

According to our theoretical results (Theorems \ref{thr: rsbc} and
\ref{thr: rskt}), \rskt should outperform \rsbc in terms of sample complexity
for large $S,A$, as its performance does not depend on them. To verify this, we
ran a simulation with large $S,A,H=(300,5,5)$, a non-Markovian expert, and
$\theta=0.05$.
To speed up computation (particularly the LP in \rskt), we avoided running \rskt
directly and instead compared the expert's return distribution with the estimate
$\widehat{\eta}$ computed at Line \ref{line: kt estimate expert ret distrib}. By
the triangle inequality, this guarantees that the error between the output of
\rskt and the expert is at most twice the error of $\widehat{\eta}$.
Results are reported in Table \ref{table: Q} (bottom), showing a dramatic
improvement in sample complexity. In particular, both \rsbc and \bc struggle
even with $N= 10000$, while \rskt achieves significant performance with as few as
$N=300$ or $1000$ trajectories.

In summary, the key takeaways of this section are: 
\begin{itemize}[leftmargin=*, topsep=-2pt]
  \item \rsbc and \rskt generally outperform \bc and \mimic due to the use of
  more expressive non-Markovian policies and reward information, particularly
  for large $H$.
  \item \rsbc is faster than \rskt because it does not require solving a linear
  program and is more robust to large $\theta$, while \rskt is much more sample
  efficient for large MDPs.
  \item \bc performs well when the expert is Markovian, even without access to
  reward information.
\end{itemize}

\section{Conclusion}\label{sec: conclusion}

In this paper, we introduced and analyzed RDM, a general formulation of
(risk-sensitive) IL as the problem of matching the expert's return distribution.
Remarkably, we showed that both the known- and unknown-reward settings are
statistically tractable. For the known-reward case, we proposed two algorithms,
\rsbc and \rskt, which not only come with strong theoretical guarantees but also
empirically outperform standard IL methods at accurately matching the expert's
return distribution.

\textbf{Limitations and future directions.}~~%
This work has several limitations. We focus only on the tabular setting, and our
theoretical analysis lacks lower bounds, so it remains unclear whether the upper
bounds in Theorems \ref{thr: rsbc}, \ref{thr: rskt}, and \ref{thr: upper bound
exp compl} are tight. Furthermore, the unknown-reward setting lacks a practical
algorithm, and our empirical study does not include real-world data.
Future work could extend our results to state-only feedback settings
\citep{Sun2019ProvablyEI}, develop practical and scalable versions of \rsbc and
\rskt for large or continuous environments, and design algorithms for the
unknown-reward setting.

\bibliography{refs.bib}

\begin{thebibliography}{65}
\providecommand{\natexlab}[1]{#1}
\providecommand{\url}[1]{\texttt{#1}}
\expandafter\ifx\csname urlstyle\endcsname\relax
  \providecommand{\doi}[1]{doi: #1}\else
  \providecommand{\doi}{doi: \begingroup \urlstyle{rm}\Url}\fi

\bibitem[A. \& Fu(2022)A. and Fu]{prashant2022risksensitivereinforcementlearningpolicy}
Prashanth~L. A. and Michael Fu.
\newblock Risk-sensitive reinforcement learning via policy gradient search, 2022.
\newblock URL \url{https://arxiv.org/abs/1810.09126}.

\bibitem[Abbeel \& Ng(2004)Abbeel and Ng]{abbeel2004apprenticeship}
Pieter Abbeel and Andrew~Y. Ng.
\newblock Apprenticeship learning via inverse reinforcement learning.
\newblock In \emph{International Conference on Machine Learning 21 (ICML)}, 2004.

\bibitem[Argall et~al.(2009)Argall, Chernova, Veloso, and Browning]{argall2009surveyrobotlearning}
Brenna~D. Argall, Sonia Chernova, Manuela Veloso, and Brett Browning.
\newblock A survey of robot learning from demonstration.
\newblock \emph{Robotics and Autonomous Systems}, 57:\penalty0 469--483, 2009.

\bibitem[Bastani et~al.(2022)Bastani, Ma, Shen, and Xu]{bastani2022discretizationrisk}
Osbert Bastani, Jason~Yecheng Ma, Estelle Shen, and Wanqiao Xu.
\newblock Regret bounds for risk-sensitive reinforcement learning.
\newblock In \emph{Advances in Neural Information Processing Systems 35 (NeurIPS)}, pp.\  36259--36269, 2022.

\bibitem[Bellemare et~al.(2023)Bellemare, Dabney, and Rowland]{bellemare2023distributional}
Marc~G. Bellemare, Will Dabney, and Mark Rowland.
\newblock \emph{Distributional Reinforcement Learning}.
\newblock MIT Press, 2023.

\bibitem[Bernhard et~al.(2019)Bernhard, Pollok, and Knoll]{Bernhard2019AddressingIU}
Julian Bernhard, Stefan Pollok, and Alois Knoll.
\newblock Addressing inherent uncertainty: Risk-sensitive behavior generation for automated driving using distributional reinforcement learning.
\newblock \emph{IEEE Intelligent Vehicles Symposium (IV)}, pp.\  2148--2155, 2019.

\bibitem[Block et~al.(2023)Block, Pfrommer, and Simchowitz]{block2023ILnonmark}
Adam Block, Daniel Pfrommer, and Max Simchowitz.
\newblock On the imitation of non-markovian demonstrations: From low-level stability to high-level planning.
\newblock In \emph{ICML Workshop on New Frontiers in Learning, Control, and Dynamical Systems}, 2023.
\newblock URL \url{https://openreview.net/forum?id=ZRQMCuIAcZ}.

\bibitem[Bobkov \& Ledoux(2019)Bobkov and Ledoux]{bobkov2019onedimensional}
Sergey~G. Bobkov and Michel Ledoux.
\newblock One-dimensional empirical measures, order statistics, and kantorovich transport distances.
\newblock \emph{Memoirs of the American Mathematical Society}, 2019.

\bibitem[Brantley et~al.(2020)Brantley, Sun, and Henaff]{Brantley2020Disagreement-Regularized}
Kiante Brantley, Wen Sun, and Mikael Henaff.
\newblock Disagreement-regularized imitation learning.
\newblock In \emph{International Conference on Learning Representations 8 (ICLR)}, 2020.

\bibitem[Bäuerle \& Ott(2011)Bäuerle and Ott]{bauerle2011cvar}
Nicole Bäuerle and Jonathan Ott.
\newblock {Markov Decision Processes with Average-Value-at-Risk criteria}.
\newblock \emph{Mathematical Methods of Operations Research}, 74:\penalty0 361--379, 2011.

\bibitem[Bäuerle \& Rieder(2014)Bäuerle and Rieder]{bauerle2014more}
Nicole Bäuerle and Ulrich Rieder.
\newblock More risk-sensitive markov decision processes.
\newblock \emph{Mathematics of Operations Research}, 39\penalty0 (1):\penalty0 105--120, 2014.

\bibitem[Chow et~al.(2018)Chow, Ghavamzadeh, Janson, and Pavone]{chow2018risk}
Yinlam Chow, Mohammad Ghavamzadeh, Lucas Janson, and Marco Pavone.
\newblock Risk-constrained reinforcement learning with percentile risk criteria.
\newblock \emph{Journal of Machine Learning Research (JMLR)}, 18:\penalty0 1--51, 2018.

\bibitem[Dadashi et~al.(2021)Dadashi, Hussenot, Geist, and Pietquin]{dadashi2021primal}
Robert Dadashi, Leonard Hussenot, Matthieu Geist, and Olivier Pietquin.
\newblock Primal wasserstein imitation learning.
\newblock In \emph{International Conference on Learning Representations 9 (ICLR)}, 2021.

\bibitem[Finn et~al.(2016)Finn, Levine, and Abbeel]{finn2016guidedcostlearning}
Chelsea Finn, Sergey Levine, and Pieter Abbeel.
\newblock Guided cost learning: Deep inverse optimal control via policy optimization.
\newblock In \emph{International Conference on Machine Learning 33 (ICML)}, volume~48, pp.\  49--58, 2016.

\bibitem[Foster et~al.(2024)Foster, Block, and Misra]{foster2024bcallyouneed}
Dylan~J. Foster, Adam Block, and Dipendra Misra.
\newblock Is behavior cloning all you need? understanding horizon in imitation learning.
\newblock In \emph{Advances in Neural Information Processing Systems 37 (NeurIPS)}, pp.\  120602--120666, 2024.

\bibitem[Fu et~al.(2017)Fu, Luo, and Levine]{Fu2017LearningRR}
Justin Fu, Katie Luo, and Sergey Levine.
\newblock Learning robust rewards with adversarial inverse reinforcement learning.
\newblock In \emph{International Conference on Learning Representations 5 (ICLR)}, 2017.

\bibitem[Föllmer \& Schied(2016)Föllmer and Schied]{foellmer2004stochastic}
Hans Föllmer and Alexander Schied.
\newblock \emph{Stochastic Finance: An Introduction in Discrete Time}.
\newblock De Gruyter, 2016.

\bibitem[Garg et~al.(2021)Garg, Chakraborty, Cundy, Song, and Ermon]{garg2021IQlearn}
Divyansh Garg, Shuvam Chakraborty, Chris Cundy, Jiaming Song, and Stefano Ermon.
\newblock Iq-learn: Inverse soft-q learning for imitation.
\newblock In \emph{Advances in Neural Information Processing Systems 34 (NeurIPS)}, pp.\  4028--4039, 2021.

\bibitem[Goluža et~al.(2023)Goluža, Bauman, Kovačević, and Kostanjčar]{goluza2023ilfinance}
Sven Goluža, Tessa Bauman, Tomislav Kovačević, and Zvonko Kostanjčar.
\newblock Imitation learning for financial applications.
\newblock In \emph{MIPRO ICT and Electronics Convention 46 (MIPRO)}, pp.\  1130--1135, 2023.

\bibitem[Hadfield-Menell et~al.(2017)Hadfield-Menell, Milli, Abbeel, Russell, and Dragan]{hadfield2017inverse}
Dylan Hadfield-Menell, Smitha Milli, Pieter Abbeel, Stuart~J Russell, and Anca Dragan.
\newblock Inverse reward design.
\newblock In \emph{Advances in Neural Information Processing Systems 30 (NeurIPS)}, 2017.

\bibitem[Haskell \& Jain(2015)Haskell and Jain]{haskell2015convexrisk}
William~B. Haskell and Rahul Jain.
\newblock A convex analytic approach to risk-aware markov decision processes.
\newblock \emph{SIAM Journal on Control and Optimization}, 53:\penalty0 1569--1598, 2015.

\bibitem[Ho \& Ermon(2016)Ho and Ermon]{ho2016generativeadversarialimitationlearning}
Jonathan Ho and Stefano Ermon.
\newblock Generative adversarial imitation learning.
\newblock In \emph{Advances in Neural Information Processing Systems 29 (NeurIPS)}, 2016.

\bibitem[Howard \& Matheson(1972)Howard and Matheson]{howard1972risk}
Ronald~A. Howard and James~E. Matheson.
\newblock Risk-sensitive markov decision processes.
\newblock \emph{Management Science}, 18\penalty0 (7):\penalty0 356--369, 1972.

\bibitem[Kaufmann et~al.(2021)Kaufmann, Menard, Domingues, Jonsson, Leurent, and Valko]{kaufmann2021adaptive}
Emilie Kaufmann, Pierre Menard, Omar~Darwiche Domingues, Anders Jonsson, Edouard Leurent, and Michal Valko.
\newblock Adaptive reward-free exploration.
\newblock In \emph{International Conference on Algorithmic Learning Theory 32 (ALT 2021)}, pp.\  865--891, 2021.

\bibitem[Kiefer \& Wolfowitz(1959)Kiefer and Wolfowitz]{kiefer1959dkw}
J.~Kiefer and J.~Wolfowitz.
\newblock Asymptotic minimax character of the sample distribution function for vector chance variables.
\newblock \emph{The Annals of Mathematical Statistics}, 30:\penalty0 463--489, 1959.

\bibitem[Kostrikov et~al.(2019)Kostrikov, Agrawal, Dwibedi, Levine, and Tompson]{kostrikov2018discriminatoractorcritic}
Ilya Kostrikov, Kumar~Krishna Agrawal, Debidatta Dwibedi, Sergey Levine, and Jonathan Tompson.
\newblock Discriminator-actor-critic: Addressing sample inefficiency and reward bias in adversarial imitation learning.
\newblock In \emph{International Conference on Learning Representations 7 (ICLR)}, 2019.

\bibitem[Lacotte et~al.(2019)Lacotte, Ghavamzadeh, Chow, and Pavone]{lacotte2019rsgail}
Jonathan Lacotte, Mohammad Ghavamzadeh, Yinlam Chow, and Marco Pavone.
\newblock Risk-sensitive generative adversarial imitation learning.
\newblock In \emph{International Conference on Artificial Intelligence and Statistics 22 (AISTATS)}, volume~89, pp.\  2154--2163, 2019.

\bibitem[Laroche \& Tachet Des~Combes(2023)Laroche and Tachet Des~Combes]{laroche2023occupancy}
Romain Laroche and Remi Tachet Des~Combes.
\newblock On the occupancy measure of non-{M}arkovian policies in continuous {MDP}s.
\newblock In \emph{International Conference on Machine Learning 40 (ICML)}, volume 202, pp.\  18548--18562, 2023.

\bibitem[Lazzati \& Metelli(2025)Lazzati and Metelli]{lazzati2025utility}
Filippo Lazzati and Alberto~Maria Metelli.
\newblock Learning utilities from demonstrations in markov decision processes.
\newblock In \emph{International Conference on Machine Learning 42 (ICML)}, 2025.
\newblock URL \url{https://openreview.net/forum?id=Cx5aNPycdO}.

\bibitem[Lazzati et~al.(2024)Lazzati, Mutti, and Metelli]{lazzati2024offline}
Filippo Lazzati, Mirco Mutti, and Alberto~Maria Metelli.
\newblock Offline inverse rl: New solution concepts and provably efficient algorithms.
\newblock In \emph{International Conference on Machine Learning 41 (ICML)}, 2024.

\bibitem[Le~Mero et~al.(2022)Le~Mero, Yi, Dianati, and Mouzakitis]{lemero2022surveyautonomousdrivingIL}
Luc Le~Mero, Dewei Yi, Mehrdad Dianati, and Alexandros Mouzakitis.
\newblock A survey on imitation learning techniques for end-to-end autonomous vehicles.
\newblock \emph{IEEE Transactions on Intelligent Transportation Systems}, 23:\penalty0 14128--14147, 2022.

\bibitem[Majumdar et~al.(2017)Majumdar, Singh, Mandlekar, and Pavone]{majumdar2017risk}
Anirudha Majumdar, Sumeet Singh, Ajay Mandlekar, and Marco Pavone.
\newblock Risk-sensitive inverse reinforcement learning via coherent risk models.
\newblock In \emph{Robotics: Science and Systems 13 (RSS)}, 2017.

\bibitem[Malik et~al.(2021)Malik, Anwar, Aghasi, and Ahmed]{malik2021icrl}
Shehryar Malik, Usman Anwar, Alireza Aghasi, and Ali Ahmed.
\newblock Inverse constrained reinforcement learning.
\newblock In \emph{International Conference on Machine Learning 38 (ICML)}, volume 139, pp.\  7390--7399, 2021.

\bibitem[Mandlekar et~al.(2020)Mandlekar, Ramos, Boots, Savarese, Fei-Fei, Garg, and Fox]{mandlekar2020iris}
Ajay Mandlekar, Fabio Ramos, Byron Boots, Silvio Savarese, Li~Fei-Fei, Animesh Garg, and Dieter Fox.
\newblock Iris: Implicit reinforcement without interaction at scale for learning control from offline robot manipulation data.
\newblock In \emph{IEEE International Conference on Robotics and Automation 37 (ICRA)}, pp.\  4414--4420, 2020.

\bibitem[Mandlekar et~al.(2022)Mandlekar, Xu, Wong, Nasiriany, Wang, Kulkarni, Fei-Fei, Savarese, Zhu, and Mart\'in-Mart\'in]{mandlekar2022whatmatters}
Ajay Mandlekar, Danfei Xu, Josiah Wong, Soroush Nasiriany, Chen Wang, Rohun Kulkarni, Li~Fei-Fei, Silvio Savarese, Yuke Zhu, and Roberto Mart\'in-Mart\'in.
\newblock What matters in learning from offline human demonstrations for robot manipulation.
\newblock In \emph{Conference on Robot Learning 5 (CoRL)}, volume 164, pp.\  1678--1690, 2022.

\bibitem[Mannor \& Tsitsiklis(2011)Mannor and Tsitsiklis]{mannor2011meanvarianceoptimizationmarkovdecision}
Shie Mannor and John~N. Tsitsiklis.
\newblock Mean-variance optimization in markov decision processes.
\newblock In \emph{International Conference on Machine Learning 28 (ICML)}, pp.\  177--184, 2011.

\bibitem[Massart(1990)]{massart1990dkw}
P.~Massart.
\newblock The tight constant in the dvoretzky-kiefer-wolfowitz inequality.
\newblock \emph{The Annals of Probability}, 18:\penalty0 1269--1283, 1990.

\bibitem[Metelli et~al.(2021)Metelli, Ramponi, Concetti, and Restelli]{metelli2021provably}
Alberto~Maria Metelli, Giorgia Ramponi, Alessandro Concetti, and Marcello Restelli.
\newblock Provably efficient learning of transferable rewards.
\newblock In \emph{International Conference on Machine Learning 38 (ICML)}, volume 139, pp.\  7665--7676, 2021.

\bibitem[Metelli et~al.(2023)Metelli, Lazzati, and Restelli]{metelli2023towards}
Alberto~Maria Metelli, Filippo Lazzati, and Marcello Restelli.
\newblock Towards theoretical understanding of inverse reinforcement learning.
\newblock In \emph{International Conference on Machine Learning 40 (ICML)}, pp.\  24555--24591, 2023.

\bibitem[Muni et~al.(2025)Muni, Derman, Taboga, Bacon, and Delage]{muni2025what}
Aneri Muni, Esther Derman, Vincent Taboga, Pierre-Luc Bacon, and Erick Delage.
\newblock What matters when modeling human behavior using imitation learning?
\newblock In \emph{2nd Workshop on Models of Human Feedback for AI Alignment}, 2025.
\newblock URL \url{https://openreview.net/forum?id=9t8NFc9SLh}.

\bibitem[Ng \& Russell(2000)Ng and Russell]{ng2000algorithms}
Andrew~Y. Ng and Stuart~J. Russell.
\newblock Algorithms for inverse reinforcement learning.
\newblock In \emph{International Conference on Machine Learning 17 (ICML 2000)}, pp.\  663--670, 2000.

\bibitem[Osa et~al.(2018)Osa, Pajarinen, Neumann, Bagnell, Abbeel, and Peters]{osa2018IL}
Takayuki Osa, Joni Pajarinen, Gerhard Neumann, J.~Andrew Bagnell, Pieter Abbeel, and Jan Peters.
\newblock An algorithmic perspective on imitation learning.
\newblock \emph{Foundations and Trends® in Robotics}, 7:\penalty0 1--179, 2018.

\bibitem[Panaretos \& Zemel(2019)Panaretos and Zemel]{Panaretos2019wasserstein}
Victor~M. Panaretos and Yoav Zemel.
\newblock Statistical aspects of wasserstein distances.
\newblock \emph{Annual Review of Statistics and Its Application}, 6\penalty0 (1):\penalty0 405--431, 2019.

\bibitem[Peyré \& Cuturi(2019)Peyré and Cuturi]{peyre2019computationalOT}
Gabriel Peyré and Marco Cuturi.
\newblock Computational optimal transport: With applications to data science.
\newblock \emph{Foundations and Trends® in Machine Learning}, 11:\penalty0 355--607, 2019.

\bibitem[Pomerleau(1988)]{pomerleau1988alvinn}
Dean~A. Pomerleau.
\newblock Alvinn: An autonomous land vehicle in a neural network.
\newblock In \emph{Advances in Neural Information Processing Systems 1 (NeurIPS)}, 1988.

\bibitem[Puterman(1994)]{puterman1994markov}
Martin~Lee Puterman.
\newblock \emph{{M}arkov Decision Processes: Discrete Stochastic Dynamic Programming}.
\newblock John Wiley \& Sons, Inc., 1994.

\bibitem[Qin et~al.(2023)Qin, Gao, Li, Zhu, and Xie]{qin2023nonmarkIL}
Aoyang Qin, Feng Gao, Qing Li, Song-Chun Zhu, and Sirui Xie.
\newblock Learning non-markovian decision-making from state-only sequences.
\newblock In \emph{Advances in Neural Information Processing Systems 36 (NeurIPS)}, pp.\  6596--6618, 2023.

\bibitem[Rajaraman et~al.(2020)Rajaraman, Yang, Jiao, and Ramchandran]{rajaraman2020fundamentalimitationlearning}
Nived Rajaraman, Lin Yang, Jiantao Jiao, and Kannan Ramchandran.
\newblock Toward the fundamental limits of imitation learning.
\newblock In \emph{Advances in Neural Information Processing Systems 33 (NeurIPS)}, pp.\  2914--2924, 2020.

\bibitem[Rajaraman et~al.(2021)Rajaraman, Han, Yang, Ramchandran, and Jiao]{rajaraman2021provablybreakingquadraticerror}
Nived Rajaraman, Yanjun Han, Lin~F. Yang, Kannan Ramchandran, and Jiantao Jiao.
\newblock Provably breaking the quadratic error compounding barrier in imitation learning, optimally, 2021.
\newblock URL \url{https://arxiv.org/abs/2102.12948}.

\bibitem[Ratliff \& Mazumdar(2020)Ratliff and Mazumdar]{ratliff2017inverse}
Lillian~J. Ratliff and Eric Mazumdar.
\newblock Inverse risk-sensitive reinforcement learning.
\newblock \emph{IEEE Transactions on Automatic Control}, 65\penalty0 (3):\penalty0 1256--1263, 2020.

\bibitem[Reddy et~al.(2020)Reddy, Dragan, and Levine]{reddy2020sqil}
Siddharth Reddy, Anca~D. Dragan, and Sergey Levine.
\newblock {SQIL:} imitation learning via reinforcement learning with sparse rewards.
\newblock In \emph{International Conference on Learning Representations 8 (ICLR)}, 2020.

\bibitem[Ren et~al.(2025)Ren, Lidard, Ankile, Simeonov, Agrawal, Majumdar, Burchfiel, Dai, and Simchowitz]{ren2025diffusion}
Allen~Z. Ren, Justin Lidard, Lars~Lien Ankile, Anthony Simeonov, Pulkit Agrawal, Anirudha Majumdar, Benjamin Burchfiel, Hongkai Dai, and Max Simchowitz.
\newblock Diffusion policy policy optimization.
\newblock In \emph{International Conference on Learning Representations 13 (ICLR)}, 2025.

\bibitem[Rockafellar \& Uryasev(2000)Rockafellar and Uryasev]{rockafellar2000cvar}
R.~Tyrrell Rockafellar and Stanislav Uryasev.
\newblock Optimization of conditional value-at risk.
\newblock \emph{Journal of Risk}, 3:\penalty0 21--41, 2000.

\bibitem[Ross \& Bagnell(2010)Ross and Bagnell]{ross2010reductions}
Stephane Ross and Drew Bagnell.
\newblock Efficient reductions for imitation learning.
\newblock In \emph{International Conference on Artificial Intelligence and Statistics 13 (AISTATS)}, volume~9, pp.\  661--668, 2010.

\bibitem[Santara et~al.(2018)Santara, Naik, Ravindran, Das, Mudigere, Avancha, and Kaul]{santara2017rail}
Anirban Santara, Abhishek Naik, Balaraman Ravindran, Dipankar Das, Dheevatsa Mudigere, Sasikanth Avancha, and Bharat Kaul.
\newblock Rail: Risk-averse imitation learning.
\newblock In \emph{International Conference on Autonomous Agents and MultiAgent Systems 17 (AAMAS)}, pp.\  2062--2063, 2018.

\bibitem[Sun et~al.(2019)Sun, Vemula, Boots, and Bagnell]{Sun2019ProvablyEI}
Wen Sun, Anirudh Vemula, Byron Boots, and Drew Bagnell.
\newblock Provably efficient imitation learning from observation alone.
\newblock In \emph{International Conference on Machine Learning 36 (ICML)}, volume~97, pp.\  6036--6045, 2019.

\bibitem[Syed et~al.(2008)Syed, Bowling, and Schapire]{syed2008allinear}
Umar Syed, Michael Bowling, and Robert~E. Schapire.
\newblock Apprenticeship learning using linear programming.
\newblock In \emph{International Conference on Machine Learning 25 (ICML)}, pp.\  1032--1039, 2008.

\bibitem[Torne et~al.(2025)Torne, Tang, Liu, and Finn]{torne2025learninglongcontextdiffusionpolicies}
Marcel Torne, Andy Tang, Yuejiang Liu, and Chelsea Finn.
\newblock Learning long-context diffusion policies via past-token prediction, 2025.
\newblock URL \url{https://arxiv.org/abs/2505.09561}.

\bibitem[Viano et~al.(2024)Viano, Skoulakis, and Cevher]{viano2024imitationlearningdiscountedlinear}
Luca Viano, Stratis Skoulakis, and Volkan Cevher.
\newblock Imitation learning in discounted linear mdps without exploration assumptions, 2024.
\newblock URL \url{https://arxiv.org/abs/2405.02181}.

\bibitem[Villani(2008)]{Villani2008OptimalTO}
C{\'e}dric Villani.
\newblock \emph{Optimal Transport: Old and New}.
\newblock Springer Berlin, Heidelberg, 2008.

\bibitem[Xie et~al.(2021)Xie, Jiang, Wang, Xiong, and Bai]{xie2021bridging}
Tengyang Xie, Nan Jiang, Huan Wang, Caiming Xiong, and Yu~Bai.
\newblock Policy {F}inetuning: Bridging sample-efficient offline and online reinforcement learning.
\newblock In \emph{Advances in Neural Information Processing Systems 34 (NeurIPS 2021)}, pp.\  27395--27407, 2021.

\bibitem[Xu et~al.(2023)Xu, Li, Yu, and Luo]{xu2023AILunknownp}
Tian Xu, Ziniu Li, Yang Yu, and Zhi-Quan Luo.
\newblock Provably efficient adversarial imitation learning with unknown transitions.
\newblock In \emph{Conference on Uncertainty in Artificial Intelligence 39 (UAI)}, volume 216, pp.\  2367--2378, 2023.

\bibitem[Zhao et~al.(2023)Zhao, Kumar, Levine, and Finn]{zhao2023chunking}
Tony~Z. Zhao, Vikash Kumar, Sergey Levine, and Chelsea Finn.
\newblock Learning fine-grained bimanual manipulation with low-cost hardware, 2023.
\newblock URL \url{https://arxiv.org/abs/2304.13705}.

\bibitem[Zhao et~al.(2025)Zhao, Zhou, Li, Tang, Wang, Hou, Min, Zhang, Zhang, Dong, Du, Yang, Chen, Chen, Jiang, Ren, Li, Tang, Liu, Liu, Nie, and Wen]{zhao2025surveylargelanguagemodels}
Wayne~Xin Zhao, Kun Zhou, Junyi Li, Tianyi Tang, Xiaolei Wang, Yupeng Hou, Yingqian Min, Beichen Zhang, Junjie Zhang, Zican Dong, Yifan Du, Chen Yang, Yushuo Chen, Zhipeng Chen, Jinhao Jiang, Ruiyang Ren, Yifan Li, Xinyu Tang, Zikang Liu, Peiyu Liu, Jian-Yun Nie, and Ji-Rong Wen.
\newblock A survey of large language models, 2025.
\newblock URL \url{https://arxiv.org/abs/2303.18223}.

\bibitem[Ziebart(2010)]{Ziebart2010ModelingPA}
Brian~D. Ziebart.
\newblock Modeling purposeful adaptive behavior with the principle of maximum causal entropy, 2010.

\end{thebibliography}
\bibliographystyle{iclr2026_conference}

\newpage
\appendix

\section{Additional Related Work}\label{apx: rel work}

% In this section, we discuss the most relevant related works to our paper.

\paragraph{Standard IL.}

There are two main approaches to address the standard occupancy measure matching
formulation of IL: Behavior Cloning (BC), that treats IL as a supervised
learning problem, directly learning a mapping from states to actions
\citep{pomerleau1988alvinn,ross2010reductions}, and Inverse Reinforcement
Learning (IRL) that infers a reward function that reflects the expert's
preferences, and then derives a policy via planning
\citep{ng2000algorithms,abbeel2004apprenticeship,Ziebart2010ModelingPA,finn2016guidedcostlearning,Fu2017LearningRR}.
Other popular occupancy measure matching methods include
\citet{ho2016generativeadversarialimitationlearning,kostrikov2018discriminatoractorcritic,reddy2020sqil,Brantley2020Disagreement-Regularized,garg2021IQlearn,dadashi2021primal}.
Recently, there have been various efforts into providing theoretical guarantees
on BC \citep{rajaraman2020fundamentalimitationlearning,foster2024bcallyouneed}
and IRL \citep{metelli2021provably,metelli2023towards}, and also for related IL
algorithms \citep{Sun2019ProvablyEI,viano2024imitationlearningdiscountedlinear}.
In this paper we provide a theoretical analysis analogous to that in
\citet{rajaraman2020fundamentalimitationlearning,foster2024bcallyouneed}, but
for the novel return distribution matching setting.
%
% Moreover, we mention that one of our algorithms, \rsbc, can be seen as a BC
% method operating in an augmented state space that includes cumulative rounded
% rewards.

\paragraph{Risk-sensitive RL.} 

The first paper to address risk sensitivity in Markov Decision Processes (MDPs)
is \citet{howard1972risk}. Since then, various researchers have explored this
problem (see the survey
\citet{prashant2022risksensitivereinforcementlearningpolicy} and the recent book
\citet{bellemare2023distributional}).
Notably, \citet{mannor2011meanvarianceoptimizationmarkovdecision} emphasizes the
importance of non-Markovian policies in mean-variance optimization.
Additionally, \citet{bauerle2011cvar} and \citet{bauerle2014more} show that
optimal behavior in CVaR and expected utility planning problems can generally be
achieved using non-Markovian policies that base actions on both the current
state and the cumulative reward. This idea is exploited in Section~\ref{sec: r
known} to construct our algorithms.
Some risk-sensitive RL algorithms employing such non-Markovian policies include
\citet{haskell2015convexrisk,chow2018risk}.

\paragraph{Risk-Sensitive IL.} 

Some works extend occupancy measure matching with additional objectives to
capture the expert's risk attitude.
\citet{santara2017rail} and \citet{lacotte2019rsgail} are most similar to ours,
as they aim to match both expected return and CVaR. However, as noted in the
introduction, they restrict to Markovian policies, which limits their ability to
fully capture the expert's risk preferences.
\citet{ratliff2017inverse} propose a risk-sensitive IRL algorithm focused on
learning risk parameters, also assuming a Markovian expert.
\citet{majumdar2017risk} study risk-sensitive IL with a form of non-Markovian
policy, but in environments far simpler than tabular MDPs.
\citet{lazzati2025utility} consider non-Markovian experts and imitation, but
primarily aim to recover the expert's utility, and their IL approach struggles
when demonstrations come from a single environment. Nevertheless, our policy
class in Section \ref{sec: r known} draws inspiration on their discretization
approach.
\citet{muni2025what} also mentions the importance of non-Markovian policies for
risk-sensitive IL, but do not present any algorithm.

\paragraph{Non-Markovian IL.}

The importance of adopting non-Markovian policies for IL has already been
recognized by some IL works, especially in the field of robotics.
\citet{mandlekar2020iris,mandlekar2022whatmatters} identify one cause of
non-Markovianity in human demonstrations in partial observability,
and consider variants of BC with recurrent neural networks for imitation.
\citet{zhao2023chunking} and subsequent literature (e.g., see
\citet{torne2025learninglongcontextdiffusionpolicies,ren2025diffusion}), learn
non-Markovian policies implicitly through action chunking, an open-loop control
technique in which at each state our policy outputs a ``chunk'' (i.e., a
sequence) of actions.
We mention \citet{qin2023nonmarkIL}, which learn non-Markovian policies as
energy-based priors from state-only sequences, and \citet{block2023ILnonmark},
which study imitation in non-linear systems.
Importantly, none of these works address non-Markovian policies arising from
risk sensitivity, which is a novel aspect of ours. Moreover, note that fitting
general non-Markovian policies may require an amount of data exponential in the
horizon in the worst-case (see Appendix~\ref{apx: no eff non-markov}).

\section{Additional Results and Proofs for Section \ref{sec: IL as return distrib matching}}
\label{apx: sec ret distrib match}

In this appendix, we first show that matching the return distribution in
Wasserstein distance is strictly more expressive than matching the expectation
or the CVaR at a given level (Appendix \ref{apx: probl formulation}). Next, we
provide the proof of other results in Section \ref{sec: IL as return distrib
matching} (Appendix \ref{apx: lower bound total variation}). Finally, we prove
that matching the trajectory distribution of arbitray non-Markovian policies is
sample inefficient (Appendix \ref{apx: no eff non-markov}).

\subsection{Additional Insights on RDM}
\label{apx: probl formulation}

We show here that matching the expert's return distribution in Wasserstein
distance implies closeness in terms of expected return, the variance of the
return, and the CVaR at any level.

First, observe that, for any MDP $\cM_{r^E}$ and policies $\pi^E,\widehat{\pi}$:
\begin{align*}
    J^{\pi^E}_{r^E}-J^{\widehat{\pi}}_{r^E}\le
\cW\bigr{\eta_{r^E}^{\pi^E},\eta_{r^E}^{\widehat{\pi}}}.
\end{align*}
This follows after having realized that the identity function (denote below as
$I(\cdot)$) is $1-$Lipschitz, and using the dual form of the Wasserstein
distance (see Eq. 6.3 of \citet{Villani2008OptimalTO}).
\begin{align*}
    J^{\pi^E}_{r^E}-J^{\widehat{\pi}}_{r^E}&=\E_{X\sim \eta_{r^E}^{\pi^E}}[X]-
    \E_{Y\sim \eta_{r^E}^{\widehat{\pi}}}[Y]\\
    &=\E_{X\sim \eta_{r^E}^{\pi^E}}[\com{I(X)}]-
    \E_{Y\sim \eta_{r^E}^{\widehat{\pi}}}[\com{I(Y)}]\\
    &\le\com{\sup\limits_{f:\|f\|_{\text{Lip}}\le1}}\E_{X\sim \eta_{r^E}^{\pi^E}}[\com{f(X)}]-
    \E_{Y\sim \eta_{r^E}^{\widehat{\pi}}}[\com{f(Y)}]\\
&=\cW\bigr{\eta_{r^E}^{\pi^E},\eta_{r^E}^{\widehat{\pi}}}.
\end{align*}

Second, for any $\alpha\in(0,1)$, we have:
\begin{align*}
    |\text{CVaR}_\alpha(\eta_{r^E}^{\pi^E})-
\text{CVaR}_\alpha(\eta_{r^E}^{\widehat{\pi}})|\le \frac{1}{\alpha}
\cW\bigr{\eta_{r^E}^{\pi^E},\eta_{r^E}^{\widehat{\pi}}}.
\end{align*}
This follows using an alternative expression for the Wasserstein
distance. Formally, for any pair of distributions $p,q$ with support on $[0,H]$:
\begin{align*}
    \biga{\text{CVaR}_\alpha(p)-\text{CVaR}_\alpha(q)}&=
    \Biga{\frac{1}{\alpha}
    \int^\alpha_0 \Bigr{F^{-1}_p(x)-F^{-1}_q(x)}dx}\\
    &\le \frac{1}{\alpha}
    \int^\alpha_0 \com{\Big|}F^{-1}_p(x)-F^{-1}_q(x)\com{\Big|}dx\\
    &\le \frac{1}{\alpha}
    \com{\int_0^1} \Biga{F^{-1}_p(x)-F^{-1}_q(x)}dx\\
    &\markref{(1)}{=} \frac{1}{\alpha}\cW(p,q),
\end{align*}
where at (1) we use that the 1-Wasserstein distance can be written this way
\citep{Panaretos2019wasserstein}, where recall that $F_p^{-1}(x)\coloneqq
\inf_{z\in\RR:F_p(z)\ge x}z$.

Lastly, we observe that closedness in Wasserstein distance also implies
closedness between the variance of returns:
\begin{align*}
    |\text{var}(\eta_{r^E}^{\pi^E})-
\text{var}(\eta_{r^E}^{\widehat{\pi}})|\le 4H\cdot
\cW\bigr{\eta_{r^E}^{\pi^E},\eta_{r^E}^{\widehat{\pi}}}.
\end{align*}
To see it, for any pair of distributions $p,q$ with support on $[0,H]$, we can write:
\begin{align*}
    &|\text{var}(p)-\text{var}(q)|\\
    &\qquad\qquad=\Biga{
        \E_{X\sim p}[X^2]-(\E_{X\sim p}[X])^2
        -\E_{Y\sim q}[Y^2]+(\E_{Y\sim q}[Y])^2
    }\\
    &\qquad\qquad\le \biga{\E_{X\sim p}[X^2]-\E_{Y\sim q}[Y^2]}
    + \biga{(\E_{X\sim p}[X])^2-(\E_{Y\sim q}[Y])^2}\\
    &\qquad\qquad\markref{(1)}{\le} 2H\sup\limits_{f:\|f\|_{\text{Lip}}\le1}
    \biga{\E_{X\sim p}[f(X)]-\E_{Y\sim q}[f(Y)]}\\
    &\qquad\qquad\qquad\qquad+
    \biga{(\E_{X\sim p}[X]-\E_{Y\sim q}[Y])\cdot(\E_{X\sim p}[X]+\E_{Y\sim q}[Y])}\\
    &\qquad\qquad\markref{(2)}{\le} 2H\cdot \cW(p,q)+2H \biga{\E_{X\sim p}[X]-\E_{Y\sim q}[Y]}\\
    &\qquad\qquad\markref{(3)}{\le} 4H\cdot\cW(p,q),
\end{align*}
where at (1) we use that the function $f:x\to x^2$ is $2H-$Lipschitz on $[0,H]$
(since $\sup_{x\neq y}\frac{|f(x)-f(y)|}{|x-y|}=\sup_{x\neq
y}\frac{|x^2-y^2|}{|x-y|}=\sup_{x\neq y}(x+y)=2H$ for $x,y\in[0,H]$), and we
rescaled to obtain functions that are $1-$Lipschitz, at (2) we use the dual form
of the Wasserstein distance (see Eq. 6.3 of \citet{Villani2008OptimalTO}) after
having noticed that the absolute value can be removed as set
$\{f:\|f\|_{\text{Lip}}\le1\}$ is symmetric, and we upper bounded $\E_{X\sim
p}[X]+\E_{Y\sim q}[Y]\le 2H$. Finally, at (3) we use the result proved earlier.

\subsection{Proofs}
\label{apx: lower bound total variation}

\lowerboundtv*
\begin{proof}
  To prove this result, we simply provide a reward function $r^E:\SAH\to[0,1]$
  for which, in the ``hard'' MDP used in the proof of Theorem \ref{thr: lower
  bound distrib over trajectories}, we have:
  \begin{align*}
    \text{TV}\Bigr{\eta^{\pi^E}_{r^E},\eta^{\widehat{\pi}}_{r^E}}=\frac{1}{2}
    \Bign{\P^{\pi^E}-\P^{\widehat{\pi}}}_1,
  \end{align*}
  for any pair of policies. Then, the claim in the theorem follows directly from
  the result in Theorem \ref{thr: lower bound distrib over trajectories}.

  First, let us associate a different integer $0,1,\dotsc,S-1$ to each state
  $\{s_1,\dotsc,s_S\}$, and a different integer $0,1,\dotsc,A-1$ to each action
  $\{a_1,\dotsc,a_A\}$, and denote $x_s:\cS\to\Nat$ and $x_a:\cA\to\Nat$ these
  mappings. Then, the reward that we provide is the following:
  \begin{align*}
    r^E_h(s,a)\coloneqq 10^{-hSA-x_s(s)A-x_a(a)}.
  \end{align*}
  Simply, observe that every triple $s,a,h$ is associated a reward value
  belonging to a different power of 10, thus when we sum them to compute the
  return of trajectory we obtain a different return value for every possible
  trajectory:
  \begin{align*}
    \text{TV}\Bigr{\eta^{\pi^E}_{r^E},\eta^{\widehat{\pi}}_{r^E}}&\coloneqq
    \frac{1}{2}\sum\limits_{g}\Biga{\eta^{\pi^E}_{r^E}(g)-\eta^{\widehat{\pi}}_{r^E}(g)}\\
    &=
    \frac{1}{2}\sum\limits_{g}\Biga{\sum\limits_{\omega: G(\omega;r^E)=g}
    \P^{\pi^E}(\omega)-\P^{\widehat{\pi}}(\omega)}\\
    &=
    \frac{1}{2}\sum\limits_{\omega}\Biga{
    \P^{\pi^E}(\omega)-\P^{\widehat{\pi}}(\omega)}\\
    &=\frac{1}{2}
    \Bign{\P^{\pi^E}-\P^{\widehat{\pi}}}_1.
  \end{align*}
  This concludes the proof.
\end{proof}

\noilofnmpolicies*
\begin{proof}
  \begin{figure}[t!]
    \centering
    \begin{tikzpicture}[node distance=3.5cm]
    \node[state,initial] at (-1,0) (s0) {$s_{\text{init}}$};
    \node[state] at (3,1) (s1) {$s_1$};
    \node[state] at (3,-1) (s2) {$s_2$};
    \node[state] at (6,0) (s3) {$s_3$};
    %%%
    \node[state, draw=none] at (9,1.) (ss4) {};
    \node[state, draw=none] at (9,-1.) (ss5) {};
    \node[draw=none, fill=black] at (1.5,0) (ss0) {};
    %%%
    \draw (s0) edge[-, solid, above] node{\scriptsize$a_1,a_2,r=0$} (ss0);
    \draw (ss0) edge[->, solid, above] node{\scriptsize$1/2$} (s1);
    \draw (ss0) edge[->, solid, above] node{\scriptsize$1/2$} (s2);
    \draw (s1) edge[->, solid, above, sloped] node{\scriptsize$a_1,a_2,r=1$} (s3);
    \draw (s2) edge[->, solid, above, sloped] node{\scriptsize$a_1,a_2,r=0$} (s3);
    \draw (s3) edge[->, solid, above, sloped] node{\scriptsize$a_1,r=0$} (ss4);
    \draw (s3) edge[->, solid, above, sloped] node{\scriptsize$a_2,r=1$} (ss5);
  \end{tikzpicture}
  \caption{MDP for the proof of Proposition \ref{prop: no il of nm policies}.}
  \label{fig: mdp for proof no il of nm policies}
  \end{figure}
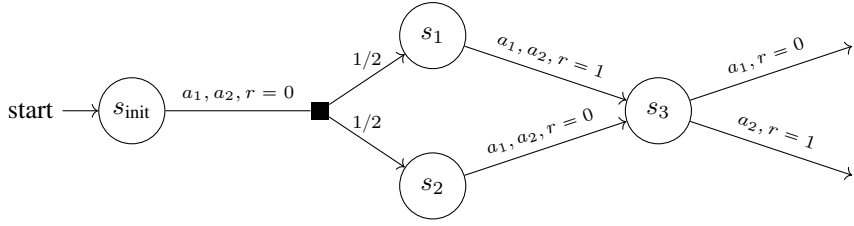

  Consider the MDP $\cM_{r^E}=\tuple{\cS,\cA,H,s_0,p,r^E}$ in Fig. \ref{fig: mdp
  for proof no il of nm policies}, where $\cS=\{s_{\text{init}},s_1,s_2,s_3\}$,
  $\cA=\{a_1,a_2\}$, $H=3$, $s_0=s_{\text{init}}$, the transition model $p$ is
  such that:
  \begin{align*}
    &p_1(s_1|s_{\text{init}},a)=p_1(s_2|s_{\text{init}},a)=1/2 \quad\forall a\in\cA,\\
    &p_2(s_3|s_1,a)=p_2(s_3|s_2,a)=1 \quad\forall a\in\cA,\\
    &p_3(s_3|s_3,a)=1,
  \end{align*}
  and the reward function $r^E$ is defined as:
  \begin{align*}
    &r^E_1(s_{\text{init}},a)=0\quad\forall a\in\cA,\\
    &r^E_2(s_1,a)=1\quad\forall a\in\cA,\\
    &r^E_2(s_2,a)=0\quad\forall a\in\cA,\\
    &r^E_3(s_3,a_1)=0,\\
    &r^E_3(s_3,a_2)=1.
  \end{align*}
  Let $\pi^E$ be the deterministic non-Markovian policy that plays always action
  $a_1$ every time it is not in $s_3$, and then in $s_3$ plays $a_1$ if
  previously we passed through $s_1$, otherwise play $a_2$. Formally, for any
  $\omega\in\Omega$ and $s\in\cS$:
  \begin{align*}
    \pi^E(a_1|s,\omega)=\begin{cases}
      1&\text{if }s\neq s_3\vee (s=s_3\wedge \omega=(s_{\text{init}},a_1,s_1,a_1))\\
      0&\text{otherwise}
    \end{cases}.
  \end{align*}
  It is easy to note that the return distribution of $\pi^E$ in $\cM_{r^E}$ is:
  \begin{align*}
    \eta^{\pi^E}_{r^E}=\delta_1,
  \end{align*}
  where notation $\delta_x$ denotes the Dirac delta on $x$. With $\delta_1$, we
  always get return 1.

  Now, let $\pi^\alpha$ be any Markovian policy parameterized by
  $\alpha\in[0,1]$ as (we do not specify values in other states and stages,
  because they are not relevant for the return distribution):
  \begin{align*}
    \pi_3^\alpha(a_1|s_3)=\alpha.
  \end{align*}
  The return distribution of $\pi^\alpha$ in $\cM_{r^E}$ is:
  \begin{align*}
    \eta^{\pi^\alpha}_{r^E}=\frac{\alpha}{2}\delta_0 + \frac{1}{2}\delta_1
    +\frac{1-\alpha}{2}\delta_2,
  \end{align*}
  namely, irrespective of the policy $\pi^\alpha$, the return distribution is 1
  w.p. $1/2$.

  Let us compute the Wasserstein distance between these return distributions:
  \begin{align*}
    \cW\Bigr{\eta^{\pi^E}_{r^E},\eta^{\pi^\alpha}_{r^E}}
    &=\int_{-\infty}^{+\infty}\Biga{F_{\eta^{\pi^E}_{r^E}}(g)
    -F_{\eta^{\pi^\alpha}_{r^E}}(g)}dg\\
    &=1\cdot\Biga{0-\frac{\alpha}{2}}
    +1\cdot\Biga{1-\frac{1+\alpha}{2}}\\
    &=\frac{1}{2},
  \end{align*}
  irrespective of $\alpha$. This concludes the proof.
  
\end{proof}

\subsection{Statistical Inefficiency of General non-Markovian Policies}\label{apx:
no eff non-markov}

In this appendix, we provide an explicit proof that the problem of matching the
\emph{distribution over trajectories} of an arbitrary non-Markovian expert's
policies may require an exponential (in the horizon) amount of data. Then, the
proof of Theorem \ref{thr: lower bound TV} can be simply obtained through a
reduction to this problem. The lower bound is provided in the next theorem,
which makes use of the family of hard instances in Fig. \ref{fig: hard instance
trajectories}.

\begin{restatable}{thr}{lowerboundgeneral}\label{thr: lower bound distrib over
trajectories}
  There exist an \MDPr $\cM=\tuple{\cS,\cA,H,s_0,p}$ with $S\ge 2$, $A\ge 2,H\ge2$ and
  a non-Markovian expert's policy $\pi^E\in\Pi^{\text{NM}}$, such that any
  learning algorithm $\fA$ taking in input a dataset of $N= (S-1)^{H-1}-1$
  expert's trajectories $\cD^E=\{\omega_i\}_{i\in\dsb{N}}$ satisfies:
  \begin{align*}
    \E_{\cD^E \sim \P^{\pi^E}}\Bign{\P^{\pi^E}-\P^{\widehat{\pi}}}_1
    \ge\frac{1}{e},
  \end{align*}
  where $\widehat{\pi}$ is the policy outputted by $\fA$ when taking in input
  both $\cD^E$ and $\cM$.
\end{restatable}
\begin{proof}
  The proof draws inspiration from that of Theorem 5.1 in
  \citet{rajaraman2020fundamentalimitationlearning} for the known-transition
  setting.

    \begin{figure}[t]
     \centering
     \begin{tikzpicture}
     \node[state,initial] at (0,0) (s0) {$s_1$};
     %%%
     \node[state] at (3,2) (s1) {$s_1$};
     \node[state] at (3,1) (s2) {$s_2$};
     \node[state, draw=none] at (3,0) (dots) {$\dotsc$};
     \node[state] at (3,-1) (ss1) {\scriptsize$s_{S-1}$};
     \node[state] at (3,-2) (ss) {$s_S$};
     %%%
     \node[state] at (6,2) (s21) {$s_1$};
     \node[state] at (6,1) (s22) {$s_2$};
     \node[state, draw=none] at (6,0) (dots) {$\dotsc$};
     \node[state] at (6,-1) (s2s1) {\scriptsize$s_{S-1}$};
     \node[state] at (6,-2) (s2s) {$s_S$};
     %%%
     \node[state] at (10,2) (s31) {$s_1$};
     \node[state] at (10,1) (s32) {$s_2$};
     \node[state, draw=none] at (10,0) (dots) {$\dotsc$};
     \node[state] at (10,-1) (s3s1) {\scriptsize$s_{S-1}$};
     \node[state] at (10,-2) (s3s) {$s_S$};
     %%%
     \node[draw=none, fill=black] at (1.5,0) (ss0) {};
     \node[draw=none, fill=black] at (4.,2) (s2s0) {};
     \node[draw=none, fill=black] at (4.,1) (s3s0) {};
     \node[draw=none, fill=black] at (4.,-1) (s4s0) {};
     \node[draw=none] at (8,2) (dottss) {$\dotsc$};
     \node[draw=none] at (8,1) (dottss) {$\dotsc$};
     \node[draw=none] at (8,0) (dottss) {$\dotsc$};
     \node[draw=none] at (8,-1) (dottss) {$\dotsc$};
     \node[draw=none] at (8,-2) (dottss) {$\dotsc$};
     %%% h=2
     \draw (s0) edge[-, solid, above] node{\scriptsize$a$} (ss0);
     \node[draw=none, fill=black] at (7.,2) (ss2s1) {};
     \node[draw=none, fill=black] at (7.,1) (ss3s1) {};
     \node[draw=none, fill=black] at (7.,-1) (ss4s1) {};
     \node[draw=none] at (7.,-2) (ss5s1) {};
     \draw (ss0) edge[->, solid, above] node{\scriptsize$\zeta$} (s1);
     \draw (ss0) edge[->, solid, above] node{\scriptsize$\zeta$} (s2);
     \draw (ss0) edge[->, solid, above] node{\scriptsize$\zeta$} (ss1);
     \draw (ss0) edge[->, solid, below,sloped] node{\scriptsize$1-(S-1)\zeta$}
     (ss);
     %%% h=3
     \draw (s1) edge[-, solid, above] node{\scriptsize$a$} (s2s0);
     \draw (s2) edge[-, solid, above] node{\scriptsize$a$} (s3s0);
     \draw (s21) edge[-, solid, above] node{\scriptsize$a$} (ss2s1);
     \draw (s22) edge[-, solid, above] node{\scriptsize$a$} (ss3s1);
     \draw (s2s1) edge[-, solid, above] node{\scriptsize$a$} (ss4s1);
     \draw (s2s) edge[->, solid, above] node{\scriptsize$a$} (ss5s1);
     \draw (ss1) edge[-, solid, above] node{\scriptsize$a$} (s4s0);
     \draw (s2s0) edge[->, solid, above] node{\scriptsize$\zeta$} (s21);
     \draw (s2s0) edge[->, solid, above] node{\scriptsize$\zeta$} (s22);
     \draw (s2s0) edge[->, solid, above] node{\scriptsize$\zeta$} (s2s1);
     \draw (s2s0) edge[->, solid, below,sloped] node{\scriptsize$1-(S-1)\zeta$}
     (s2s);
     \draw (ss) edge[->, solid, above,sloped] node{\scriptsize$a$}
     (s2s);
   \end{tikzpicture}
   \caption{The \MDPr $\cM$ used in the proof of Theorem \ref{thr: lower bound
   distrib over trajectories}.}
   \label{fig: hard instance trajectories}
   \end{figure}

We begin by describing the class of hard instances that will be considered for
proving this lower bound. Note that we are considering here the ``IL'' problem
of computing a policy with trajectory distribution close to that of the expert.
Thus, a problem instance is a pair \MDPr-expert's policy.
  Let $\fP=\{(\cM,\pi^E_i)\}_i$ be the family of problem instances where the
  \MDPr $\cM=(\cS,\cA,H,s_0,p)$ is represented in Fig. \ref{fig: hard instance
  trajectories}, and the expert's policy $\pi^E_i$ is any deterministic policy
  that for $h<H$ always plays $a_1$, while at the last stage can play an
  arbitrary action. Formally, $\cS\supseteq\{s_1,s_S\}$ (i.e., it has $S\ge 2$
  at least $s_1,s_S$, and potentially other states $s_2,\dotsc$),
  $\cA=\{a_1,a_2\}$, $H\ge 2$, $s_0=s_1$, and the transition model $p$ is
  defined as a function of a scalar $\zeta\in[0,\frac{1}{S-1}]$ that we will
  choose later:
   \begin{align*}
    p_h(s'|s,a)=\begin{cases}
      \zeta&\text{ if }s\neq s_S\wedge s'\neq s_S\\
      1-(S-1)\zeta&\text{ if }s\neq s_S\wedge s'= s_S\\
      1&\text{ otherwise}\\
    \end{cases},
   \end{align*}
   for any action $a\in\cA$.
   Then, the expert's policy $\pi^E_i$ is defined as any policy in
   $\Pi^{\text{hard}}$, defined as:
  \begin{align*}
    &\Pi^{\text{hard}}\coloneqq\Bigc{\pi\in\Pi^{\text{NM}}\,\Big|\,
    \forall h\in\dsb{H-1},\forall s\in\cS,\forall \omega\in\Omega_h:\, \pi(a_1|s,\omega)=1}.
  \end{align*}
  Then, $\fP$ is formally defined as:
  \begin{align*}
    \fP\coloneqq\Bigc{(\cM',\pi')\,\Big|\, \cM'=\cM\wedge \pi'\in\Pi^{\text{hard}}}.
  \end{align*}
  We set $\zeta\coloneqq\frac{1}{(N+1)^{1/(H-1)}}$, and observe that, to
  guarantee that $\cM$ exists, we need to enforce that $(S-1)\zeta$, i.e., the
  total probability assigned to reaching a state $\neq s_S$, is smaller than 1.
  So:
  \begin{align*}
    (S-1)\zeta\le 1 \iff \frac{S-1}{(N+1)^{\frac{1}{H-1}}}\le 1
    \iff N \ge (S-1)^{H-1}-1.
  \end{align*}

  To proceed, we choose a distribution $\cP$ over problem instances in $\fP$. We
  select $\cP\in\Delta^\fP$ as the uniform probability distribution over the
  family of problems $\fP$, i.e.:
  \begin{align*}
    \forall (\cM',\pi')\in\fP:\; \cP(\cM',\pi') = \frac{1}{|\fP|}.
  \end{align*}
  Then, if we can show that:
  \begin{align*}
    \E_{(\cM',\pi')\sim \cP} \E_{\cD^E\sim \P^{\pi'}}\Bign{\P^{\pi'}-\P^{\widehat{\pi}}}_1
    \ge\frac{(S-1)^{H-1}}{e(N+1)},
  \end{align*}
  then we can conclude that there is at least a problem instance
  $(\overline{\cM},\overline{\pi})\in\fP$ such that:
  \begin{align*}
    \E_{\cD^E\sim \P^{\overline{\pi}}}\Bign{\P^{\overline{\pi}}-\P^{\widehat{\pi}}}_1
    \ge\frac{(S-1)^{H-1}}{e(N+1)},
  \end{align*}
  from which if we insert the choice $N= (S-1)^{H-1}-1$, we obtain the
  claim of the theorem.

To this aim, we can write:
    \begin{align*}
    &\E_{(\cM',\pi')\sim \cP} \E_{\cD^E\sim \P^{\pi'}}\Bign{\P^{\pi'}-\P^{\widehat{\pi}}}_1\\
    &\qquad\coloneqq\E_{(\cM',\pi')\sim \cP} \E_{\cD^E\sim \P^{\pi'}} \sum\limits_{\omega\in\Omega_{H+1}}
    \Biga{\P^{\pi'}(\omega)-\P^{\widehat{\pi}}(\omega)}\\
    &\qquad\markref{(1)}{=}\com{\E_{\cD^E\sim q}
    \E_{(\cM',\pi')\sim \cP'(\cD^E)}} \sum\limits_{\omega\in\Omega_{H+1}}
    \Biga{\P^{\pi'}(\omega)-\P^{\widehat{\pi}}(\omega)}\\
    &\qquad=\E_{\cD^E\sim q}
    \E_{(\cM',\pi')\sim \cP'(\cD^E)}
    \com{\sum\limits_{s^1,a^1,\dotsc,s^H,a^H}}
    \Big|\P^{\pi'}(\com{s^1,a^1,\dotsc,s^H,a^H})\\
    &\qquad\qquad
    -\P^{\widehat{\pi}}(\com{s^1,a^1,\dotsc,s^H,a^H})\Big|\\
    &\qquad\markref{(2)}{\ge}
    \E_{\cD^E\sim q}
    \E_{(\cM',\pi')\sim \cP'(\cD^E)}
    \com{\sum\limits_{s^2,s^3,\dotsc,s^H}\sum\limits_{a\in\cA}}
    \Big|\P^{\pi'}(\com{s_1,a_1,s^2,a_1,\dotsc,s^H,a})\\
    &\qquad\qquad
    -\P^{\widehat{\pi}}(\com{s_1,a_1,s^2,a_1,\dotsc,s^H,a})\Big|\\
    &\qquad\markref{(3)}{=}
    \E_{\cD^E\sim q}
    \E_{(\cM',\pi')\sim \cP'(\cD^E)}
    \sum\limits_{s^2,s^3,\dotsc,s^H}\sum\limits_{a\in\cA}
    \Big|\com{\rho(s^2)\cdot\rho(s^3)\cdot\dotsc\cdot\rho(s^H)}\\
    &\qquad\qquad
    \com{\cdot\pi'(a|s_1,a_1,s^2,a_1,\dotsc,s^H)}
    -\P^{\widehat{\pi}}(s_1,a_1,s^2,a_1,\dotsc,s^H,a)\Big|\\
    &\qquad\markref{(4)}{=}
    \E_{\cD^E\sim q}
    \E_{(\cM',\pi')\sim \cP'(\cD^E)}
    \sum\limits_{s^2,s^3,\dotsc,s^H}\sum\limits_{a\in\cA}
    \Big|\rho(s^2)\cdot\rho(s^3)\cdot\dotsc\cdot\rho(s^H)\\
    &\qquad\qquad
    \cdot\pi'(a|s_1,a_1,s^2,a_1,\dotsc,s^H)
    -\com{\rho(s^2)\cdot\rho(s^3)\cdot\dotsc\cdot\rho(s^H)}\\
    &\qquad\qquad
    \com{\cdot \widehat{\pi}
    (a|s_1,a_1,s^2,a_1,\dotsc,s^H)
    \cdot \prod\limits_{h\in\dsb{H-1}}\widehat{\pi}
    (a_1|s_1,a_1,s^2,a_1,\dotsc,s^h)}
    \Big|\\
    &\qquad\markref{(5)}{=}
    \E_{\cD^E\sim q}
    \E_{(\cM',\pi')\sim \cP'(\cD^E)}
    \sum\limits_{s^2,s^3,\dotsc,s^H}\com{\rho(s^2)\cdot\rho(s^3)\cdot\dotsc\cdot\rho(s^H)}
    \sum\limits_{a\in\cA}
    \Big|\\
    &\qquad\qquad
    \pi'(a|s_1,a_1,s^2,a_1,\dotsc,s^H)
    - \widehat{\pi}
    (a|s_1,a_1,s^2,a_1,\dotsc,s^H)
    \cdot \com{K}
    \Big|\\
    &\qquad=
    \E_{\cD^E\sim q}
    \sum\limits_{s^2,s^3,\dotsc,s^H}\rho(s^2)\cdot\rho(s^3)\cdot\dotsc\cdot\rho(s^H)
    \com{\E_{(\cM',\pi')\sim \cP'(\cD^E)}}\sum\limits_{a\in\cA}
    \Big|\\
    &\qquad\qquad
    \pi'(a|s_1,a_1,s^2,a_1,\dotsc,s^H)
    - \widehat{\pi}
    (a|s_1,a_1,s^2,a_1,\dotsc,s^H)
    \cdot K
    \Big|\\
    &\qquad\ge
    \E_{\cD^E\sim q}
    \sum\limits_{s^2,s^3,\dotsc,s^H}\rho(s^2)\cdot\rho(s^3)\cdot\dotsc\cdot\rho(s^H)
    \com{\indic{(s_1,a_1,s^2,a_1,\dotsc,s^H)\notin \cD^E}}\\
    &\qquad\qquad\E_{(\cM',\pi')\sim \cP'(\cD^E)}\sum\limits_{a\in\cA}
    \Big|\pi'(a|s_1,a_1,s^2,a_1,\dotsc,s^H)\\
    &\qquad\qquad
    - \widehat{\pi}
    (a|s_1,a_1,s^2,a_1,\dotsc,s^H)
    \cdot K
    \Big|\\
    &\qquad\markref{(6)}{=}
    \E_{\cD^E\sim q}
    \sum\limits_{s^2,s^3,\dotsc,s^H}\rho(s^2)\cdot\rho(s^3)\cdot\dotsc\cdot\rho(s^H)
    \indic{(s_1,a_1,s^2,a_1,\dotsc,s^H)\notin \cD^E}\\
    &\qquad\qquad\com{\sum\limits_{b\in\cA}\frac{1}{A}}
    \sum\limits_{a\in\cA}
    \Big|\com{\indic{a=b}}
    - \widehat{\pi}
    (a|s_1,a_1,s^2,a_1,\dotsc,s^H)
    \cdot K
    \Big|\\
    &\qquad=
    \E_{\cD^E\sim q}
    \sum\limits_{s^2,s^3,\dotsc,s^H}\rho(s^2)\cdot\rho(s^3)\cdot\dotsc\cdot\rho(s^H)
    \indic{(s_1,a_1,s^2,a_1,\dotsc,s^H)\notin \cD^E}\\
    &\qquad\qquad
    \cdot\sum\limits_{b\in\cA}\frac{1}{A}\com{\bigg(
    \Bigr{1
    - \widehat{\pi}
    (b|s_1,a_1,s^2,a_1,\dotsc,s^H)
    \cdot K}}\\
    &\qquad\qquad
    \com{+
    \sum\limits_{a\in\cA\setminus\{b\}}
    \widehat{\pi}
    (a|s_1,a_1,s^2,a_1,\dotsc,s^H)
    \cdot K
    \bigg)}\\
    &\qquad=
    \E_{\cD^E\sim q}
    \sum\limits_{s^2,s^3,\dotsc,s^H}\rho(s^2)\cdot\rho(s^3)\cdot\dotsc\cdot\rho(s^H)
    \indic{(s_1,a_1,s^2,a_1,\dotsc,s^H)\notin \cD^E}\\
    &\qquad\qquad
    \cdot
    \bigg(\com{1+\frac{K}{A}}\sum\limits_{b\in\cA}\Big(\sum\limits_{a\in\cA\setminus\{b\}}
    \widehat{\pi}
    (a|s_1,a_1,s^2,a_1,\dotsc,s^H)\\
    &\qquad\qquad-
    \widehat{\pi}
    (b|s_1,a_1,s^2,a_1,\dotsc,s^H)\Big)
    \bigg)\\
    &\qquad=
    \E_{\cD^E\sim q}
    \sum\limits_{s^2,s^3,\dotsc,s^H}\rho(s^2)\cdot\rho(s^3)\cdot\dotsc\cdot\rho(s^H)
    \indic{(s_1,a_1,s^2,a_1,\dotsc,s^H)\notin \cD^E}\\
    &\qquad\qquad
    \cdot
    \bigg(1+\frac{K}{A}\sum\limits_{b\in\cA}\Big(\com{1-2\widehat{\pi}
    (b|s_1,a_1,s^2,a_1,\dotsc,s^H)}\Big)
    \bigg)\\
    &\qquad=
    \E_{\cD^E\sim q}
    \sum\limits_{s^2,s^3,\dotsc,s^H}\rho(s^2)\cdot\rho(s^3)\cdot\dotsc\cdot\rho(s^H)
    \indic{(s_1,a_1,s^2,a_1,\dotsc,s^H)\notin \cD^E}\\
    &\qquad\qquad
    \cdot
    \big(1+\com{K-2K/A}
    \big)\\
    &\qquad\markref{(7)}{\ge}
    \E_{\cD^E\sim q}
    \sum\limits_{s^2,s^3,\dotsc,s^H}\rho(s^2)\cdot\rho(s^3)\cdot\dotsc\cdot\rho(s^H)
    \indic{(s_1,a_1,s^2,a_1,\dotsc,s^H)\notin \cD^E}\\
    &\qquad\markref{(8)}{=}
    \sum\limits_{s^2,s^3,\dotsc,s^H}\rho(s^2)\cdot\rho(s^3)\cdot\dotsc\cdot\rho(s^H)
    \com{q\bigr{(s_1,a_1,s^2,a_1,\dotsc,s^H)\notin \cD^E}}\\
    &\qquad\markref{(9)}{=}
    \sum\limits_{s^2,s^3,\dotsc,s^H}\rho(s^2)\cdot\rho(s^3)\cdot\dotsc\cdot\rho(s^H)
    \com{\Bigr{1-\rho(s^2)\cdot\rho(s^3)\cdot\dotsc\cdot\rho(s^H)}^N}\\
    &\qquad\ge
    \mathop{\com{\sum}}\limits_{\substack{\com{s^2,s^3,\dotsc,s^H:
    }\\\com{s^h\neq s_S\forall h\in\{2,\dotsc,H\}}}}
    \rho(s^2)\cdot\rho(s^3)\cdot\dotsc\cdot\rho(s^H)
    \Bigr{1-\rho(s^2)\cdot\rho(s^3)\cdot\dotsc\cdot\rho(s^H)}^N\\
    &\qquad=
    \sum\limits_{\substack{s^2,s^3,\dotsc,s^H\\s^h\neq s_S\forall h\in\{2\dotsc,H\}}}
    \com{\frac{1}{N+1}}
    \Bigr{1-\com{\frac{1}{N+1}}}^N\\
    &\qquad\markref{(10)}{\ge}
    \com{\frac{1}{e}}\sum\limits_{\substack{s^2,s^3,\dotsc,s^H\\s^h\neq s_S\forall h\in\{2\dotsc,H\}}}
    \frac{1}{N+1}\\
    &\qquad=
    \frac{1}{e}
    \frac{\com{(S-1)^{H-1}}}{N+1},
  \end{align*}
  where at (1) we define $q\in\Delta^{\Omega_{H+1}}$ as a distribution that
  generates datasets $\cD^E$ as if we sampled first $(\cM'',\pi'')\sim\cP$, and
  then we collected $N$ trajectories from $\pi''$ in $\cM''$, and we define
  $\cP'(\cD^E)$ as the uniform distribution over the set of deterministic
  policies:
  \begin{align*}
    \Pi^{\text{mimic}}(\cD^E)\coloneqq\Bigc{\pi\in\Pi^{\text{hard}}\,\Big|\,
    \forall h\in\dsb{H-1}, \forall \omega\in \Omega^s_h(\cD^E):\,
    \pi(\omega)=\pi^{\cD^E}(\omega)
    },
  \end{align*}
  where $\Omega^s_h(\cD^E)$ denotes the set of trajectories
  $(s_1,a_1,\dotsc,s_{h-1},a_{h-1},s_h)$ visited in some trajectory in $\cD^E$,
  and $\pi^{\cD^E}(\omega)$ denotes the corresponding action present in $\cD^E$,
  and $\pi(\omega)$ denotes the deterministic action taken by $\pi$.
  At (2) we lower bound by considering the error only for the trajectories in
  which at all $h<H$ the action played is $a_1$, and we note that in $\cM'=\cM$,
  the initial state is always $s_1$.
  At (3) we define distribution $\rho\in\Delta^\cS$ over states as
  $\rho(s)=\zeta$ if $s\neq s_S$, and $\rho(s_S)=1-(S-1)\zeta$. Note that since
  any policy $\pi''\in\Pi^{\text{hard}}$ plays action $a_1$ for $h<H$, then we
  set this probability to 1.
  At (4) we rewrite also $\P^{\widehat{\pi}}$ using $\rho$ and chain
  rule of conditional probabilities.
  At (5) we bring the $\rho$ terms outside and define $K\coloneqq
  \prod_{h\in\dsb{H-1}}\widehat{\pi}
  (a_1|s_1,a_1,s^2,a_1,\dotsc,s^h)$ for brevity.
  At (6) we use that $\cP'(\cD^E)$ gives always $\cM'=\cM$, and that it gives,
  when the trajectory is not observed (i.e., it is not in $\cD^E$), with equal
  weight, policies that play any action $b\in\cA$ given trajectory
  $s_1,a_1,s^2,a_1,\dotsc,s^H$. Thus, since there is no dependence on the
  actions assigned by such policies given different trajectories, then the
  expectation simplifies to just $\sum_{b\in\cA}\frac{1}{A}(\dotsc)$. Then, note
  that, using this notation, $\pi'(a|s_1,a_1,s^2,a_1,\dotsc,s^H) = \indic{a=b}$.
  At (7) we use that $K$ is a product of probabilities, thus it lies in $[0,1]$,
  and so, since for $A\ge 2$ we have $1-2/A=(A-2)/A\ge 0$, then the quantity is lower
  bounded by using $K=0$.
  At (8) we bring the expectation inside and use the fact that the expectation
  of the indicator is the probability.
  At (9) we realize that the probability of a trajectory does not depend on the
  specific policy (inside $\Pi^{\text{hard}}$) that generated it. Thus, we can
  replace $q$ using $\rho$, and use the complementary of an event of a Binomial
  distribution.
  At (10) we proceed as in Lemma A.21 of
  \cite{rajaraman2020fundamentalimitationlearning} by lower bounding the
  exponential term by $1/e$.
\end{proof}

\section{Additional Results and Proofs for Section \ref{sec: r known}}
\label{apx: sec r known}

In Appendix \ref{apx: policy class}, we collect additional results and proofs
for Section \ref{sec: policy class}, while in Appendices \ref{apx: no
interaction setting}-\ref{apx: known transition one r}, we collect those for,
respectively, Sections \ref{sec: rsbc}-\ref{sec: rskt}. Lastly, in Appendices
\ref{apx: rE in finite set}-\ref{apx: rE linear}, we extend our results on \rsbc
and \rskt result to the settings in which, respectively, $r^E$ is known to
belong to a given finite set $\cR$ and it is known to be linear in a known
feature map $\phi$. Finally, in Appendix \ref{apx: gen bc arb problems}, we
sketch how to extend \rsbc to arbitrary IL problems.

\subsection{On the Class of Policies in Section \ref{sec: policy class}}
\label{apx: policy class}

In Appendix \ref{apx: proofs lemmas policy class}, we report the proof of Lemmas
\ref{lemma: same return distribution} and \ref{lemma: apx policies}, while in
Appendix \ref{apx: too many policies} we discuss the size of the policy class
$\Pi(r^E)$.

\subsubsection{Proofs}\label{apx: proofs lemmas policy class}

\lemmasameretdistrib*
\begin{proof}
Thanks to Lemma \ref{lemma: Psg equal Psg}, we know that:
    \begin{align*}
    \P^{\pi_{r^E}}\Bigr{G_H=g\wedge s_H=s}=
    \P^{\pi^E}\Bigr{G_H=g\wedge s_H=s}\qquad\forall g\in[0,H-1],\forall s\in\cS,
  \end{align*}
  where $G_H\coloneqq\sum_{h'=1}^{H-1} r_{h'}^E(s_{h'},a_{h'})$ denotes the
  random return at stage $H$.
  Then, for any $g'\in[0,H]$, we can write:
  \begin{align*}
    \eta^{\pi_{r^E}}_{r^E}(g')&=\P^{\pi_{r^E}}(G_H+r^E_H(s_H,a_H)=g')\\
    &\markref{(1)}{=}\sum\limits_{g\in\cG_{r^E,H}}\sum\limits_{s\in\cS}
    \P^{\pi_{r^E}}(G_H=g,s_H=s,r^E_H(s,a_H)=g'-g)\\
    &\markref{(2)}{=}\sum\limits_{g\in\cG_{r^E,H}}\sum\limits_{s\in\cS}
    \com{\sum\limits_{a\in\cA:r^E_H(s,a)=g'-g}}
    \P^{\pi_{r^E}}(G_H=g,s_H=s)\com{\pi_{r^E}(a|s,g)}\\
    &\markref{(3)}{=}\sum\limits_{g\in\cG_{r^E,H}}\sum\limits_{s\in\cS}
    {\sum\limits_{a\in\cA:r^E_H(s,a)=g'-g}}
    \com{\P^{\pi^E}}(G_H=g,s_H=s){\pi_{r^E}(a|s,g)}\\
    &=\sum\limits_{g\in\cG_{r^E,H}}\sum\limits_{s\in\cS}
    \com{\P^{\pi^E}(G_H=g,s_H=s)}{\sum\limits_{a\in\cA:r^E_H(s,a)=g'-g}}{\pi_{r^E}(a|s,g)}\\
    &\markref{(4)}{=}\sum\limits_{g\in\cG_{r^E,H}}\sum\limits_{s\in\cS}
    {\P^{\pi^E}(G_H=g,s_H=s)}{\sum\limits_{a\in\cA:r^E_H(s,a)=g'-g}}
    \com{\frac{\P^{\pi^E}(G_H=g,s_H=s,a_H=a)}{\P^{\pi^E}(G_H=g,s_H=s)}}\\
    &=\sum\limits_{g\in\cG_{r^E,H}}\sum\limits_{s\in\cS}
    {\sum\limits_{a\in\cA}}
    \P^{\pi^E}(G_H=g,s_H=s,a_H=a)\com{\indic{r^E_H(s,a)=g'-g}}\\
    &=\P^{\pi^E}\Bigr{\sum_{h'=1}^{H} r_{h'}^E(s_{h'},a_{h'})=g'}\\
    &=\eta^{\pi^E}_{r^E}(g'),
  \end{align*}
  where at (1) we define symbol $\cG_{r,H}\coloneqq\{g\in[0,H-1]\,| \,\exists
\omega\in\Omega_H:\, G(\omega;r^E)=g\}$, at (2) we recognize that, by
definition, $\pi_{r^E}$ takes actions only depending on the current state, stage
and past rewards, and we denote with brevity this fact with $\pi_{r^E}(a|s,g)$,
at (3) we use the aforementioned result from Lemma \ref{lemma: Psg equal Psg},
at (4) we use the definition of $\pi_{r^E}(a|s,g)$ where the denominator is not
0, noting that, in that case, the entire expression evaluates to 0, 
\end{proof}

\lemmasameretdistribwithlesspolicies*
\begin{proof}
  We can write:
  \begin{align*}
    \cW\Bigr{\eta^{\pi_{r_\theta^E}}_{r^E},\eta^{\pi^E}_{r^E}}&\markref{(1)}{\le}
    \cW\Bigr{\eta^{\pi_{r_\theta^E}}_{r^E},\com{\eta^{\pi_{r_\theta^E}}_{r^E_\theta}}}
    + \cW\Bigr{\com{\eta^{\pi_{r_\theta^E}}_{r^E_\theta}},\com{\eta^{\pi^E}_{r^E_\theta}}}
    +\cW\Bigr{\com{\eta^{\pi^E}_{r^E_\theta}},\eta^{\pi^E}_{r^E}}\\
    &\markref{(2)}{\le}
    2H\|r^E-r^E_\theta\|_\infty
    +
    \cW\Bigr{\com{\eta^{\pi_{r_\theta^E}}_{r^E_\theta}},{\eta^{\pi^E}_{r^E_\theta}}}\\
    &\markref{(3)}{\le}
    H\theta
    +
    \cW\Bigr{\com{\eta^{\pi_{r_\theta^E}}_{r^E_\theta}},{\eta^{\pi^E}_{r^E_\theta}}}\\
    &\markref{(4)}{\le}
    H\theta,
  \end{align*}
  where at (1) we apply twice the triangle's inequality, at (2) we apply twice
  Lemma \ref{lemma: different r same p}, at (3) we realize that, by definition
  of $r_\theta^E$, it holds that $\|r^E-r^E_\theta\|_\infty\le\theta/2$, and
  finally, at (4), we apply Lemma \ref{lemma: same return distribution} with
  reward $r^E_\theta$ and expert's policy $\pi^E$.
\end{proof}

\begin{restatable}{lemma}{differentrsamep}
    \label{lemma: different r same p}
    Let $\cM$ be any \MDPr and let $\pi\in\Pi^{\text{NM}}$ be any policy.
    Let $r^1,r^2:\SAH\to[0,1]$ be any pair of reward functions.
    Then, it holds that:
    \begin{align*}
      \cW(\eta^{\pi}_{r^1},
  \eta^{\pi}_{r^2})\le H\|r^1-r^2\|_\infty.
    \end{align*}
\end{restatable}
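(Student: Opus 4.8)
The plan is to leverage the fact that, for a fixed policy $\pi$ in a fixed \MDPr $\cM$, the trajectory distribution $\P^\pi\in\Delta^{\Omega_{H+1}}$ does \emph{not} depend on the reward. Hence the returns $G(\omega;r^1)$ and $G(\omega;r^2)$ are two deterministic functions of the \emph{same} random trajectory $\omega\sim\P^\pi$, with laws $\eta^\pi_{r^1}$ and $\eta^\pi_{r^2}$ respectively, so pushing $\P^\pi$ forward through $\omega\mapsto\bigl(G(\omega;r^1),G(\omega;r^2)\bigr)$ yields an explicit coupling of these two distributions.

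The steps would then be: (i) recall the primal (Kantorovich) form of the $1$-Wasserstein distance on $\RR$, $\cW(p,q)=\inf_\gamma\E_{(X,Y)\sim\gamma}|X-Y|$ where the infimum is over couplings $\gamma$ of $p$ and $q$, which is equivalent to the CDF formula stated in the preliminaries \citep{Villani2008OptimalTO}; (ii) plug in the coupling above to obtain $\cW(\eta^\pi_{r^1},\eta^\pi_{r^2})\le\E_{\omega\sim\P^\pi}|G(\omega;r^1)-G(\omega;r^2)|$; and (iii) bound the integrand pointwise, for $\omega=(s_1,a_1,\dotsc,s_H,a_H)$, via
\begin{align*}
  \bigl|G(\omega;r^1)-G(\omega;r^2)\bigr|
  =\Bigl|\sum_{h=1}^H\bigl(r^1_h(s_h,a_h)-r^2_h(s_h,a_h)\bigr)\Bigr|
  \le\sum_{h=1}^H\bigl|r^1_h(s_h,a_h)-r^2_h(s_h,a_h)\bigr|
  \le H\|r^1-r^2\|_\infty,
\end{align*}
using the definition of $G$, the triangle inequality, and the definition of the sup norm. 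Since this bound is uniform in $\omega$, taking the expectation over $\omega\sim\P^\pi$ preserves it and yields $\cW(\eta^\pi_{r^1},\eta^\pi_{r^2})\le H\|r^1-r^2\|_\infty$, as claimed.

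There is essentially no obstacle: the only point needing care is step (i)—invoking the coupling/primal characterization of $\cW$—together with the observation in the first paragraph that the randomness generating both returns is common. If one prefers to stay entirely within the definitions given in the preliminaries, the same constant follows by noting that $|G(\omega;r^1)-G(\omega;r^2)|\le H\|r^1-r^2\|_\infty$ holds surely under the coupling, which forces $|F^{-1}_{\eta^\pi_{r^1}}(u)-F^{-1}_{\eta^\pi_{r^2}}(u)|\le H\|r^1-r^2\|_\infty$ for every $u\in(0,1)$, and then integrating over $u$ using the quantile representation of $\cW$ (as already used in Appendix~\ref{apx: probl formulation}).
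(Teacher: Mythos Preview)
Your proof is correct. Both your argument and the paper's reduce to the same pointwise bound $|G(\omega;r^1)-G(\omega;r^2)|\le H\|r^1-r^2\|_\infty$ followed by an average over $\omega\sim\P^\pi$; the only difference is the Wasserstein characterization used to get there. You invoke the \emph{primal} (coupling) form $\cW(p,q)=\inf_\gamma\E_\gamma|X-Y|$ and exhibit the natural coupling via the common trajectory, whereas the paper invokes the \emph{dual} (Kantorovich--Rubinstein) form $\cW(p,q)=\sup_{\|f\|_{\mathrm{Lip}}\le1}(\int f\,dp-\int f\,dq)$, rewrites both integrals as sums over trajectories weighted by $\P^\pi$, and then applies $|f(G(\omega;r^1))-f(G(\omega;r^2))|\le|G(\omega;r^1)-G(\omega;r^2)|$. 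Your route is arguably more direct, since it makes the shared-randomness observation explicit up front and avoids the detour through the Lipschitz supremum; the paper's route has the minor advantage of citing only the dual formula already used elsewhere in the appendix. The quantile-based alternative you sketch at the end is also valid but unnecessary once the coupling bound is in hand.
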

\begin{proof}
  We can write:
    \begin{align*}
    \cW(\eta^{\pi}_{r^1},
  \eta^{\pi}_{r^2})
    % &\markref{(1)}{=}
    % \int_{\RR}\Biga{F_{\eta^{\pi}_{r^1}}(x)
    % -F_{\eta^{\pi}_{r^2}}(x)}dx\\
  &\markref{(1)}{=}\sup\limits_{f:\|f\|_{\text{Lip}}\le 1}
    \Bigr{
    \int_{[0,H]}f d\eta^{\pi}_{r^1}
    -\int_{[0,H]}f d\eta^{\pi}_{r^2}}\\
  &\markref{(2)}{=}\sup\limits_{f:\|f\|_{\text{Lip}}\le 1}
    \Bigr{
    \mathop{\com{\sum}}\limits_{g\in\text{supp}(\eta^{\pi}_{r^1})}
    f(g)\eta^{\pi}_{r^1}(g)
    -\mathop{\com{\sum}}\limits_{g\in\text{supp}(\eta^{\pi}_{r^2})}
    f(g)\eta^{\pi}_{r^2}(g)}\\
  &\markref{(3)}{=}\sup\limits_{f:\|f\|_{\text{Lip}}\le 1}
    \Bigr{
    \sum\limits_{g\in\text{supp}(\eta^{\pi}_{r^1})}
    f(g)\com{\sum\limits_{\substack{\omega\in\Omega_{H+1}:\\G(\omega;r^1)=g}}\P^{\pi}(\omega)}\\
    &\qquad\qquad\qquad\qquad\qquad-\sum\limits_{g\in\text{supp}(\eta^{\pi}_{r^2})}
    f(g)\com{\sum\limits_{\substack{\omega\in\Omega_{H+1}:\\G(\omega;r^2)=g}}\P^{\pi}(\omega)}}\\
  &=
  \sup\limits_{f:\|f\|_{\text{Lip}}\le 1}
    \Bigr{
      \com{\sum\limits_{\omega\in\Omega_{H+1}}f(G(\omega;r^1))}
      \P^{\pi}(\omega)\\
    &\qquad\qquad\qquad\qquad\qquad-\com{\sum\limits_{\omega\in\Omega_{H+1}}f(G(\omega;r^2))}
      \P^{\pi}(\omega)}\\
    &=
  \sup\limits_{f:\|f\|_{\text{Lip}}\le 1}
      \sum\limits_{\omega\in\Omega_{H+1}}\P^{\pi}(\omega)\Bigr{f(G(\omega;r^1))
      -f(G(\omega;r^2))}
      \\
    &\le
  \sup\limits_{f:\|f\|_{\text{Lip}}\le 1}
      \sum\limits_{\omega\in\Omega_{H+1}}\P^{\pi}(\omega)\com{\Big|}f(G(\omega;r^1))
      -f(G(\omega;r^2))\com{\Big|}
      \\
    &\markref{(4)}{\le}
      \sum\limits_{\omega\in\Omega_{H+1}}\P^{\pi}(\omega)\com{\Big|G(\omega;r^1)
      -G(\omega;r^2)\Big|}
      \\
    &\markref{(5)}{=}
  \sum\limits_{\omega\in\Omega_{H+1}}\P^{\pi}(\omega)\Big|
      \com{\sum\limits_{h\in\dsb{H}}\Bigr{r^1_{h}(s_{h},a_{h})-r^2_{h}(s_{h},a_{h})}}\Big|
  \\
    &\le
  \sum\limits_{\omega\in\Omega_{H+1}}\P^{\pi}(\omega)
      \sum\limits_{h\in\dsb{H}}\com{\Big|}r^1_{h}(s_{h},a_{h})-r^2_{h}(s_{h},a_{h})\com{\Big|}
  \\
    &\le
  \sum\limits_{\omega\in\Omega_{H+1}}\P^{\pi}(\omega)
      \sum\limits_{h\in\dsb{H}}\com{\max\limits_{(s,a)\in\SA}}\Big|r^1_{h}(s,a)-r^2_{h}(s,a)\Big|
  \\
    &=\sum\limits_{h\in\dsb{H}}\max\limits_{(s,a)\in\SA}\Big|r^1_{h}(s,a)-r^2_{h}(s,a)\Big|
    \cdot 
  \sum\limits_{\omega\in\Omega_{H+1}}\P^{\pi}(\omega)
  \\
  &=\sum\limits_{h\in\dsb{H}}\max\limits_{(s,a)\in\SA}\Big|r^1_{h}(s,a)-r^2_{h}(s,a)\Big|
  \\
  &\le \com{H \max\limits_{(s,a,h)\in\SAH}}\Big|r^1_{h}(s,a)-r^2_{h}(s,a)\Big|,
    \end{align*}
where at (1) we apply the duality formula for the Wasserstein distance (see Eq.
6.3 in Section 6 of \citet{Villani2008OptimalTO}). Note that we interpret the
return distributions $\eta^{\pi}_{r^1}$ and $\eta^{\pi}_{r^2}$ as probability
measures, and that a function $f:\RR\to\RR$ is L-Lipschitz (i.e.,
$\|f\|_{\text{Lip}}=L$) if, for all $x,y\in\RR$, we have $|f(x)-f(y)|\le
L|x-y|$.
At (2) we realize that $\eta^{\pi}_{r^1}$ and $\eta^{\pi}_{r^2}$
have finite supports, since in a tabular MDP with deterministic reward the
number of trajectories is finite, and, thus, also the number of corresponding
returns must be finite. We denote by supp$(\cdot)$ the support of a distribution.
At (3) we use the definition of return distributions, at (4) we use the
definition of Lipschitz functions, at (5) we denote by
$(s_1,a_1,\dotsc,s_H,a_H)$ the state-action trajectory $\omega\in\Omega_{H+1}$,
and we use the definition of operators $G(\cdot;r^1)$ and $G(\cdot;r^2)$.
\end{proof}

\begin{restatable}{lemma}{etanmequaletam}\label{lemma: Psg equal Psg}
Let $\cM_r$ be any MDP and let $\pi\in\Pi^{\text{NM}}$ be any expert's policy.
Let $\pi_r\in\Pi(r)$ be the policy defined as in Eq. \eqref{eq: def policy
imitate same occ meas} for expert's policy $\pi$.
Then, for all $h\in\dsb{H}$, $s\in\cS$ and $g\in[0,h-1]$, it holds that:
  \begin{align*}
    \P^{\pi_r}\Bigr{\sum\limits_{h'=1}^{h-1} r_{h'}(s_{h'},a_{h'})=g\wedge s_h=s}=
    \P^{\pi}\Bigr{\sum\limits_{h'=1}^{h-1} r_{h'}(s_{h'},a_{h'})=g\wedge s_h=s}.
  \end{align*}
\end{restatable}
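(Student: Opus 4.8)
The plan is to prove the claim by induction on $h$. The base case $h=1$ is immediate: at the first stage no reward has yet been collected, so the only admissible value is $g=0$, and both probabilities equal $\indic{s=s_0}$ regardless of the policy, since the initial state is deterministic. For the inductive step, I would assume the identity holds at stage $h$ for all $s\in\cS$ and $g\in[0,h-1]$, and show it at stage $h+1$. The key observation is that both $\pi_r$ and $\pi$ induce the \emph{same} transition dynamics $p$; the only difference between the two processes is how actions are selected. So I would condition on the state-action pair at stage $h$ and the cumulative reward accumulated through stage $h-1$, writing
\begin{align*}
  \P^{\pi_r}\Bigr{\textstyle\sum_{h'=1}^{h} r_{h'}=g' \wedge s_{h+1}=s'}
  = \sum_{s\in\cS}\sum_{g}\sum_{a\in\cA : r_h(s,a)=g'-g}
  \P^{\pi_r}\Bigr{\textstyle\sum_{h'=1}^{h-1} r_{h'}=g \wedge s_h=s}\,
  \pi_r(a\mid s,g)\, p_h(s'\mid s,a),
\end{align*}
where the sum over $g$ ranges over $\cG_{r,h}$, and analogously for $\pi^E$ (with $\pi$ in place of $\pi_r$ and the history-dependent $\pi(a\mid s,\omega)$ which, unlike $\pi_r$, genuinely depends on the full trajectory $\omega$).

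The two ingredients that make the right-hand sides match are: (i) the inductive hypothesis, which equates $\P^{\pi_r}(\sum_{h'=1}^{h-1} r_{h'}=g \wedge s_h=s)$ with $\P^{\pi}(\sum_{h'=1}^{h-1} r_{h'}=g \wedge s_h=s)$; and (ii) the definition of $\pi_r$ in Eq.~\eqref{eq: def policy imitate same occ meas}, which is engineered precisely so that $\pi_r(a\mid s,g)$ equals the \emph{averaged} conditional probability $\P^{\pi}(a_h=a \mid s_h=s, \sum_{h'=1}^{h-1} r_{h'}=g)$. Substituting (ii), the product $\P^{\pi_r}(\sum r_{h'}=g \wedge s_h=s)\,\pi_r(a\mid s,g)$ telescopes into $\P^{\pi}(\sum_{h'=1}^{h-1} r_{h'}=g \wedge s_h=s \wedge a_h=a)$ — using the inductive hypothesis for the state/cumulative-reward marginal and the definition of $\pi_r$ for the conditional. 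On the $\pi$ side, one does the same expansion of $\P^{\pi}(\sum_{h'=1}^{h} r_{h'}=g' \wedge s_{h+1}=s')$, except there the joint $\P^{\pi}(\sum_{h'=1}^{h-1} r_{h'}=g \wedge s_h=s \wedge a_h=a)$ arises directly by marginalizing the trajectory-level distribution over all histories $\omega$ consistent with $(s,g)$ — this is the step where one uses that $\P^{\pi}$ factorizes through the history, and that summing $\pi(a\mid s,\omega)$ weighted by $\P^\pi(\omega, s_h=s)$ over all $\omega$ with $G(\omega;r)=g$ recovers exactly the averaged quantity defining $\pi_r$. Matching the two, both reduce to $\sum_{s,g,a:r_h(s,a)=g'-g}\P^{\pi}(\sum_{h'=1}^{h-1} r_{h'}=g, s_h=s, a_h=a)\, p_h(s'\mid s,a)$, which is the desired marginal at stage $h+1$ under $\pi$.

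The main obstacle, and the only genuinely delicate point, is step (ii): verifying that aggregating the history-dependent expert policy $\pi$ over all trajectories sharing a given $(s_h, G(\omega;r))$ yields exactly $\pi_r$. This requires carefully tracking that $\P^{\pi}(s_h=s, a_h=a, \sum_{h'=1}^{h-1} r_{h'}=g) = \sum_{\omega\in\Omega_h : G(\omega;r)=g, \text{ends at }s} \P^\pi(\omega)\,\pi(a\mid s,\omega)$ — i.e., that the event "$\sum_{h'=1}^{h-1} r_{h'}=g$" is exactly the union over the relevant $\omega$'s and that these are disjoint — and then dividing by the state-cumulative-reward marginal to land on Eq.~\eqref{eq: def policy imitate same occ meas}; the degenerate case where that marginal is zero must be handled separately (there the event has probability zero on both sides, so the $1/A$ fallback is irrelevant). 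Everything else is bookkeeping with the chain rule and reindexing sums.
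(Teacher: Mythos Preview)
Your proposal is correct and follows essentially the same approach as the paper: induction on $h$, with the inductive step decomposing the joint probability at the next stage via the chain rule, using Markovianity of $p$, the fact that $\pi_r$ depends only on $(s,g)$, the inductive hypothesis, and then the definition of $\pi_r$ from Eq.~\eqref{eq: def policy imitate same occ meas} to collapse the expression back to $\P^\pi$. The paper's writeup expands the $\pi_r$ side explicitly over trajectories $\omega\in\Omega_{h-1}$ before invoking that $\pi_r(a\mid\omega,s)$ is constant across $\omega$ with the same return, whereas you absorb that step into the single line ``$\pi_r(a\mid s,g)$ equals the averaged conditional $\P^\pi(a_h=a\mid s_h=s,\sum r=g)$''; both arrive at the same identity, and your handling of the zero-denominator case matches the paper's remark at that step.
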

\begin{proof}
  For simplicity, let us denote by $G_h$ the \emph{random} return at time step
  $h$ under reward $r$. Formally:
  \begin{align*}
    G_h\coloneqq\sum\limits_{h'=1}^{h-1} r_{h'}(s_{h'},a_{h'})\quad\forall h\in\dsb{H}.
  \end{align*}
  In this way, the claim of the theorem can be easily rewritten as:
  \begin{align*}
    \P^{\pi_r}\Bigr{G_h=g\wedge s_h=s}=
    \P^{\pi}\Bigr{G_h=g\wedge s_h=s}.
  \end{align*}
We prove the result by induction.
  Let us begin with the base case: $h=1$. For all $s\in\cS$ and $g\in\{0\}$, we have:
  \begin{align*}
    \P^{\pi_r}\Bigr{G_1=g\wedge s_1=s}=\indic{g=0}\indic{s=s_0}=
    \P^{\pi}\Bigr{G_1=g\wedge s_1=s},
  \end{align*}
  where we noticed that, for $h=1$, no action is taken yet.
  Now, let us consider any stage $h\in\{2,3,\dotsc,H\}$, and let us make the
  induction hypothesis that, for all $h'\in\dsb{h-1}$, for all $s\in\cS$ and
  $g\in[0,h'-1]$, it holds that:
  \begin{align*}
    \P^{\pi_r}\Bigr{G_{h'}=g\wedge s_{h'}=s}=
    \P^{\pi}\Bigr{G_{h'}=g\wedge s_{h'}=s}.
  \end{align*}
  Then, for any $s'\in\cS$ and $g'\in[0,h-1]$, we can write:
  \begin{align*}
    &\P^{\pi_r}(G_h=g'\wedge s_h=s')\\
    &\qquad\qquad\markref{(1)}{=}
    \sum\limits_{\substack{\omega\in\Omega_{h-1},(s,a)\in\SA:\\G(\omega;r)+r_{h-1}(s,a)=g'}}
    \P^{\pi_r}(\omega_{h-1}=\omega\wedge s_{h-1}=s\wedge a_{h-1}=a\wedge s_h=s')\\
    &\qquad\qquad\markref{(2)}{=}\com{\sum\limits_{g\in\cG_{r,h-1}}
    \sum\limits_{\substack{\omega\in\Omega_{h-1}:\\
    G(\omega;r)=g}}}
    \com{\sum\limits_{\substack{(s,a)\in\SA:\\
    r_{h-1}(s,a)=g'-g}}}\P^{\pi_r}(\omega_{h-1}=\omega\wedge s_{h-1}=s\wedge a_{h-1}=a\wedge s_h=s')\\
    &\qquad\qquad\markref{(3)}{=}
    \sum\limits_{g\in\cG_{r,h-1}}
    \sum\limits_{\substack{\omega\in\Omega_{h-1}:\\
    G(\omega;r)=g}}
    \sum\limits_{\substack{(s,a)\in\SA:\\
    r_{h-1}(s,a)=g'-g}}\com{\P^{\pi_r}(\omega_{h-1}=\omega\wedge s_{h-1}=s)}\\
    &\qquad\qquad\qquad\qquad
    \com{\cdot\P^{\pi_r}(a_{h-1}=a|\omega,s)\P^{\pi_r}(s_h=s'|\omega,s,a)}\\
    &\qquad\qquad\markref{(4)}{=} \sum\limits_{g\in\cG_{r,h-1}}
    \sum\limits_{\substack{\omega\in\Omega_{h-1}:\\
    G(\omega;r)=g}}
    \sum\limits_{\substack{(s,a)\in\SA:\\
    r_{h-1}(s,a)=g'-g}}\P^{\pi_r}(\omega_{h-1}=\omega\wedge s_{h-1}=s)\\
    &\qquad\qquad\qquad\qquad
    \cdot\P^{\pi_r}(a_{h-1}=a|\omega,s)\com{p_{h-1}(s'|s,a)}\\
    &\qquad\qquad\markref{(5)}{=} \sum\limits_{g\in\cG_{r,h-1}}
    \sum\limits_{\substack{\omega\in\Omega_{h-1}:\\
    G(\omega;r)=g}}
    \sum\limits_{\substack{(s,a)\in\SA:\\
    r_{h-1}(s,a)=g'-g}}\P^{\pi_r}(\omega_{h-1}=\omega\wedge s_{h-1}=s)\\
    &\qquad\qquad\qquad\qquad
    \cdot\com{\pi_r(a|\omega,s)}p_{h-1}(s'|s,a)\\
    &\qquad\qquad\markref{(6)}{=}\sum\limits_{g\in\cG_{r,h-1}}
    \com{\sum\limits_{\substack{(s,a)\in\SA:\\
    r_{h-1}(s,a)=g'-g}}\sum\limits_{\substack{\omega\in\Omega_{h-1}:\\
    G(\omega;r)=g}}}\P^{\pi_r}(\omega_{h-1}=\omega\wedge s_{h-1}=s)\\
    &\qquad\qquad\qquad\qquad
    \cdot\pi_r(a|\omega,s)p_{h-1}(s'|s,a)\\
    &\qquad\qquad\markref{(7)}{=}\sum\limits_{g\in\cG_{r,h-1}}
    \sum\limits_{\substack{(s,a)\in\SA:\\
    r_{h-1}(s,a)=g'-g}}\com{\pi_r(a|g,s)p_{h-1}(s'|s,a)}\sum\limits_{\substack{\omega\in\Omega_{h-1}:\\
    G(\omega;r)=g}}
    % \\&\qquad\qquad\qquad\qquad
    \P^{\pi_r}(\omega_{h-1}=\omega\wedge s_{h-1}=s)\\
    &\qquad\qquad=\sum\limits_{g\in\cG_{r,h-1}}
    \sum\limits_{\substack{(s,a)\in\SA:\\
    r_{h-1}(s,a)=g'-g}}\pi_r(a|g,s)p_{h-1}(s'|s,a)
    % \\&\qquad\qquad\qquad\qquad
    \com{\P^{\pi_r}(G_{h-1}=g\wedge s_{h-1}=s)}\\
    &\qquad\qquad\markref{(8)}{=}\sum\limits_{g\in\cG_{r,h-1}}
    \sum\limits_{\substack{(s,a)\in\SA:\\
    r_{h-1}(s,a)=g'-g}}\pi_r(a|g,s)p_{h-1}(s'|s,a)
   \com{\P^{\pi}}(G_{h-1}=g\wedge s_{h-1}=s)\\
    &\qquad\qquad\markref{(9)}{=}\sum\limits_{g\in\cG_{r,h-1}}
    \sum\limits_{\substack{(s,a)\in\SA:\\
    r_{h-1}(s,a)=g'-g}}
    \com{\frac{\P^{\pi}(G_{h-1}=g\wedge s_{h-1}=s\wedge a_{h-1}=a)}{
        \P^{\pi}(G_{h-1}=g\wedge s_{h-1}=s)
    }}\\
    &\qquad\qquad\qquad\qquad
    \cdot p_{h-1}(s'|s,a)\P^{\pi}(G_{h-1}=g\wedge s_{h-1}=s)\\
    &\qquad\qquad=\sum\limits_{g\in\cG_{r,h-1}}
    \sum\limits_{\substack{(s,a)\in\SA:\\
    r_{h-1}(s,a)=g'-g}}
    \P^{\pi}(G_{h-1}=g\wedge s_{h-1}=s\wedge a_{h-1}=a)
     p_{h-1}(s'|s,a)\\
    &\qquad\qquad=\sum\limits_{g\in\cG_{r,h-1}}\sum\limits_{\substack{(s,a)\in\SA:\\
    r_{h-1}(s,a)=g'-g}}
    \com{\P^{\pi}(G_{h-1}=g\wedge s_{h-1}=s\wedge a_{h-1}=a\wedge s_h=s')}\\
    &\qquad\qquad=\sum\limits_{g\in\cG_{r,h-1}}
    \com{\P^{\pi}(G_{h-1}=g\wedge r_{h-1}(s_{h-1},a_{h-1})=g'-g\wedge s_h=s')}\\
    &\qquad\qquad=
    \P^{\pi}(G_h=g'\wedge s_h=s'),
  \end{align*}
  where at (1) we use symbol $\omega_{h''}$ to denote the random trajectory long
  $h''$ stages, i.e., whose realizations belong to $\Omega_{h''}$, for any
  $h''\in\dsb{H}$.
  At (2) we define symbols $\cG_{r,h}\coloneqq\{g\in[0,h-1]\,| \,\exists
\omega\in\Omega_h:\, G(\omega;r)=g\}$ for any $h\in\dsb{H}$, denoting the set of
possible values of cumulative reward obtainable at stage $h$, and we sum over
all such values (note that they are finite in tabular MDPs with deterministic
rewards), at (3) we use the chain rule of conditional probabilities, at (4) we
use the Markovianity of the environment, at (5) we note that
$\P^{\pi_r}(a_{h-1}=a|\omega,s)$ actually is $\pi_r(a|\omega,s)$, at (6) we
exchange the two summations, at (7) we recognize that, by definition,
$\pi_r(a|\omega,s)$ takes on the same value for all the trajectories $\omega$
with the same value of return, and thus we can bring this quantity outside the
summation over the $\omega$. We use symbol $\pi_r(a|g,s)$ to denote this fact
for brevity. We do the same also for $p_{h-1}(s'|s,a)$.
  At (8) we use the induction hypothesis, at (9) we replace $\pi_r(a|g,s)$ with
  its definition when $\P^{\pi}(G_{h-1}=g\wedge s_{h-1}=s)>0$ as in
  the opposite case the entire formula takes on value zero.
\end{proof}

\subsubsection{$\Pi(r^E)$ is too Large for some $r^E$}
\label{apx: too many policies}

Consider any reward $r^E:\SAH\to[0,1]$ that assigns, to every possible
trajectory $\omega\in\Omega$, a different return value
$G(\omega;r^E)\in[0,H-1]$. Observe that rewards $r^E$ of this kind exist as the
number of possible trajectories $|\Omega|=\cO((SA)^{H-1})$ is finite, while the
set of possible return values that we can assign to each $s,a,h$ is infinite
(continuous) $[0,1]$.
An example of a reward of this kind was provided in the proof of Theorem
\ref{thr: lower bound TV}:
\begin{align*}
  r^E_h(s,a)=10^{-hSA-x_s(s)A-x_a(a)}\qquad\forall (s,a,h)\in\SAH,
\end{align*}
where $x_s:\cS\to\Nat,x_a:\cA\to\Nat$ are arbitrary injective functions mapping
the set of states and actions to the set of natural numbers. Intuitively, given
the return $G(\omega;r^E)$ of any trajectory:
\begin{align*}
  \omega=(s_1,a_1,s_2,a_2,\dotsc,s_{h-1},a_{h-1}),
\end{align*}
we can easily reconstruct which $s,a,h$ actually belong to $\omega$ by checking
which decimal numbers in $G(\omega;r^E)$ are ``flagged''.

Given a reward $r^E$ of this kind, we have that $|\cG_{r^E}|$ is in the same
order as $|\Omega|=\cO((SA)^{H-1})$, and so the set of policies $\Pi(r^E)$
defined in Section \ref{sec: policy class} reduces to the whole set of
non-Markovian policies $\Pi^{\text{NM}}$, with all its disadvantages.

\subsection{No Interaction Setting}\label{apx: no interaction setting}

In Appendix \ref{apx: proof thr rsbc}, we prove Theorem \ref{thr: rsbc}, while
in Appendix \ref{apx: more discussion rsbc} we provide
additional discussion on Theorem \ref{thr: rsbc}.

\subsubsection{Proof of Theorem \ref{thr: rsbc}}\label{apx:
proof thr rsbc}

\thrbcsamplecompknownr*
\begin{proof}
We can write:
\begin{align*}
  \cW\Bigr{\eta^{\pi^E}_{r^E},
  \eta^{\widehat{\pi}}_{r^E}}&\markref{(1)}{\le}
  \cW\Bigr{\eta^{\pi^E}_{r^E},
    \com{\eta^{\pi_{r^E_\theta}}_{r^E}}}
  +
  \cW\Bigr{\com{\eta^{\pi_{r^E_\theta}}_{r^E}},
  \com{\eta^{\pi_{r^E_\theta}}_{r^E_\theta}}}
  +
  \cW\Bigr{\com{\eta^{\pi_{r^E_\theta}}_{r^E_\theta}},
\com{\eta^{\widehat{\pi}}_{r^E_\theta}}}
  + \cW\Bigr{\com{\eta^{\widehat{\pi}}_{r^E_\theta}},
  \eta^{\widehat{\pi}}_{r^E}}\\
&\markref{(2)}{\le}
  \com{2H\theta}
  + \cW\Bigr{\eta^{\pi_{r^E_\theta}}_{r^E_\theta},
  \eta^{\widehat{\pi}}_{r^E_\theta}}\\
  &\markref{(3)}{\le}
  2H\theta
  +\com{H\Big\|}\eta^{\pi_{r^E_\theta}}_{r^E_\theta}-
  \eta^{\widehat{\pi}}_{r^E_\theta}\com{\Big\|_1}\\
  &\markref{(4)}{\le}
  2H\theta+H\com{\sum\limits_{h\in\dsb{H}}\sum\limits_{g\in \cG_{r^E_\theta,h}}\sum\limits_{s\in\cS}
  \P^{\pi_{r^E_\theta}}(G_h=g\wedge s_{h}=s)
  \Bign{\pi_{r^E_\theta,h}(\cdot|s,g)-\widehat{\pi}_{h}(\cdot|s,g)}_1}\\
  &\markref{(5)}{=}
  2H\theta+H\sum\limits_{h\in\dsb{H}}\sum\limits_{g\in \cG_{r^E_\theta,h}}\sum\limits_{s\in\cS}
\com{\P^{\pi^E}}(G_h=g\wedge s_{h}=s)
  \Bign{\pi_{r^E_\theta,h}(\cdot|s,g)-\widehat{\pi}_{h}(\cdot|s,g)}_1\\
  &\markref{(6)}{\le}
  2H\theta+H\epsilon',
\end{align*}
where at (1) we apply triangle's inequality, at (2) we apply Lemma \ref{lemma:
apx policies} and Lemma \ref{lemma: different r same p} twice, at (3) we use
Particular Case 6.13 of \citet{Villani2008OptimalTO}, which tells us that we can
upper bound the Wasserstein distance between two distributions supported on set
$\cX$ by the diameter of $\cX$ ($\max_{x,x'\in\cX}|x-x'|$) times the one norm
between the two distributions. Since $\cX=[0,H]$ in our case, we get the
expression written above.
At (4) we apply Lemma \ref{lemma: error propagation} with the notation defined
in that lemma, observing that policies $\pi_{r^E_\theta}$ and $\widehat{\pi}$
satisfy the hypothesis for reward $r^E_\theta$.
At (5) we use Lemma \ref{lemma: Psg equal Psg} observing that the random return
$G_h$ is in terms of reward $r^E_\theta$, and recalling the definition of
$\pi_{r^E_\theta}$.
Lastly, at (6) we apply Lemma \ref{lemma: concentration} with accuracy $\epsilon'$.

The result follows by imposing that $\epsilon'\le\frac{\epsilon}{2H}$ and
$2H\theta\le\frac{\epsilon}{2}$, which can be achieved by taking
$\epsilon'=\frac{\epsilon}{2H}$ and $\theta=\frac{\epsilon}{4H}$, and by
observing that:
\begin{align*}
  \overline{\cG}&\coloneqq \sum_{h\in\dsb{H}}|\cG_{r^E_\theta,h}|\\
  &\le \sum_{h\in\dsb{H}}|\cY^\theta_h|\\
  &= \sum_{h\in\dsb{H}}|\{0,\theta,2\theta,\dotsc,
  \floor{(h-1)/\theta}\theta\}|\\
  &\le \sum_{h\in\dsb{H}}(1+(h-1)/\theta)\\
  &= H(1-1/\theta)+\frac{1}{\theta}\sum_{h\in\dsb{H}}h\\
  &\markref{(10)}{=} H(1-1/\theta)+\frac{H(H+1)}{2\theta}\\
  &\le \frac{H^2}{\theta}\\
  &\markref{(11)}{\le} \frac{4H^3}{\epsilon},
\end{align*}
where at (10) we used the formula for arithmetic sums, and at (11) we used the
previous choice $\theta=\frac{\epsilon}{4H}$.

Replacing into the number of samples in Lemma \ref{lemma: concentration} (and
also $\epsilon'=\frac{\epsilon}{2H}$) we get the result:
\begin{align*}
  N\ge \frac{3072 SH^6\ln\frac{8SH^3}{\delta\epsilon}}
{\epsilon^3}\biggr{
  \ln\frac{8SH^3}{\delta\epsilon}+(A-1)
  \ln\Bigr{\frac{2048eSH^6\ln\frac{8SH^3}{\delta\epsilon}}
{\epsilon^3}}
}.
\end{align*}
By using $\widetilde{\cO}$ notation to hide logarithmic terms in
$S,A,H,\frac{1}{\epsilon},\ln\frac{1}{\delta}$, we get the result.
\end{proof}

\begin{restatable}[Error Propagation]{lemma}{errorpropagation}
\label{lemma: error propagation}
Let $\cM_r$ be any MDP.
For any pair of policies $\pi,\pi'\in\Pi^{\text{NM}}$ such that, for all
$h\in\dsb{H}$, $a\in\cA$, $s\in\cS$ and $\omega,\omega'\in\Omega_h$ with
$G(\omega;r)=G(\omega';r)$:
\begin{align*}
  \pi(a|\omega,s)=\pi(a|\omega',s)\qquad\wedge\qquad \pi'(a|\omega,s)=\pi'(a|\omega',s),
\end{align*}
it holds that:
\begin{align*}
  \Big\|\eta^{\pi}_{r}-
  \eta^{\pi'}_{r}\Big\|_1
  &\le \sum\limits_{h\in\dsb{H}}\sum\limits_{g\in \cG_{r,h}}\sum\limits_{s\in\cS}
  \P^{\pi}(G_h=g\wedge s_{h}=s)
  \Bign{\pi_{h}(\cdot|s,g)-\pi_{h}'(\cdot|s,g)}_1,
\end{align*}
where $\cG_{r,h}\coloneqq\{g\in[0,h-1]\,| \,\exists \omega\in\Omega_h:\,
G(\omega;r)=g\}$ for all $h\in\dsb{H+1}$,
$G_h\coloneqq\sum_{h'=1}^{h-1}r_{h'}(s_{h'},a_{h'})$ denotes the \emph{random}
return at stage $h$, and $\pi_{h}(\cdot|s,g)$ and $\pi_{h}'(\cdot|s,g)$ denote
the unique probability with which the policies $\pi$ and $\pi'$ prescribe
actions in $s$ at $h$ under any trajectory $\omega\in\Omega_h$ with
$G(\omega;r)=g$. 
\end{restatable}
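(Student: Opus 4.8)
The plan is to run a one-step policy-swap (telescoping) argument on the Markov chain induced on the \emph{augmented} state space, whose states are pairs $(s,g)\in\cS\times\cG_{r}$ recording the current environment state together with the cumulative reward collected so far. The hypothesis on $\pi$ and $\pi'$—that their action probabilities depend on the past trajectory $\omega$ only through $G(\omega;r)$ (and the current state and stage)—is exactly what makes this augmented chain Markovian: each policy $\pi$ induces well-defined transition kernels $T_h^\pi$ on $\cS\times\cG_{r}$, where from $(s,g)$ one draws $a\sim\pi_h(\cdot\,|\,s,g)$, updates the cumulative reward deterministically to $g+r_h(s,a)$, and transitions $s\mapsto s'$ according to $p_h(\cdot\,|\,s,a)$. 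Writing $\mu_h^\pi(g,s)\coloneqq\P^\pi(G_h=g\wedge s_h=s)$ for the law of $(G_h,s_h)$, the hypothesis yields $\P^\pi(a_h=a\,|\,G_h=g,s_h=s)=\pi_h(a\,|\,s,g)$ on positive-mass pairs (exactly as in the proof of Lemma~\ref{lemma: Psg equal Psg}), hence $\mu_{h+1}^\pi=T_h^\pi\mu_h^\pi$ for $h\le H-1$, and $\eta^\pi_r$ is the image of $\mu_H^\pi$ under the last-step kernel $\overline{T}_H^\pi$ that draws $a_H\sim\pi_H(\cdot\,|\,s_H,G_H)$, adds $r_H(s_H,a_H)$, and then forgets the state coordinate.

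First I would bound $D_h\coloneqq\|\mu_h^\pi-\mu_h^{\pi'}\|_1$ by induction on $h$. The base case $D_1=0$ is immediate since $\mu_1^\pi=\mu_1^{\pi'}=\delta_{(0,s_0)}$. For the inductive step I would decompose $\mu_{h+1}^\pi-\mu_{h+1}^{\pi'}=(T_h^\pi-T_h^{\pi'})\mu_h^\pi+T_h^{\pi'}(\mu_h^\pi-\mu_h^{\pi'})$ and use that any Markov kernel is an $\ell_1$-nonexpansion on signed measures, obtaining $D_{h+1}\le\|(T_h^\pi-T_h^{\pi'})\mu_h^\pi\|_1+D_h$. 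For the local term, I would expand $(T_h^\pi-T_h^{\pi'})\mu_h^\pi$ coordinatewise, apply the triangle inequality, and then use that for fixed $(s,a)$ the updated cumulative reward $g+r_h(s,a)$ is a single value while $\sum_{s'}p_h(s'\,|\,s,a)=1$, which collapses the sums over $s'$ and the new reward value and gives
\[
\|(T_h^\pi-T_h^{\pi'})\mu_h^\pi\|_1\le\sum_{g\in\cG_{r,h}}\sum_{s\in\cS}\mu_h^\pi(g,s)\,\|\pi_h(\cdot\,|\,s,g)-\pi_h'(\cdot\,|\,s,g)\|_1,
\]
i.e.\ precisely the stage-$h$ summand in the claim; the identical computation applies to $\overline{T}_H^\pi$.

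Finally I would chain the pieces: $\|\eta^\pi_r-\eta^{\pi'}_r\|_1=\|\overline{T}_H^\pi\mu_H^\pi-\overline{T}_H^{\pi'}\mu_H^{\pi'}\|_1\le\|(\overline{T}_H^\pi-\overline{T}_H^{\pi'})\mu_H^\pi\|_1+D_H$ (splitting again and using nonexpansiveness), and unroll $D_H\le\sum_{h=1}^{H-1}\|(T_h^\pi-T_h^{\pi'})\mu_h^\pi\|_1$ from the induction. Combining the stage-$H$ term with the first $H-1$ terms and plugging in the per-stage bound above yields $\|\eta^\pi_r-\eta^{\pi'}_r\|_1\le\sum_{h\in\dsb{H}}\sum_{g\in\cG_{r,h}}\sum_{s\in\cS}\P^\pi(G_h=g\wedge s_h=s)\,\|\pi_h(\cdot\,|\,s,g)-\pi_h'(\cdot\,|\,s,g)\|_1$, which is the stated bound.

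The main obstacle I expect is bookkeeping around the augmented chain rather than any hard inequality: one must check that the hypothesis genuinely makes the kernels $T_h^\pi$ well defined (so that $\pi_h(\cdot\,|\,s,g)$ is unambiguous on the support of $\mu_h^\pi$, the zero-mass case being irrelevant), and be careful that the last stage contributes a reward accumulation but no transition, so that $\eta^\pi_r$—a distribution on $[0,H]$—is obtained by pushing $\mu_H^\pi$ forward one step and marginalizing out the state, not by reading off a non-existent $\mu_{H+1}^\pi$. Everything else is routine manipulation of $\ell_1$ norms and stochastic kernels.
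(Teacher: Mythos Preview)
Your proposal is correct and takes essentially the same approach as the paper. The paper carries out exactly the same one-step telescoping on the augmented chain $(s_h,G_h)$: it adds and subtracts $\P^{\pi}(\omega_{h-2}=\omega\wedge s_{h-1}=s)\,\pi'(a|\omega,s)$ (your $(T_h^\pi-T_h^{\pi'})\mu_h^\pi+T_h^{\pi'}(\mu_h^\pi-\mu_h^{\pi'})$ split), bounds the second piece by the previous-stage discrepancy via what amounts to the $\ell_1$-nonexpansiveness of $T_h^{\pi'}$ (written out explicitly by summing $p_{h-1}(s'|s,a)$ and $\pi'$ to $1$), and bounds the first piece by the per-stage policy gap weighted by $\mu_h^\pi$; the only difference is that you phrase it in kernel language whereas the paper unrolls every sum by hand.
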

\begin{proof}
  To prove the result, we first demonstrate by induction that, for
  all $h\in\dsb{2,H}$, it holds that:
  \begin{align*}
  &\sum\limits_{\substack{g\in \cG_{r,h}}}\sum\limits_{s\in\cS}\Big|
  \P^{\pi}(G_h=g\wedge s_{h}=s)-
  \P^{\pi'}(G_h=g\wedge s_{h}=s)
  \Big|\\
  &\qquad\qquad
  \le \sum\limits_{h'\in\dsb{h-1}}\sum\limits_{\substack{g\in \cG_{r,h'}}}\sum\limits_{s\in\cS}
  \P^{\pi}(G_{h'}=g\wedge s_{h'}=s)
  \Bign{\pi_{h'}(\cdot|s,g)-\pi_{h'}'(\cdot|s,g)}_1.
\end{align*}
We remark that, by hypothesis, both $\pi$ and $\pi'$ prescribe the same actions
when faced with trajectories $\omega,\omega'$ with the same return
$G(\omega;r)=G(\omega';r)$ that are followed by the same state $s$.

We begin with the base case $h=2$. We can write:
\begin{align*}
  &\sum\limits_{\substack{g\in \cG_{r,2}}}\sum\limits_{s'\in\cS}\Big|
  \P^{\pi}(G_2=g\wedge s_{2}=s')-
  \P^{\pi'}(G_2=g\wedge s_{2}=s')
  \Big|\\
  &\qquad\qquad=
  \sum\limits_{\substack{g\in \cG_{r,2}}}\sum\limits_{s'\in\cS}\Big|
  \P^{\pi}(\com{r_1(s_1,a_1)}=g\wedge s_{2}=s')-
  \P^{\pi'}(\com{r_1(s_1,a_1)}=g\wedge s_{2}=s')
  \Big|\\
  &\qquad\qquad=
  \sum\limits_{\substack{g\in \cG_{r,2}}}\sum\limits_{s'\in\cS}\Big|
  \mathop{\com{\sum}}\limits_{\substack{\com{(s,a)\in\SA:}\\\com{r_1(s,a)=g}}}\Big(
  \com{\P^{\pi}(s_{2}=s'|s,a)\P^{\pi}(a_1=a|s)\P^{\pi}(s_1=s)}\\
  &\qquad\qquad\qquad\qquad-
  \com{\P^{\pi'}(s_{2}=s'|s,a)\P^{\pi'}(a_1=a|s)\P^{\pi'}(s_1=s)}\Big)
  \Big|\\
  &\qquad\qquad\markref{(1)}{=}
  \sum\limits_{\substack{g\in \cG_{r,2}}}\sum\limits_{s'\in\cS}\Big|
  \sum\limits_{\substack{(s,a)\in\SA:\\r_1(s,a)=g}}\Big(
  \com{p(s'|s,a)\indic{s=s_0}\pi(a|s)}\\
  &\qquad\qquad\qquad\qquad-
  \com{p(s'|s,a)\indic{s=s_0}\pi'(a|s)}\Big)
  \Big|\\
  &\qquad\qquad\markref{(2)}{\le}
  \sum\limits_{\substack{g\in \cG_{r,2}}}\mathop{\com{\sum}}\limits_{
    \substack{\com{a\in\cA:}\\\com{r_1(s_0,a)=g}}}
  \sum\limits_{s'\in\cS}p(s'|\com{s_0},a)\com{\Big|}
  \pi(a|\com{s_0})-\pi'(a|\com{s_0})
  \com{\Big|}\\
  &\qquad\qquad=
  \sum\limits_{\substack{g\in \cG_{r,2}}}\sum\limits_{\substack{a\in\cA:\\r_1(s_0,a)=g}}
  \Big|\pi(a|s_0)-\pi'(a|s_0)\Big|\\
  &\qquad\qquad=
  \com{\sum\limits_{a\in\cA}}
  \Big|\pi(a|s_0)-\pi'(a|s_0)\Big|\\
  &\qquad\qquad=
  \Big\|\pi(\cdot|s_0)-\pi'(\cdot|s_0)\Big\|_1\\
  &\qquad\qquad\markref{(3)}{=}
  \sum\limits_{h'\in\dsb{1}}\sum\limits_{\substack{g\in \cG_{r,h'}}}\sum\limits_{s\in\cS}
  \P^{\pi}(G_{h'}=g\wedge s_{h'}=s)
  \Bign{\pi_{h'}(\cdot|s,g)-\pi_{h'}'(\cdot|s,g)}_1,
\end{align*}
where at (1) we realize that in $\cM_r$ the initial state is always $s_0$, and
that the transition model is Markovian and independent of the policy, at (2) we
apply triangle's inequality and keep only $s_0$ because of the indicator, and at
(3) we have simply rewritten the expression in a more convenient way for proving
the result (note that $\dsb{1}=\{1\}$ and $\cG_{r',1}=\{0\}$ and
$\P^{\pi}(G_{h'}=g\wedge s_{1}=s)=\indic{g=0\wedge s=s_0}$).

Now, let us consider any stage $h\in\dsb{3,H}$. Let us make the inductive
hypothesis that:
  \begin{align*}
  &\sum\limits_{\substack{g\in \cG_{r,h-1}}}\sum\limits_{s\in\cS}\Big|
  \P^{\pi}(G_{h-1}=g\wedge s_{h-1}=s)-
  \P^{\pi'}(G_{h-1}=g\wedge s_{h-1}=s)
  \Big|\\
  &\qquad\qquad
  \le \sum\limits_{h'\in\dsb{h-2}}\sum\limits_{\substack{g\in \cG_{r,h'}}}\sum\limits_{s\in\cS}
  \P^{\pi}(G_{h'}=g\wedge s_{h'}=s)
  \cdot\Bign{\pi_{h'}(\cdot|s,g)-\pi_{h'}'(\cdot|s,g)}_1.
\end{align*}
Then, we can write (we use symbol $\omega_h$ to denote the random trajectory
$(s_1,a_1,\dotsc,s_h,a_h)$ up to stage $h$):
\begin{align*}
  &\sum\limits_{\substack{g'\in \cG_{r,h}}}\sum\limits_{s'\in\cS}\Big|
  \P^{\pi}(G_h=g'\wedge s_{h}=s')-
  \P^{\pi'}(G_h=g'\wedge s_{h}=s')
  \Big|\\
  &\qquad\qquad=
  \sum\limits_{g'\in \cG_{r,h}}\sum\limits_{s'\in\cS}
  \Big|
    \com{\sum\limits_{g\in \cG_{r,h-1}}}
    \Big(\\
  &\qquad\qquad\qquad\qquad
      \P^{\pi}(\com{G_{h-1}=g\wedge r_{h-1}(s_{h-1},a_{h-1})=g'-g} \wedge s_{h}=s')\\
  &\qquad\qquad\qquad\qquad
      -\P^{\pi'}(\com{G_{h-1}=g\wedge r_{h-1}(s_{h-1},a_{h-1})=g'-g} \wedge s_{h}=s')
    \Big)\Big|\\
  &\qquad\qquad=
  \sum\limits_{g'\in \cG_{r,h}}\sum\limits_{s'\in\cS}
  \Big|
    \sum\limits_{g\in \cG_{r,h-1}}
    \mathop{\com{\sum}}\limits_{\substack{\com{\omega\in\Omega_{h-1}:}\\\com{G(\omega;r)=g}}}
    \mathop{\com{\sum}}\limits_{\substack{\com{(s,a)\in\SA:}\\\com{r_{h-1}(s,a)=g'-g}}}
    \Big(\\
  &\qquad\qquad\qquad\qquad
      \P^{\pi}(\com{\omega_{h-2}=\omega\wedge s_{h-1}=s\wedge a_{h-1}=a}\wedge s_{h}=s')\\
  &\qquad\qquad\qquad\qquad
      -\P^{\pi'}(\com{\omega_{h-2}=\omega\wedge s_{h-1}=s\wedge a_{h-1}=a}\wedge s_{h}=s')
    \Big)\Big|\\
    %%%
  &\qquad\qquad\markref{(4)}{=}
  \sum\limits_{g'\in \cG_{r,h}}\sum\limits_{s'\in\cS}
  \Big|
    \sum\limits_{g\in \cG_{r,h-1}}
    \sum\limits_{\substack{\omega\in\Omega_{h-1}:\\G(\omega;r)=g}}
    \sum\limits_{\substack{(s,a)\in\SA:\\r_{h-1}(s,a)=g'-g}}
    \Big(\\
  &\qquad\qquad\qquad\qquad
      \P^{\pi}(\com{\omega_{h-2}=\omega\wedge s_{h-1}=s})
      \P^{\pi}(a_{h-1}=a\wedge s_{h}=s'|\com{\omega,s})\\
  &\qquad\qquad\qquad\qquad
      -\P^{\pi'}(\com{\omega_{h-2}=\omega\wedge s_{h-1}=s})
      \P^{\pi'}(a_{h-1}=a\wedge s_{h}=s'|\com{\omega,s})
    \Big)\Big|\\
    %%%
  &\qquad\qquad\markref{(5)}{=}
  \sum\limits_{g'\in \cG_{r,h}}\sum\limits_{s'\in\cS}
  \Big|
    \sum\limits_{g\in \cG_{r,h-1}}
    \sum\limits_{\substack{\omega\in\Omega_{h-1}:\\G(\omega;r)=g}}
    \sum\limits_{\substack{(s,a)\in\SA:\\r_{h-1}(s,a)=g'-g}}
    \Big(\\
  &\qquad\qquad\qquad\qquad
      \P^{\pi}(\omega_{h-2}=\omega\wedge s_{h-1}=s)
      \com{\pi(a|\omega,s)p_{h-1}(s'|s,a)}\\
  &\qquad\qquad\qquad\qquad
      -\P^{\pi'}(\omega_{h-2}=\omega\wedge s_{h-1}=s)
      \com{\pi'(a|\omega,s)p_{h-1}(s'|s,a)}
    \Big)\Big|\\
  &\qquad\qquad=
  \sum\limits_{g'\in \cG_{r,h}}\sum\limits_{s'\in\cS}
  \Big|
    \sum\limits_{g\in \cG_{r,h-1}}
    \sum\limits_{\substack{\omega\in\Omega_{h-1}:\\G(\omega;r)=g}}
    \sum\limits_{\substack{(s,a)\in\SA:\\r_{h-1}(s,a)=g'-g}}
    \com{p_{h-1}(s'|s,a)}\Big(\\
  &\qquad\qquad\qquad\qquad
      \P^{\pi}(\omega_{h-2}=\omega\wedge s_{h-1}=s)
      \pi(a|\omega,s)\\
  &\qquad\qquad\qquad\qquad
      -\P^{\pi'}(\omega_{h-2}=\omega\wedge s_{h-1}=s)
      \pi'(a|\omega,s)\\
  &\qquad\qquad\qquad\qquad
  \com{\pm\P^{\pi}(\omega_{h-2}=\omega\wedge s_{h-1}=s)
      \pi'(a|\omega,s)}
    \Big)\Big|\\
  &\qquad\qquad\markref{(6)}{\le}
  \sum\limits_{g'\in \cG_{r,h}}\sum\limits_{s'\in\cS}
    \sum\limits_{g\in \cG_{r,h-1}}
    \sum\limits_{\substack{\omega\in\Omega_{h-1}:\\G(\omega;r)=g}}
    \sum\limits_{\substack{(s,a)\in\SA:\\r_{h-1}(s,a)=g'-g}}
    p_{h-1}(s'|s,a)\\
  &\qquad\qquad\qquad\qquad
  \cdot\P^{\pi}(\omega_{h-2}=\omega\wedge s_{h-1}=s)\com{\Big|
      \pi(a|\omega,s)-\pi'(a|\omega,s)\Big|}\\
  &\qquad\qquad\qquad\qquad
  +\sum\limits_{g'\in \cG_{r,h}}\sum\limits_{s'\in\cS}
    \sum\limits_{g\in \cG_{r,h-1}}\com{\Big|}
    \sum\limits_{\substack{\omega\in\Omega_{h-1}:\\G(\omega;r)=g}}
    \sum\limits_{\substack{(s,a)\in\SA:\\r_{h-1}(s,a)=g'-g}}
    p_{h-1}(s'|s,a)\pi'(a|\omega,s)
      \\
  &\qquad\qquad\qquad\qquad
  \cdot\Bigr{\com{\P^{\pi}(\omega_{h-2}=\omega\wedge s_{h-1}=s)-
  \P^{\pi'}(\omega_{h-2}=\omega\wedge s_{h-1}=s)}}\com{\Big|}\\
  &\qquad\qquad\markref{(7)}{\le}
  \sum\limits_{g'\in \cG_{r,h}}
    \sum\limits_{g\in \cG_{r,h-1}}
    \sum\limits_{\substack{\omega\in\Omega_{h-1}:\\G(\omega;r)=g}}
    \sum\limits_{\substack{(s,a)\in\SA:\\r_{h-1}(s,a)=g'-g}}
    \\
  &\qquad\qquad\qquad\qquad
  \P^{\pi}(\omega_{h-2}=\omega\wedge s_{h-1}=s)\Big|
      \pi(a|\omega,s)-\pi'(a|\omega,s)\Big|\\
  &\qquad\qquad\qquad\qquad
  +\sum\limits_{g'\in \cG_{r,h}}\sum\limits_{s'\in\cS}
    \sum\limits_{g\in \cG_{r,h-1}}
    \mathop{\com{\sum}}\limits_{\substack{\com{(s,a)\in\SA:}\\\com{r_{h-1}(s,a)=g'-g}}}
    \com{p_{h-1}(s'|s,a)}
    \com{\Big|}
    \sum\limits_{\substack{\omega\in\Omega_{h-1}:\\G(\omega;r)=g}}
    \pi'(a|\omega,s)
      \\
  &\qquad\qquad\qquad\qquad
  \cdot\Bigr{\P^{\pi}(\omega_{h-2}=\omega\wedge s_{h-1}=s)-
  \P^{\pi'}(\omega_{h-2}=\omega\wedge s_{h-1}=s)}\com{\Big|}\\
  &\qquad\qquad\markref{(8)}{=}
    \sum\limits_{g\in \cG_{r,h-1}}
    \com{\sum\limits_{\substack{(s,a)\in\SA}}\Big|
      \pi_{h-1}(a|s,g)-\pi_{h-1}'(a|s,g)\Big|}
    \\
  &\qquad\qquad\qquad\qquad
  \cdot\sum\limits_{\substack{\omega\in\Omega_{h-1}:\\G(\omega;r)=g}}
  \P^{\pi}(\omega_{h-2}=\omega\wedge s_{h-1}=s)\\
  &\qquad\qquad\qquad\qquad
  +\sum\limits_{g'\in \cG_{r,h}}
    \sum\limits_{g\in \cG_{r,h-1}}
    \sum\limits_{\substack{(s,a)\in\SA:\\r_{h-1}(s,a)=g'-g}}
    \com{\pi_{h-1}'(a|s,g)}\Big|
    \sum\limits_{\substack{\omega\in\Omega_{h-1}:\\G(\omega;r)=g}}
      \\
  &\qquad\qquad\qquad\qquad
  \cdot\Bigr{\P^{\pi}(\omega_{h-2}=\omega\wedge s_{h-1}=s)-
  \P^{\pi'}(\omega_{h-2}=\omega\wedge s_{h-1}=s)}\Big|\\
  %%%%%%%%%%%%%%%%%
  &\qquad\qquad=
    \sum\limits_{g\in \cG_{r,h-1}}
    \sum\limits_{\substack{(s,a)\in\SA}}\Big|
      \pi_{h-1}(a|s,g)-\pi_{h-1}'(a|s,g)\Big|
    % \\
  % &\qquad\qquad\qquad\qquad
  % \cdot
  \com{\P^{\pi}(G_{h-1}=g\wedge s_{h-1}=s)}\\
  &\qquad\qquad\qquad\qquad
  +\sum\limits_{g'\in \cG_{r,h}}
    \sum\limits_{g\in \cG_{r,h-1}}
    \sum\limits_{\substack{(s,a)\in\SA:\\r_{h-1}(s,a)=g'-g}}
    \pi_{h-1}'(a|s,g)
      \\
  &\qquad\qquad\qquad\qquad
  \Big|\com{\P^{\pi}(G_{h-1}=g\wedge s_{h-1}=s)}-
  \com{\P^{\pi'}(G_{h-1}=g\wedge s_{h-1}=s)}\Big|\\
  %%%%%%%%%%%%%%%
  &\qquad\qquad\markref{(9)}{=}
    \sum\limits_{g\in \cG_{r,h-1}}\sum\limits_{\com{s\in\cS}}
    \P^{\pi}(G_{h-1}=g\wedge s_{h-1}=s)
    \com{\Big\|
      \pi_{h-1}(\cdot|s,g)-\pi_{h-1}'(\cdot|s,g)\Big\|_1} \\
  &\qquad\qquad\qquad\qquad
  +\sum\limits_{g\in \cG_{r,h-1}}\sum\limits_{\com{s\in\cS}}
  \Big|\P^{\pi}(G_{h-1}=g\wedge s_{h-1}=s)-
  \P^{\pi'}(G_{h-1}=g\wedge s_{h-1}=s)\Big|\\
  %%%%%%%%%%%%%%%%%%%%%
  &\qquad\qquad\markref{(10)}{\le}
  \sum\limits_{g\in \cG_{r,h-1}}\sum\limits_{s\in\cS}
    \P^{\pi}(G_{h-1}=g\wedge s_{h-1}=s)\Big\|
      \pi_{h-1}(\cdot|s,g)-\pi_{h-1}'(\cdot|s,g)\Big\|_1
    \\
  &\qquad\qquad\qquad\qquad  
  +\com{\sum\limits_{h'\in\dsb{h-2}}\sum\limits_{\substack{g\in \cG_{r,h'}}}\sum\limits_{s\in\cS}
  \P^{\pi}(G_{h'}=g\wedge s_{h'}=s)}\\
  &\qquad\qquad\qquad\qquad  
  \com{\cdot\Bign{\pi_{h'}(\cdot|s,g)-\pi_{h'}'(\cdot|s,g)}_1}\\
  &\qquad\qquad=
  \sum\limits_{h'\in\dsb{\com{h-1}}}\sum\limits_{\substack{g\in \cG_{r,h'}}}\sum\limits_{s\in\cS}
  \P^{\pi}(G_{h'}=g\wedge s_{h'}=s)
  \cdot\Bign{\pi_{h'}(\cdot|s,g)-\pi_{h'}'(\cdot|s,g)}_1,
\end{align*}
where at (4) we use the chain rule of conditional probabilities, at (5) we do it
again, and we recognize the policies $\pi$ and $\pi'$, and also that the
transition model is Markovian, at (6) we use triangle's inequality to split the
summations and bring the absolute value inside, at (7), in the first term, we
note that $p_{h-1}(s'|s,a)$ is the only term that depends on $s'$ and that it
sums to 1, while in the second term we exchange the order of two summations and
apply triangle's inequality to bring one inside, at (8), in the first term, we
first remove the summation on $g'$ along with the indicator function that
forces us to consider a subset of state-action pairs, and then we exchange two
other summations and note that the policies do not depend by hypothesis on the
entire past trajectory, but just on the return so far. Instead, in the second
term, we use that $\sum_{s'\in\cS}p_{h-1}(s'|s,a)=1$, and also that, by
hypothesis, $\pi'$ does not depend on the entire past trajectory, but just on
$g$. At (9), i.a., we use that $\sum_{g'\in
\cG_{r,h}}\indic{r_{h-1}(s,a)=g'-g}=1$ and that
$\sum_{a\in\cA}\pi_{h-1}'(a|s,g)=1$. Finally, at (10), we apply the inductive
hypothesis.

Thanks to this result, we can finally prove the claim in the lemma, using
passages analogous to those above, with the difference that we do not have the
summation over the states at the current stage (i.e., $H+1$):
\begin{align*}
  &\Big\|\eta^{\pi}_{r}-
  \eta^{\pi'}_{r}\Big\|_1
  =\sum\limits_{g\in \cG_{r,H+1}}
  \Big|\eta^{\pi}_{r}(g)-\eta^{\pi'}_{r}(g)\Big|\\
  &\qquad\qquad=\sum\limits_{g\in \cG_{r,H+1}}
  \Big|\com{\P^{\pi}(G_{H+1}=g)}-\com{\P^{\pi'}(G_{H+1}=g)}\Big|\\
  %%%
  &\qquad\qquad=\sum\limits_{g\in \cG_{r,H+1}}
  \Big|\com{\sum\limits_{s'\in\cS}\sum\limits_{a'\in\cA}\sum\limits_{g'\in\cG_{r,H}}
  \sum\limits_{\omega\in\Omega_H}
  \indic{r_H(s',a')=g-g', G(\omega;r)=g'}}\Big(\\
  &\qquad\qquad\qquad\qquad
  \cdot\P^{\pi}(\com{\omega_{H-1}=\omega\wedge s_{H}=s'\wedge a_H=a'})\\
  &\qquad\qquad\qquad\qquad-\P^{\pi'}(\com{\omega_{H-1}=\omega\wedge s_{H}=s'\wedge a_H=a'}\Big)\Big|\\
  %%%
  &\qquad\qquad=\sum\limits_{g\in \cG_{r,H+1}}
  \Big|\sum\limits_{s'\in\cS}\sum\limits_{a'\in\cA}\sum\limits_{g'\in\cG_{r,H}}
  \sum\limits_{\omega\in\Omega_H}
  \indic{r_H(s',a')=g-g', G(\omega;r)=g'}\Big(\\
  &\qquad\qquad\qquad\qquad
  \cdot\P^{\pi}(\omega_{H-1}=\omega\wedge s_{H}=s') \com{\pi(a'|\omega,s')}\\
  &\qquad\qquad\qquad\qquad-\P^{\pi'}(\omega_{H-1}=\omega\wedge s_{H}=s')\com{\pi'(a'|\omega,s')}\\
  &\qquad\qquad\qquad\qquad
  \com{\pm \P^{\pi}(\omega_{H-1}=\omega\wedge s_{H}=s')\pi'(a'|\omega,s')}
  \Big)\Big|\\
  %%%
  &\qquad\qquad\markref{(11)}{\le}\sum\limits_{g\in \cG_{r,H}}\sum\limits_{s\in\cS}
    \P^{\pi}(G_{H}=g\wedge s_{H}=s)\Big\|
      \pi_{H}(\cdot|s,g)-\pi_{H}'(\cdot|s,g)\Big\|_1
    \\
  &\qquad\qquad\qquad\qquad
  +\sum\limits_{g\in \cG_{r,H}}\sum\limits_{{s\in\cS}}
  \Big|\P^{\pi}(G_{H}=g\wedge s_{H}=s)-
  \P^{\pi'}(G_{H}=g\wedge s_{H}=s)\Big|\\
  %%%
  &\qquad\qquad\markref{(12)}{\le}
  \sum\limits_{h'\in\dsb{H}}\sum\limits_{\substack{g\in \cG_{r,h'}}}\sum\limits_{s\in\cS}
  \P^{\pi}(G_{h'}=g\wedge s_{h'}=s)
  \Bign{\pi_{h'}(\cdot|s,g)-\pi_{h'}'(\cdot|s,g)}_1,
\end{align*}
where at (11) we made the same passages as above, all in one, and at (12) we
applied the result proved earlier by induction.

This concludes the proof. Note that, from a high-level perspective, our proof
approach resembles the ``reduction to supervised learning'' approach made in
\citet{rajaraman2020fundamentalimitationlearning} for stochastic policies, with
the difference that we work in an augmented state-space MDP.
\end{proof}

\begin{restatable}[Concentration]{lemma}{concentration}
\label{lemma: concentration}
Let $\epsilon\in(0,H]$ and $\delta\in(0,1)$.
Let $\cM_{r^E}$ be any MDP, $\pi^E\in\Pi^{\text{NM}}$ be any expert's policy,
and $\widehat{\pi}$ be the output of Algorithm \ref{alg: rsbc}.
Then, with probability $1-\delta$, we have that (we use the notation in Lemma
\ref{lemma: error propagation}):
\begin{align*}
  \sum\limits_{h\in\dsb{H}}\sum\limits_{g\in \cG_{r^E_\theta,h}}\sum\limits_{s\in\cS}
\P^{\pi^E}(G_h=g\wedge s_{h}=s)
  \Bign{\pi_{r^E_\theta,h}(\cdot|s,g)-\widehat{\pi}_{h}(\cdot|s,g)}_1\le\epsilon,
\end{align*}
with a number of samples:
\begin{align*}
  N\le \frac{193SH\overline{\cG}\ln\frac{2S\overline{\cG}}{\delta}}
{\epsilon^2}\biggr{
  \ln\frac{2S\overline{\cG}}{\delta}+(A-1)
  \ln\Bigr{\frac{128eSH\overline{\cG}\ln\frac{2S\overline{\cG}}{\delta}}
{\epsilon^2}}
},
\end{align*}
where $\overline{\cG}\coloneqq \sum_{h\in\dsb{H}}|\cG_{r^E_\theta,h}|$.
\end{restatable}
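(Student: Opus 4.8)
The plan is to cast this as a behavioural-cloning estimation problem in the \emph{augmented} model $\overline{\cM}$ whose state is the pair $(s,g)$ of current state and discretized cumulative reward, in the style of \citet{rajaraman2020fundamentalimitationlearning,foster2024bcallyouneed} but with the state space enlarged by $g$. Write $d_h^{\pi^E}(s,g)\coloneqq\P^{\pi^E}(G_h=g\wedge s_h=s)$ for the occupancy of $\pi^E$ in $\overline{\cM}$ (with $G_h$ the cumulative $r^E_\theta$-reward by stage $h$, as in Lemma~\ref{lemma: error propagation}), so that $\sum_{s,g}d_h^{\pi^E}(s,g)=1$ for every $h$, and let $N_h(s,g)\coloneqq\sum_a M_h(s,g,a)$ be the number of trajectories in $\cD^E$ that sit in augmented state $(s,g)$ at stage $h$. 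The key structural fact is: since the trajectories in $\cD^E$ are i.i.d.\ draws from $\P^{\pi^E}$ and, by the definition of $\pi_{r^E_\theta}$ (Eq.~\eqref{eq: def policy imitate same occ meas} applied to reward $r^E_\theta$), the conditional law of the expert's action at stage $h$ given $(s_h,G_h)=(s,g)$ is exactly $\pi_{r^E_\theta,h}(\cdot|s,g)$, conditionally on $N_h(s,g)=n$ the action records $M_h(s,g,\cdot)$ are $n$ i.i.d.\ samples from $\pi_{r^E_\theta,h}(\cdot|s,g)$, and $\widehat\pi_h(\cdot|s,g)$ is exactly their empirical frequency (or the uniform law when $n=0$, in which case $\|\pi_{r^E_\theta,h}(\cdot|s,g)-\widehat\pi_h(\cdot|s,g)\|_1\le2$). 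The non-Markovianity of $\pi^E$ is harmless because we only estimate these marginal conditionals.

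Next I would set up two high-probability events. For the visit counts, a multiplicative Chernoff bound gives $\P(N_h(s,g)<N d_h^{\pi^E}(s,g)/2)\le\exp(-N d_h^{\pi^E}(s,g)/8)$; a union bound over the at most $S\overline{\cG}$ triples $(h,s,g)$ then ensures, with probability at least $1-\delta/2$, that every triple with $d_h^{\pi^E}(s,g)\ge\tau_0\coloneqq 8\ln(2S\overline{\cG}/\delta)/N$ has $N_h(s,g)\ge N d_h^{\pi^E}(s,g)/2$. For the action frequencies, a standard $\ell_1$-concentration inequality for empirical distributions over $\cA$ — obtained for instance by covering the simplex $\Delta^{\cA}$, which is what produces the $(A-1)$ factor — together with a union bound over the $\le S\overline{\cG}$ triples and the $\le N$ possible values of $n$, yields with probability at least $1-\delta/2$ that $\|\pi_{r^E_\theta,h}(\cdot|s,g)-\widehat\pi_h(\cdot|s,g)\|_1\le\sqrt{K/N_h(s,g)}$ for all $(h,s,g)$ with $N_h(s,g)\ge1$, where $K$ is of order $(A-1)\ln N+\ln(S\overline{\cG}/\delta)$ up to lower-order logarithms. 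On the intersection of the two events I split the target sum over triples into those with $d_h^{\pi^E}(s,g)<\tau_0$ and those with $d_h^{\pi^E}(s,g)\ge\tau_0$. The first group contributes at most $2S\overline{\cG}\tau_0=\cO(S\overline{\cG}\ln(S\overline{\cG}/\delta)/N)$ via $\|\cdot\|_1\le2$; for the second, $N_h(s,g)\ge N d_h^{\pi^E}(s,g)/2$ gives $\|\cdot\|_1\le\sqrt{2K/(N d_h^{\pi^E}(s,g))}$, so this group contributes $\sqrt{2K/N}\sum_{(h,s,g)}\sqrt{d_h^{\pi^E}(s,g)}\le\sqrt{2K/N}\cdot\sqrt{S\overline{\cG}H}$ by Cauchy--Schwarz, since there are $\le S\overline{\cG}$ terms and $\sum_h\sum_{s,g}d_h^{\pi^E}(s,g)=H$.

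To finish, I require each contribution to be at most $\epsilon/2$: the first forces $N$ to be at least of order $S\overline{\cG}\ln(S\overline{\cG}/\delta)/\epsilon$, which is dominated for $\epsilon\le H$ and $A\ge2$, and the second forces $N\ge 8KS\overline{\cG}H/\epsilon^2$, i.e.\ $N$ at least of order $S\overline{\cG}H\big((A-1)\ln N+\ln(S\overline{\cG}/\delta)\big)/\epsilon^2$, which is implicit in $N$ because of the $\ln N$. I resolve it by the routine device of substituting the announced value of $N$ (the one in the lemma statement) into the right-hand side and verifying the inequality; carrying the explicit constants of the Chernoff step and of the $\ell_1$ bound through this back-substitution is what produces the precise sample size claimed, with the $128e$ appearing inside the logarithm from exactly this step. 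I expect the main obstacle to be bookkeeping rather than conceptual: correctly juggling the two layers of randomness through the union bounds (which trajectories reach a given augmented state, versus which actions they then play there), pinning down the constant in the $\ell_1$-concentration so the $(A-1)$ comes out cleanly, and checking the implicit inequality on $N$. The underlying idea — the augmented-state reduction plus the clip-and-Cauchy--Schwarz split — is precisely the one behind classical behavioural-cloning analyses such as \citet{rajaraman2020fundamentalimitationlearning}.
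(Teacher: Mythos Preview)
Your proposal is correct and follows essentially the same route as the paper: the paper likewise conditions on the count $N_h(s,g)$, applies an $\ell_1$ concentration bound for the empirical action distribution (via Lemma~8 of \citet{kaufmann2021adaptive} combined with Pinsker, in place of your covering argument plus union over $n$), lower-bounds the visit counts (via Lemma~A.1 of \citet{xie2021bridging}, which plays the role of your Chernoff-plus-threshold split in a single stroke), and finishes with Cauchy--Schwarz and the same back-substitution device (their Lemma~J.3 of \citet{lazzati2024offline}) to resolve the implicit inequality in $N$. The only differences are the particular off-the-shelf concentration lemmas invoked, which are interchangeable up to constants.
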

\begin{proof}
  We can write:
  \begin{align*}
    &  \sum\limits_{h\in\dsb{H}}\sum\limits_{g\in \cG_{r^E_\theta,h}}\sum\limits_{s\in\cS}
\P^{\pi^E}(G_h=g\wedge s_{h}=s)
  \Bign{\pi_{r^E_\theta,h}(\cdot|s,g)-\widehat{\pi}_{h}(\cdot|s,g)}_1\\
  &\qquad\qquad=
  \sum\limits_{h\in\dsb{H}}\sum\limits_{g\in \cG_{r^E_\theta,h}}\sum\limits_{s\in\cS}
  \P^{\pi^E}(G_h=g\wedge s_h=s)
  \sum\limits_{a\in\cA}\Big|\pi_{r^E_\theta,h}(a|g,s)\\
  &\qquad\qquad\qquad
  -\Bigr{
    \com{\frac{M_{h}(s,g,a)}{\sum_{a'}M_{h}(s,g,a')}\bigindic{\sum_{a'}M_{h}(s,g,a')>0}
    +\frac{1}{A}\bigindic{\sum_{a'}M_{h}(s,g,a')=0}}
  }\Big|\\
  &\qquad\qquad\markref{(1)}{\le}
  \sum\limits_{h\in\dsb{H}}\sum\limits_{g\in \cG_{r^E_\theta,h}}\sum\limits_{s\in\cS}
  \P^{\pi^E}(G_h=g\wedge s_h=s)
  \\
  &\qquad\qquad\qquad\qquad
  \cdot \com{2\sqrt{2}\sqrt{\frac{\ln\frac{2S\overline{\cG}}{\delta}+
  (A-1)\ln\bigr{e\bigr{1+\frac{\sum_{a'}M_{h}(s,g,a')}{A-1}}}}{\sum_{a'}M_{h}(s,g,a')}}}
  \\
  &\qquad\qquad\markref{(2)}{\le}
  2\sqrt{2}\sqrt{\ln\frac{2S\overline{\cG}}{\delta}+
  (A-1)\ln\Bigr{e\Bigr{1+\frac{\com{N}}{A-1}}}}\\
  &\qquad\qquad\qquad\qquad
  \cdot \sum\limits_{h\in\dsb{H}}\sum\limits_{g\in \cG_{r^E_\theta,h}}\sum\limits_{s\in\cS}
  \P^{\pi^E}(G_h=g\wedge s_h=s)
  \sqrt{\frac{1}{\sum_{a'}M_{h}(s,g,a')}}
  \\
  &\qquad\qquad\markref{(3)}{\le}
  2\sqrt{2}\sqrt{\ln\frac{2S\overline{\cG}}{\delta}+
  (A-1)\ln\Bigr{e\Bigr{1+\frac{N}{A-1}}}}\\
  &\qquad\qquad\qquad\qquad
  \cdot \sum\limits_{h\in\dsb{H}}\sum\limits_{g\in \cG_{r^E_\theta,h}}\sum\limits_{s\in\cS}
  \sqrt{\com{\frac{8\ln\frac{2S\overline{\cG}}{\delta}\P^{\pi^E}(G_h=g\wedge s_h=s)}{N}}}
  \\
  &\qquad\qquad=
  \com{8\sqrt{\frac{\ln\frac{2S\overline{\cG}}{\delta}}{N}}}
  \sqrt{\ln\frac{2S\overline{\cG}}{\delta}+
  (A-1)\ln\Bigr{e\Bigr{1+\frac{N}{A-1}}}}\\
  &\qquad\qquad\qquad\qquad
  \cdot \sum\limits_{h\in\dsb{H}}\sum\limits_{g\in \cG_{r^E_\theta,h}}\sum\limits_{s\in\cS}
  \sqrt{\com{\P^{\pi^E}(G_h=g\wedge s_h=s)}}
  \\
  &\qquad\qquad\markref{(4)}{\le}
  8\sqrt{\frac{\ln\frac{2S\overline{\cG}}{\delta}}{N}}
  \sqrt{\ln\frac{2S\overline{\cG}}{\delta}+
  (A-1)\ln\Bigr{e\Bigr{1+\frac{N}{A-1}}}}\\
  &\qquad\qquad\qquad\qquad
  \cdot \sum\limits_{h\in\dsb{H}}
  \com{\sqrt{S|\cG_{r^E_\theta,h}|}}
  \sqrt{\com{\sum\limits_{\substack{g\in \cG_{r^E_\theta,h}}}\sum\limits_{s\in\cS}}
  \P^{\pi^E}(G_h=g\wedge s_h=s)}
  \\
  &\qquad\qquad=
  8\sqrt{\frac{\com{S}\ln\frac{2S\overline{\cG}}{\delta}}{N}}
  \sqrt{\ln\frac{2S\overline{\cG}}{\delta}+
  (A-1)\ln\Bigr{e\Bigr{1+\frac{N}{A-1}}}}
  \com{\sum\limits_{h\in\dsb{H}}
  \sqrt{|\cG_{r^E_\theta,h}|}}
  \\
  &\qquad\qquad\markref{(5)}{\le}
  8\sqrt{\frac{S\com{H\overline{\cG}}\ln\frac{2S\overline{\cG}}{\delta}}{N}}
  \sqrt{\ln\frac{2S\overline{\cG}}{\delta}+
  (A-1)\ln\Bigr{e\Bigr{1+\frac{N}{A-1}}}},
  \end{align*}
  where at (1) we use that, if $\sum_{a'}M_{h}(s,g,a')=0$, then:
  \begin{align*}
    &\sum\limits_{a\in\cA}\Big|\pi_{r^E_\theta,h}(a|g,s)-\Bigr{
    \frac{M_{h}(s,g,a)}{\sum_{a'}M_{h}(s,g,a')}\indic{\sum_{a'}M_{h}(s,g,a')>0}+\frac{1}{A}\indic{\sum_{a'}M_{h}(s,g,a')=0}
  }\Big|\\
  &\qquad\qquad=\sum\limits_{a\in\cA}\Big|\pi_{r^E_\theta,h}(a|g,s)-\frac{1}{A}\Big|\\
  &\qquad\qquad\le2,
  \end{align*}
  as we the total variation distance between two probability distributions
  cannot exceed 1. Instead, if $\sum_{a'}M_{h}(s,g,a')>0$, \emph{conditioning}
  on $\sum_{a'}M_{h}(s,g,a')$, at all $s,g$ where $\P^{\pi^E}(G_h=g \wedge
  s_{h}=s)>0$, we note that $M_{h}(s,g,a)/\sum_{a'}M_{h}(s,g,a')$ is the
  empirical vector of probabilities of $\pi_{r^E_\theta,h}(a|g,s)$ (recall its
  definition from Eq. \ref{eq: def policy imitate same occ meas}), thus we can
  apply Lemma 8 of \citet{kaufmann2021adaptive} to get that, for any
  $\delta\in(0,1)$:
  \begin{align*}
    &\P^{\pi^E}\Big(
      KL\Bigr{\frac{M_{h}(s,g,\cdot)}{\sum_{a'}M_{h}(s,g,a')}\Big\| \pi_{r^E_\theta,h}(\cdot|g,s)}\\
     &\qquad\qquad \le
      \frac{\ln\frac{1}{\delta}+ (A-1)\ln\bigr{e\bigr{1+\frac{\sum_{a'}M_{h}(s,g,a')}{A-1}}}}{\sum_{a'}M_{h}(s,g,a')}
    \Big)\ge 1-\delta.
  \end{align*}
  Combining this result with the Pinsker's inequality, that tells us that
  $\|x-y\|_1\le \sqrt{2 KL(x\| y)}$, and with a union bound over all
  $h\in\dsb{H}$, $s\in\cS$, $g\in\cG_{r^E_\theta,h}$, we get the passage in (1)
  w.p. $1-\delta/2$. Note that we add an additional 2 for the case
  $\sum_{a'}M_{h}(s,g,a')=0$, and we define $\overline{\cG}\coloneqq
  \sum_{h\in\dsb{H}}|\cG_{r^E_\theta,h}|$.
  At (2) we bound $\sum_{a'}M_{h}(s,g,a')\le N$, and bring that quantity
  outside, at (3) we apply Lemma A.1 of \citet{xie2021bridging}, after having
  noticed that $\sum_{a'}M_{h}(s,g,a')\sim\text{Bin}\Bigr{ N,
  \P^{\pi^E}(G_h=g\wedge s_{h}=s)}$, and make it hold for
  all $s,g,h$ w.p. $1-\delta/2$.
  At (4) and (5) we apply the Cauchy-Schwarz's inequality.

  Now, we impose that this quantity is smaller than $\epsilon$:
  \begin{align*}
    &8\sqrt{\frac{SH\overline{\cG}\ln\frac{2S\overline{\cG}}{\delta}}{N}}
  \sqrt{\ln\frac{2S\overline{\cG}}{\delta}+
  (A-1)\ln\Bigr{e\Bigr{1+\frac{N}{A-1}}}}\le\epsilon\\
  &\qquad\qquad\iff
N\ge \frac{64SH\overline{\cG}\ln^2\frac{2S\overline{\cG}}{\delta}}
{\epsilon^2}
+
\frac{64SH\overline{\cG}(A-1)\ln\frac{2S\overline{\cG}}{\delta}}
{\epsilon^2}
\ln\Bigr{\frac{eN}{A-1}+e}.
  \end{align*}
Thanks to Lemma J.3 of \citet{lazzati2024offline}, we know that this inequality
is satisfied with:
\begin{align*}
  N\le \frac{128SH\overline{\cG}\ln^2\frac{2S\overline{\cG}}{\delta}}
{\epsilon^2}
+ \frac{192SH\overline{\cG}(A-1)\ln\frac{2S\overline{\cG}}{\delta}}
{\epsilon^2}
\ln\Bigr{\frac{128eSH\overline{\cG}\ln\frac{2S\overline{\cG}}{\delta}}
{\epsilon^2}}
+A-1.
\end{align*}
Rearranging and applying a final union bound concludes the proof.
\end{proof}

\subsubsection{Additional Discussion on RS-BC}\label{apx: more discussion rsbc}

First, we observe that the sample complexity bound in Theorem \ref{thr: rsbc}
cannot be improved if we extend our analysis with that of
\citet{foster2024bcallyouneed}. Indeed, \citet{foster2024bcallyouneed} also
provides an $1/\epsilon^2$ dependence as our proof, that combined with the
additional $1/\epsilon$ due to discretization, would give the same
$1/\epsilon^3$ rate. Moreover, \citet{foster2024bcallyouneed} would not allow to
improve even the $H^6$ dependence in the horizon in our proof, as their
Corollary 3.1 combined with a simple variation of our proof that considers an
MDP with an augmented state space, would still provide an $H^4$ dependence that
should be combined with the $H^2$ arising from bounding the Wasserstein with $H$
times the total variation, and taking the square.
Note that these considerations on \citet{foster2024bcallyouneed} implicitly
assumed that the proof of \citet{foster2024bcallyouneed} can be extended to
non-Markovian expert's policies with the same rate, which has to be demonstrated
as well.

Second, we mention that \citet{rajaraman2020fundamentalimitationlearning}
provides for IL a $1/\epsilon$ dependence instead of $1/\epsilon^2$. However,
note we remark that the result is in \emph{expectation}, and not with \emph{high
probability}, as remarked also by \citet{foster2024bcallyouneed} in their
footnote 21. Indeed, this explains why the $1/\epsilon$ rate of
\citet{rajaraman2020fundamentalimitationlearning} seems to overcome the
$1/\epsilon^2$ in the lower bound of Theorem G.1 of
\citet{foster2024bcallyouneed}.

Lastly, we mention that \rsbc (and also the theoretical guarantees in Theorem
\ref{thr: rsbc}) can be easily extended to the setting in which $r^E$ is unknown
but \emph{observed}, namely, in which expert's trajectories are
state-action-reward trajectories $(s_1,a_1,r_1,\dotsc)$. Indeed, looking at
Algorithm \ref{alg: rsbc}, note that the computation of the returns
$G(\omega;r^E)$ does not require knowledge of $r^E$ in state-action pairs never
observed. This is different from \rskt, in which we require knowledge of $r^E$ everywhere.

\subsection{Known-Transition Setting}
\label{apx: known transition one r}

In Appendix \ref{apx: proof thr rskt}, we prove Theorem \ref{thr: rskt}, while
in Appendix \ref{apx: details LP}, we write down explicitly the LP in Eq.
\eqref{eq: opt problem LP rskt}.

\subsubsection{Proof of Theorem \ref{thr: rskt}}
\label{apx: proof thr rskt}

\rsktupperbound*
\begin{proof}
We begin by showing that the estimate of return distribution $\widehat{\eta}$
computed by \rskt at Line \ref{line: kt estimate expert ret distrib} is close to
the expert's return distribution with high probability.
To this aim, we write:
\begin{align*}
  \cW\Bigr{
      \eta^{\pi^E}_{r^E},\widehat{\eta}
    }
    &\markref{(1)}{\le}
    \cW\Bigr{
      \eta^{\pi^E}_{r^E},\com{\eta^{\pi^E}_{r^E_\theta}}
    }
    +
    \cW\Bigr{
      \com{\eta^{\pi^E}_{r^E_\theta}},
      \widehat{\eta}
    }\\
    &\markref{(2)}{\le}
    \com{H\theta/2}
    +
    \cW\Bigr{
      \eta^{\pi^E}_{r^E_\theta},
      \widehat{\eta}
    }\\
    &\markref{(3)}{=}
    H\theta/2
    +
    \com{\int\limits_0^H \Biga{F_{\eta^{\pi^E}_{r^E_\theta}}(x)-
    F_{\widehat{\eta}}(x)}dx}
    \\
    &\le
    H\theta/2
    +
    \int\limits_0^H \com{\sup\limits_{x'\in[0,H]}}\Biga{F_{\eta^{\pi^E}_{r^E_\theta}}\com{(x')}-
    F_{\widehat{\eta}}\com{(x')}}dx
    \\
    &=
    H\theta/2
    +\sup\limits_{x'\in[0,H]}\Biga{F_{\eta^{\pi^E}_{r^E_\theta}}(x')-
    F_{\widehat{\eta}}(x')}
    \com{\int\limits_0^H dx}
    \\
    &=
    H\theta/2
    +\com{H}\sup\limits_{x'\in[0,H]}\Biga{F_{\eta^{\pi^E}_{r^E_\theta}}(x')-
    F_{\widehat{\eta}}(x')}
    \\
    &\markref{(4)}{\le}
    H\theta/2
    +H\com{\epsilon'},
\end{align*}
where at (1) we use triangle's inequality, at (2) we apply Lemma \ref{lemma:
different r same p}, at (3) we use symbol $F_q$ for the distribution function of
any probability measure $q$, at (4) we apply the DKW inequality
\citep{kiefer1959dkw,massart1990dkw}, as $F_{\widehat{\eta}}(x)=\sum_{x'\le
x}\widehat{\eta}(x') = \sum_{x'\le x} \frac{1}{N}\sum_{i\in\dsb{N}}
\indic{\sum_{h=1}^H r^E_{\theta,h}(s_{h}^i,a_{h}^i)=x'} =
\frac{1}{N}\sum_{i\in\dsb{N}} \indic{\sum_{h=1}^H
r^E_{\theta,h}(s_{h}^i,a_{h}^i)\le x'}$ corresponds to the empirical
distribution function of $\eta^{\pi^E}_{r^E_\theta}$. Specifically, in our
setting, the DKW inequality tells us that, for any $\epsilon'>0$, it holds that:
\begin{align}
  \P^{\pi^E}\Bigr{\sup\limits_{x'\in[0,H]}\Biga{F_{\eta^{\pi^E}_{r^E_\theta}}(x')-
    F_{\widehat{\eta}}(x')}\le\epsilon'}\ge 1-2e^{-2N(\epsilon')^2}.
\end{align}
By imposing the term on the right hand side to be $1-\delta$, and solving w.r.t.
$N$, we get that:
\begin{align*}
  N\le \frac{1}{2(\epsilon')^2}\ln\frac{2}{\delta}.
\end{align*}
Now, building on this result, we can write:
\begin{align*}
  \cW\Bigr{
      \eta^{\pi^E}_{r^E},\eta^{\widehat{\pi}}_{r^E}
    }
    &\markref{(5)}{\le}
    \cW\Bigr{
      \eta^{\pi^E}_{r^E},\com{\widehat{\eta}}
    }
    +
    \cW\Bigr{
      \com{\widehat{\eta}},
      \com{\eta^{\widehat{\pi}}_{r^E_\theta}}
    }
    +
    \cW\Bigr{
      \com{\eta^{\widehat{\pi}}_{r^E_\theta}},
      \eta^{\widehat{\pi}}_{r^E}
    }
    \\
    &\markref{(6)}{\le}
   \com{H\theta/2
    +H\epsilon'}
    +
    \cW\Bigr{
      \widehat{\eta},
      \eta^{\widehat{\pi}}_{r^E_\theta}
    }
    +
    \cW\Bigr{
      \eta^{\widehat{\pi}}_{r^E_\theta},
      \eta^{\widehat{\pi}}_{r^E}
    }
    \\
    &\markref{(7)}{\le}
    \com{H\theta}
    +H\epsilon'
    +
    \cW\Bigr{
      \widehat{\eta},
      \eta^{\widehat{\pi}}_{r^E_\theta}
    }
    \\
    &\markref{(8)}{=}
    H\theta
    +H\epsilon'
    +
    \com{\min\limits_{\pi\in\Pi(r^E_\theta)}}\cW\Bigr{
      \widehat{\eta},
      \eta^{\com{\pi}}_{r^E_\theta}
    }
    \\
    &\markref{(9)}{\le}
    H\theta
    +H\epsilon'
    +
    \cW\Bigr{\widehat{\eta},
      \eta^{\com{\pi_{r^E_\theta}}}_{r^E_\theta}
    }
    \\
    &\markref{(10)}{\le}
    H\theta
    +H\epsilon'
    +
    \cW\Bigr{\widehat{\eta},
      \com{\eta^{\pi^E}_{r^E}}
    }
    +
    \cW\Bigr{\com{\eta^{\pi^E}_{r^E}},
      \com{\eta^{\pi_{r^E_\theta}}_{r^E}}
    }
    +
    \cW\Bigr{\com{\eta^{\pi_{r^E_\theta}}_{r^E}},
      \eta^{\pi_{r^E_\theta}}_{r^E_\theta}
    }
    \\
    &\markref{(11)}{\le}
    \com{2H\theta}
    +\com{2H\epsilon'}
    +
    \cW\Bigr{\eta^{\pi^E}_{r^E},
      \eta^{\pi_{r^E_\theta}}_{r^E}
    }
    +
    \cW\Bigr{\eta^{\pi_{r^E_\theta}}_{r^E},
      \eta^{\pi_{r^E_\theta}}_{r^E_\theta}
    }
    \\
    &\markref{(12)}{\le}
    \com{\frac{5}{2}H\theta}
    +2H\epsilon'
    +
    \cW\Bigr{\eta^{\pi^E}_{r^E},
      \eta^{\pi_{r^E_\theta}}_{r^E}
    }
    \\
    &\markref{(12)}{\le}
    \com{\frac{7}{2}H\theta}
    +2H\epsilon',
\end{align*}
  where at (5) we apply triangle's inequality, at (6) we use the result above,
  at (7) we apply Lemma \ref{lemma: different r same p}, at (8) we use the
  definition of $\widehat{\pi}$ and the hypothesis of solving the minimization
  problem exactly, at (9) we upper bound with a specific choice of policy, i.e.,
  $\pi_{r^E_\theta}$ (recall Eq. \ref{eq: def policy imitate same occ meas}), at
  (10) we apply triangle's inequality again, at (11) we apply the result above
  again, at (12) we use again Lemma \ref{lemma: different r same p}, and
  finally, at (13), we apply Lemma \ref{lemma: apx policies}.

  If we now choose $\theta=\epsilon/(7H)$, and $\epsilon'=\epsilon/(4H)$, we get that,
with probability $1-\delta$, it holds that:
\begin{align*}
  \cW\Bigr{
      \eta^{\pi^E}_{r^E},\eta^{\widehat{\pi}}_{r^E}
  }\le \epsilon,
\end{align*}
with:
\begin{align*}
  N\le \frac{2H^2}{\epsilon^2}\ln\frac{2}{\delta}.
\end{align*}
\end{proof}

\subsubsection{Explicit Formulation of the LP}
\label{apx: details LP}

The optimization problem in Eq. \eqref{eq: opt problem LP rskt} can be written more
explicitly as follows:
\begin{align}
  &\min\limits_{d\in\RR^{SAH|\cY^\theta|}_{\ge0},\eta\in\RR^{|\cY^\theta|}_{\ge0},
  t\in\RR^{|\cY^\theta|}_{\ge0},x\in\RR^{|\cY^\theta|}_{\ge0}}
  \sum\limits_{g\in\cY^\theta}t(g)\nonumber\\
  &\qquad\text{s.t. }\sum\limits_{(s,a,g)\in\cS\times\cA\times\cY^\theta}
  d_1(s,g,a)=1\label{constr: init1}\\
  &\qquad\sum\limits_{a\in\cA} d_1(s_0,0,a)=1\label{constr: init2}\\
  &\qquad\sum_{a\in\cA} d_h(s,g,a)=
\sum_{s',g',a'}d_{h-1}(s',g',a')p_{h-1}(s|s',a')\indic{r^E_{\theta,h}(s',a')=g-g'}\nonumber\\
&\qquad\qquad\qquad\qquad\qquad\qquad\qquad\qquad\qquad\qquad\qquad
\forall (s,g,h)\in\cS\times\cY^\theta\times\{2,\dotsc,H\}\label{constr: flow}\\
&\qquad \eta(g)=\sum\limits_{(s,a)\in\SA} d_H(s,g-r_{\theta,H}^E(s,a),a)
  \qquad\forall g\in \cY^\theta\label{constr: rel eta d}\\
  &\qquad x(g)=\sum\limits_{g'\in\cY^\theta:\;g'\le g}\Bigr{\eta(g')-\widehat{\eta}(g')}
  \qquad \forall g\in\cY^\theta\label{constr: norm1}\\
  &\qquad -t(g)\le x(g)\le t(g)\qquad \forall g\in\cY^\theta\label{constr: norm2}
\end{align}
where Eqs. \eqref{constr: init1}-\eqref{constr: init2}-\eqref{constr: flow} denote
the flow constraints, i.e., define the set of feasible occupancy measures $\cK$,
Eq. \eqref{constr: rel eta d} enforces that $\eta$ is the return distribution in
$\overline{\cM}$ corresponding to occupancy measure $d$, while Eqs.
\eqref{constr: norm1}-\eqref{constr: norm2} permit to rewrite the Wasserstein
distance in a linear manner.

Observe that the number of optimization variables is
$SAH|\cY^\theta|+|\cY^\theta|=\cO(SAH|\cY^\theta|)$, while the number of
constraints is $2+(H-1)S|\cY^\theta|+3|\cY^\theta|=\cO(SAH|\cY^\theta|)$.

\subsection{When $r^E$ belongs to a finite set of $d$ Rewards}\label{apx: rE
in finite set}

In this appendix, we consider a variant of the known-reward setting, in which
$r^E$ is unknown, but we have knowledge of a set $\cR=\{r^1,\dotsc,r^d\}$
containing $d\ge1$ reward functions, and we also know that $r^E\in\cR$.
We consider the following robust variant of RDM for this setting:
\begin{align}\label{eq: RDM variant set}
    \widehat{\pi}\in\argmin_{\pi\in\Pi^{\text{NM}}}\max\limits_{r\in\cR}
  \cW\Bigr{\eta_{r}^{\pi},\eta_{r}^{\pi^E}}.
\end{align}
To tackle this problem, we will proceed to Section \ref{sec: r known}, by
considering the \emph{cartesian product} of all the rewards in set $\cR$.
Specifically, we first present a class of non-Markovian policies sufficiently
expressive for addressing this task, and then we present two variants of \rsbc
and \rskt. Crucially, we will have exponential dependencies in the number of
rewards $d$ for both the computational and sample complexities.

Let us begin by presenting $\Pi(\cR)$, a generalization of $\Pi(r)$ to
multiple rewards:
\begin{align*}
    \Pi(\cR)\coloneqq\Bigc{\pi&\in\Pi^{\text{NM}}\,\Big|\,
    \exists \phi\in\Delta^\cA_{\dsb{H}\times\cS\times\cG_{r^1}\times\dotsc\times\cG_{r^d}}:\\
    &\pi(a|s,\omega)=\phi_h(a|s,G(\omega;r^1),\dotsc,G(\omega;r^d))\;
   \forall s\in\cS,a\in\cA,h\in\dsb{H},\omega\in\Omega_h
    }.
\end{align*}
Intuitively, $\Pi(\cR)$ contains policies that depend on the amount of rewards
collected so far for every possible reward $r^i$ in $\cR$. 

Now, let us define $\pi_\cR\in\Pi(\cR)$, an analogous of policy $\pi_r$ (Eq.
\ref{eq: def policy imitate same occ meas}).
For any set of rewards $\cR$ and expert policy $\pi^E\in\Pi^{\text{NM}}$, define
$\pi_{\cR}\in\Pi(\cR)$ as the policy whose probability of taking an action $a$ in
state $s$ with history $\omega\in\Omega_h$ coincides with the ``average''
probability with which $\pi^E$ selects $a$ in $s$ after accumulating
$G(\omega;r^i)$ reward for each reward $r^i\in\cR$:
\begin{align}\label{eq: def policy imitate same occ meas set}
    \pi_\cR(a|s,\omega)\coloneqq
      \frac{\P^{\pi^E}(s_h=s,\;a_h=a,\;G^1_h=G(\omega;r^1),\;\dotsc,\;G^d_h=G(\omega;r^d))}{
        \P^{\pi^E}(s_h=s,\;G^1_h=G(\omega;r^1),\;\dotsc,\;G^d_h=G(\omega;r^d))},
\end{align}
where we defined the random return at stage $h$ under reward $r^i\in\cR$ as
$G_h^i\coloneqq\sum_{h'=1}^{h-1}r_{h'}^i(s_{h'},a_{h'})$.
If the denominator is zero, we set $\pi_{r}(a|s,\omega)=1/A$.

We have the following result replicating Lemma \ref{lemma: same return
distribution}:
\begin{restatable}{lemma}{lemmasameretdistribset}\label{lemma: same return
distribution set}
    Let $\cM$ be any \MDPr, $\cR=\{r^1,\dotsc,r^d\}$ any set of $d\ge1$ rewards
    containing the unknown $r^E$, and $\pi^E\in\Pi^{\text{NM}}$ be any policy.
    Then, the policy $\pi_\cR\in\Pi(\cR)$ is a minimizer of Eq. \eqref{eq: RDM
    variant set}, and satisfies $\eta^{\pi_\cR}_{r^i}(g)=\eta^{\pi^E}_{r^i}(g)$
    for all $g$.
\end{restatable}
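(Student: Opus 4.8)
The plan is to mirror the proof of Lemma~\ref{lemma: same return distribution} almost verbatim, with the scalar cumulative reward replaced by the $d$-dimensional vector $(G^1_h,\dotsc,G^d_h)$ of cumulative rewards under all the rewards in $\cR$. First I would establish the multi-reward analogue of Lemma~\ref{lemma: Psg equal Psg}: for every $h\in\dsb{H}$, $s\in\cS$, and $(g^1,\dotsc,g^d)\in\cG_{r^1,h}\times\dotsc\times\cG_{r^d,h}$,
\begin{align*}
  \P^{\pi_\cR}\bigr{G^1_h=g^1,\dotsc,G^d_h=g^d,\,s_h=s}
  &=\P^{\pi^E}\bigr{G^1_h=g^1,\dotsc,G^d_h=g^d,\,s_h=s}.
\end{align*}
This is proved by induction on $h$. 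The base case $h=1$ is immediate, since no action has been taken yet and both sides equal $\indic{g^1=0,\dotsc,g^d=0}\indic{s=s_0}$. For the inductive step I would expand the left-hand side at stage $h$ via the chain rule over the state, action, and cumulative-reward vector at stage $h-1$, use the Markovianity of $p$, and exploit that $\pi_\cR$ chooses actions as a function of $(h,s,G^1,\dotsc,G^d)$ only. Substituting the definition~\eqref{eq: def policy imitate same occ meas set} of $\pi_\cR$ (with the convention that the corresponding term vanishes when the denominator is zero, exactly as in Lemma~\ref{lemma: Psg equal Psg}) and applying the inductive hypothesis at stage $h-1$ closes the induction; every manipulation is identical to the one in Lemma~\ref{lemma: Psg equal Psg}, carrying the vector $(g^1,\dotsc,g^d)$ wherever a single $g$ appears.

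Given this joint-distribution identity, I would then fix an arbitrary $r^i\in\cR$ and derive $\eta^{\pi_\cR}_{r^i}(g)=\eta^{\pi^E}_{r^i}(g)$ for all $g$. Writing $\eta^{\pi_\cR}_{r^i}(g)=\P^{\pi_\cR}\bigr{G^i_H+r^i_H(s_H,a_H)=g}$ and marginalizing over the whole vector $(G^1_H,\dotsc,G^d_H)$, the state $s_H$, and the last action $a_H$, the joint-distribution identity replaces $\P^{\pi_\cR}$ by $\P^{\pi^E}$; then the definition of $\pi_\cR$ as the $\pi^E$-averaged conditional action probability (again, zero-denominator terms vanish) recombines the factors into $\P^{\pi^E}\bigr{G^i_H+r^i_H(s_H,a_H)=g}=\eta^{\pi^E}_{r^i}(g)$. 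This is exactly the chain of equalities in the proof of Lemma~\ref{lemma: same return distribution}, now with the $d-1$ extra coordinates $\{g^j\}_{j\neq i}$ summed out harmlessly.

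Finally, $\eta^{\pi_\cR}_{r^i}=\eta^{\pi^E}_{r^i}$ gives $\cW\bigr{\eta^{\pi_\cR}_{r^i},\eta^{\pi^E}_{r^i}}=0$ for every $i\in\dsb{d}$, hence $\max_{r\in\cR}\cW\bigr{\eta^{\pi_\cR}_r,\eta^{\pi^E}_r}=0$. Since the objective in~\eqref{eq: RDM variant set} is nonnegative and $\pi_\cR\in\Pi(\cR)\subseteq\Pi^{\text{NM}}$, this value is optimal, so $\pi_\cR$ is a minimizer. The main obstacle is purely bookkeeping: the inductive step of the joint-distribution identity is a notationally heavy generalization of Lemma~\ref{lemma: Psg equal Psg}, and one must keep track of the fact that each $\cG_{r^i,h}$ is finite (so the product set is finite and all the sums are well defined); conceptually nothing new is required beyond the observation that the vector of cumulative rewards is a \emph{sufficient statistic} for the action choices of $\pi_\cR$.
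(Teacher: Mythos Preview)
Your proposal is correct and follows essentially the same approach as the paper: the paper factors the inductive joint-distribution identity into a separate Lemma~\ref{lemma: Psg equal Psg set} (the $d$-reward analogue of Lemma~\ref{lemma: Psg equal Psg}) and then uses it exactly as you describe to derive $\eta^{\pi_\cR}_{r^i}=\eta^{\pi^E}_{r^i}$ by marginalizing over the full vector $(g^1,\dotsc,g^d)$. The minimizer claim is not spelled out in the paper's proof but is the immediate corollary you give.
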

We prove it in Appendix \ref{apx: proofs set rewards}.
However, $\Pi(\cR)=\Pi^{\text{NM}}$ of course for some rewards. Thus, we can
discretize, by defining:
$\cR^\theta\coloneqq\{r^1_\theta,\dotsc,r^d_\theta\}$, i.e., by discretizing
each reward inside $\cR$. Then, it should be clear that the memory required for
storing a policy in $\Pi(\cR^\theta)$ scales as $\cO(SAH|\cY^\theta|^d)$
(because we do the cartesian product).
Analogously to Lemma \ref{lemma: apx policies}, we can bound the approximation
error (proof in Appendix \ref{apx: proofs set rewards}):
\begin{restatable}{lemma}{lemmasameretdistribwithlesspoliciesset}\label{lemma:
apx policies set}
  Let $\theta\in(0,1]$.
  Let $\cM$ be any \MDPr, $\cR=\{r^1,\dotsc,r^d\}$ any set of $d\ge1$ rewards
containing the unknown $r^E$, and $\pi^E\in\Pi^{\text{NM}}$ be any policy.
  Then, the policy $\pi_{\cR^\theta}\in\Pi(\cR^\theta)$ satisfies
  $    \cW\bigr{\eta^{\pi_{\cR^\theta}}_{r^E},\eta^{\pi^E}_{r^E}}
    \le H\theta$.
\end{restatable}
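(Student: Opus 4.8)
The plan is to follow the proof of Lemma~\ref{lemma: apx policies} line for line, replacing the single-reward objects $r^E_\theta$ and $\pi_{r^E_\theta}$ by their multi-reward analogues $\cR^\theta$ and $\pi_{\cR^\theta}$, and invoking Lemma~\ref{lemma: same return distribution set} wherever the $d=1$ proof invokes Lemma~\ref{lemma: same return distribution}. Write $r^E_\theta$ for the discretization of $r^E$ given by Eq.~\eqref{eq: discretized r}; since $r^E\in\cR$ we have $r^E_\theta\in\cR^\theta$, so $\pi_{\cR^\theta}$ is well defined, acts Markovianly with respect to the cumulative $r^E_\theta$-reward in the corresponding augmented MDP, and in particular $\eta^{\pi_{\cR^\theta}}_{r^E_\theta}$ is meaningful.

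First I would split the target distance with two applications of the triangle inequality for $\cW$ at the $r^E$-versus-$r^E_\theta$ junctions:
\[
  \cW\bigr{\eta^{\pi_{\cR^\theta}}_{r^E},\eta^{\pi^E}_{r^E}}
  \le
  \cW\bigr{\eta^{\pi_{\cR^\theta}}_{r^E},\eta^{\pi_{\cR^\theta}}_{r^E_\theta}}
  + \cW\bigr{\eta^{\pi_{\cR^\theta}}_{r^E_\theta},\eta^{\pi^E}_{r^E_\theta}}
  + \cW\bigr{\eta^{\pi^E}_{r^E_\theta},\eta^{\pi^E}_{r^E}}.
\]
The two outer terms I would bound by Lemma~\ref{lemma: different r same p}, each giving $H\|r^E-r^E_\theta\|_\infty$, and then use that Eq.~\eqref{eq: discretized r} rounds every reward value to the nearest point of $\cY^\theta_2$, so that $\|r^E-r^E_\theta\|_\infty\le\theta/2$; this contributes $2\cdot H\theta/2=H\theta$ in total. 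This step is purely routine and identical to the $d=1$ case.

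The middle term I would eliminate with Lemma~\ref{lemma: same return distribution set}: applied to the reward set $\cR^\theta$ and the expert $\pi^E$, it gives $\eta^{\pi_{\cR^\theta}}_{r^i_\theta}(g)=\eta^{\pi^E}_{r^i_\theta}(g)$ for every $g$ and every $r^i_\theta\in\cR^\theta$, in particular $\eta^{\pi_{\cR^\theta}}_{r^E_\theta}=\eta^{\pi^E}_{r^E_\theta}$, hence $\cW\bigr{\eta^{\pi_{\cR^\theta}}_{r^E_\theta},\eta^{\pi^E}_{r^E_\theta}}=0$. Summing the three bounds then yields $\cW\bigr{\eta^{\pi_{\cR^\theta}}_{r^E},\eta^{\pi^E}_{r^E}}\le H\theta$, which is the claim.

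The step requiring the most care — and the only genuine obstacle — is the invocation of Lemma~\ref{lemma: same return distribution set} with the set $\cR^\theta$, which in general does not literally contain $r^E$ (it contains $r^E_\theta$). I would handle this by noting that the return-distribution-matching half of Lemma~\ref{lemma: same return distribution set} — the only half used here — does not rely on the hypothesis $r^E\in\cR$: it is simply the $d$-reward analogue of Lemma~\ref{lemma: same return distribution}, proved via the $d$-dimensional version of Lemma~\ref{lemma: Psg equal Psg}, which matches $\P^{\pi_\cR}$ and $\P^{\pi^E}$ on the joint events $\{s_h=s,\ G_h^1=g_1,\dots,G_h^d=g_d\}$ by the same induction on $h$. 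Concretely I would either phrase Lemma~\ref{lemma: same return distribution set} so that its matching conclusion holds for an arbitrary finite reward set (keeping the $r^E\in\cR$ hypothesis only for the ``minimizer of Eq.~\eqref{eq: RDM variant set}'' claim), or add a one-line remark to that effect; after that, the application to $\cR^\theta$ is legitimate and everything else is a direct transcription of the $d=1$ argument.
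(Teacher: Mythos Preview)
Your proposal is correct and follows essentially the same route as the paper: triangle inequality to insert $r^E_\theta$, two applications of Lemma~\ref{lemma: different r same p} on the outer terms, and Lemma~\ref{lemma: same return distribution set} applied to $\cR^\theta$ to kill the middle term. Your explicit remark that the return-distribution-matching conclusion of Lemma~\ref{lemma: same return distribution set} does not actually require $r^E\in\cR$ (only $r^E_\theta\in\cR^\theta$ is needed) is a useful clarification that the paper's proof leaves implicit.
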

Now, we address this problem with a variant of \rsbc for the no-interaction
setting, and a variant of \rskt for the known-transition setting.

\begin{figure}[t!]
\centering
\begin{minipage}[t]{0.98\linewidth}
\input{rs_bc-set.tex}
\end{minipage}
\end{figure}

We begin with a variant of \rsbc, reported in Algorithm \ref{alg: rsbc set}.
Simply, we count the (discretized) occurrences for every reward in $\cR$.
We have the following result (proof in Appendix \ref{apx: proofs set rewards
rsbc}):
\begin{restatable}{thr}{thrbcsamplecompknownrset}
\label{thr: rsbc set}
Let $\epsilon\in(0,H]$ and $\delta\in(0,1)$.
Let $\cM$ be any \MDPr, $\cR=\{r^1,\dotsc,r^d\}$ any set of $d\ge1$ rewards
containing the unknown $r^E$, and $\pi^E\in\Pi^{\text{NM}}$ be any policy.
Then, choosing $\theta=\epsilon/(4H)$, with probability at least $1-\delta$, the
policy $\widehat{\pi}$ output by Algorithm~\ref{alg: rsbc set} satisfies
$\cW(\eta^{\pi^E}_{r^E}, \eta^{\widehat{\pi}}_{r^E})\le\epsilon$,
with a number of samples:
\begin{align}\label{eq: sample complexity rsbc set}
    N\le \widetilde{\cO}\biggr{\frac{SH^{4+2d}d^2\ln\frac{1}{\delta}}
{\epsilon^{2+d}}\Bigr{A+ \ln\frac{1}{\delta}}}.
\end{align}
\end{restatable}
Observe that, for $d=1$, we retrieve the known-reward setting, and the number of
samples in Eq. \eqref{eq: sample complexity rsbc set} matches that of \rsbc (Eq.
\ref{eq: sample complexity rsbc}).

\begin{figure}[t!]
\centering
\begin{minipage}[t]{0.98\linewidth}
\input{rs_kt-set.tex}
\end{minipage}
\end{figure}

Now, we do the same for \rskt. See Algorithm \ref{alg: rskt set} for a variant
of the algorithm. We have (proof in Appendix \ref{apx: proofs set rewards
rskt}):
\begin{restatable}{thr}{rsktupperboundset}\label{thr: rskt set}
Let $\epsilon\in(0,H]$ and $\delta\in(0,1)$.
Let $\cM$ be any \MDPr, $\cR=\{r^1,\dotsc,r^d\}$ any set of $d\ge1$ rewards
containing the unknown $r^E$, and $\pi^E\in\Pi^{\text{NM}}$ be any policy.
Assume that the optimization problem in
Line~\ref{line: kt compute policy set} is solved exactly.
Then, choosing $\theta=\epsilon/(4H)$, with probability $1-\delta$, the policy
$\widehat{\pi}$ output by Algorithm~\ref{alg: rskt set} satisfies
$\cW(\eta^{\pi^E}_{r^E}, \eta^{\widehat{\pi}}_{r^E})\le\epsilon$,
with:
\begin{align}\label{eq: sample complexity rskt set}
    N\le \cO\biggr{\frac{H^2}{\epsilon^2}\ln\frac{d}{\delta}}.
\end{align}
\end{restatable}
Interestingly, the bound here is still polynomial.
Now, we show an extension of the LP formulation in Eq. \eqref{eq: opt problem LP
rskt} for addressing Line \ref{line: kt compute policy set} of Algorithm
\ref{alg: rskt set}.
Specifically, we just construct an augmented MDP that keeps track of the past
rewards for every possible reward in $\cR$, and note that $\Pi(\cR^\theta)$
describes the set of Markovian policies in this MDP.
So, we want to match a sort of augmented return distribution for this problem:
\begin{align}\label{eq: opt problem LP rskt set}
  &\min_{d\in\cK,\eta^1\in\Delta^{(\cY^\theta)},\dotsc,\eta^d\in\Delta^{(\cY^\theta)}}
  \max\limits_{r^i\in\cR}\cW\!\left(\eta^i,\widehat{\eta}_{r^i}\right)\\
  &\quad\text{s.t. }\;\eta^i(g)=\sum_{s,a,g^1,\dotsc,g^{i-1},g^{i+1},\dotsc,g^d}
  d_H(s,g^1,\dotsc,g^{i-1},g-r_{\theta,H}^i(s,a),g^{i+1},\dotsc,g^d,a)\\
  &\qquad\qquad\qquad\qquad\quad
  \forall g\in \cY^\theta\forall i\in\dsb{d}.
\end{align}
Intuitively, the constraints above enforce that $\eta^i$ is the return
distribution induced by the occupancy measure $d$ w.r.t. the reward
$r^i_\theta$, for all the rewards $r^i\in\cR$, and $\cK$ denotes the set of
feasible occupancy measures in this augmented MDP \citep{puterman1994markov}:
\begin{align*}
  \scalebox{0.95}{$  \displaystyle
  \cK\coloneqq\Bigc{d\in\Delta_{\dsb{H}}^{\overline{\cS}\times\cA}\,\Big|\,
  \sum_a d_1(\overline{s}_0,a)=1
  \wedge\forall \overline{s}\in\overline{\cS},h\ge 2:\;
  \sum_a d_h(\overline{s},a)=
\sum_{\overline{s}',a'}d_{h-1}(\overline{s}',a')\overline{p}_{h-1}(\overline{s}|\overline{s}',a')}
  $},
\end{align*}
where the state space is
$\overline{\cS}\coloneqq\cS\times\cY^\theta\times\dotsc\times\cY^\theta$ $d$
times, $\overline{s}_0\coloneqq(s_0,0,\dotsc,0)$ and the transition model is:
\begin{align*}
  p_h(s',g^1,\dotsc,g^d|s,\overline{g}^1,\dotsc,\overline{g}^d,a)\coloneqq
  p_h(s'|s,a)\indic{r_h^1(s,a)+\overline{g}^1=g^1}\dotsc
  \indic{r_h^d(s,a)+\overline{g}^d=g^d}.
\end{align*}
In words, Eq.~\eqref{eq: opt problem LP rskt set} searches for an occupancy
measure $d\in\cK$ that induces the return distribution $\eta^i$ closest to
$\widehat{\eta}_{r^i}$.
From such a solution, a policy $\widehat{\pi}\in\Pi(\cR^\theta)$ with occupancy
$d^{\widehat{\pi}}=d$ (and thus return distribution
$\eta^{\widehat{\pi}}_{r^i_\theta}=\eta^i$ for all $i$) can be recovered via:
\begin{align*}
\widehat{\pi}(a|s,\omega)=\frac{d_h(s,G(\omega;r_\theta^1),\dotsc,G(\omega;r_\theta^d),a)}{
\sum_{a'}d_h(s,G(\omega;r_\theta^1),\dotsc,G(\omega;r_\theta^d),a')} \quad
\forall h\in\dsb{H},\;s\in\cS,\;a\in\cA,\;\omega\in\Omega_h,
\end{align*}
when the denominator is nonzero, and $\widehat{\pi}(a|s,\omega)=1/A$
otherwise \citep{syed2008allinear}.
We remark that, being the set $\cR$ finite, then the minmax above can be
formulated as an LP minimization problem.

\subsubsection{Technical Results and Proofs for the Policy Class}\label{apx: proofs set rewards}

\lemmasameretdistribset*
\begin{proof}
  To prove this result, we show that, for any $r^i\in\cR$, the return
  distribution $\eta^{\pi_\cR}_{r^i}$ coincides with the expert's return
  distribution $\eta^{\pi^E}_{r^i}$.

  For any $r^i\in\cR$ and $g'\in[0,H]$, we can write (we use
  $G_h^j\coloneqq\sum_{h'=1}^{h-1}r_{h'}^j(s_{h'},a_{h'})$ for all
  $j\in\dsb{d}$):
  \begin{align*}
    \eta^{\pi_\cR}_{r^i}(g')&=\P^{\pi_\cR}(G_H^i+r^i_H(s_H,a_H)=g')\\
    &\markref{(1)}{=}\sum\limits_{g^i\in\cG_{r^i,H}}\sum\limits_{s\in\cS}
    \P^{\pi_\cR}(G_H^i=g^i,s_H=s,r^i_H(s,a_H)=g'-g^i)\\
    &=\com{\sum\limits_{g^1\in\cG_{r^1,H}}\dotsc
    \sum\limits_{g^d\in\cG_{r^d,H}}}
    \sum\limits_{s\in\cS}\\
    &\qquad\qquad
    \P^{\pi_\cR}(\com{G_H^1=g^1,\dotsc,G_H^d=g^d},s_H=s,r^i_H(s,a_H)=g'-g^i)\\
    %%%%%%%%
    &=\sum\limits_{g^1\in\cG_{r^1,H}}\dotsc
    \sum\limits_{g^d\in\cG_{r^d,H}}
    \sum\limits_{s\in\cS}\com{\sum\limits_{a\in\cA:r^i_H(s,a)=g'-g^i}}\\
    &\qquad\qquad
    \P^{\pi_\cR}(G_H^1=g^1,\dotsc,G_H^d=g^d,s_H=s,\com{a_H=a})\\
    %%%%%%%%
    &\markref{(2)}{=}\sum\limits_{g^1\in\cG_{r^1,H}}\dotsc
    \sum\limits_{g^d\in\cG_{r^d,H}}
    \sum\limits_{s\in\cS}\sum\limits_{a\in\cA:r^i_H(s,a)=g'-g^i}\\
    &\qquad\qquad
    \P^{\pi_\cR}(G_H^1=g^1,\dotsc,G_H^d=g^d,s_H=s)
    \com{\pi_\cR(a|s,g^1,\dotsc,g^d)}\\
    %%%%%%%%
    &\markref{(3)}{=}\sum\limits_{g^1\in\cG_{r^1,H}}\dotsc
    \sum\limits_{g^d\in\cG_{r^d,H}}
    \sum\limits_{s\in\cS}\sum\limits_{a\in\cA:r^i_H(s,a)=g'-g^i}\\
    &\qquad\qquad
    \com{\P^{\pi^E}}(G_H^1=g^1,\dotsc,G_H^d=g^d,s_H=s)
    \pi_\cR(a|s,g^1,\dotsc,g^d)\\
    %%%%%%%%
    &=\sum\limits_{g^1\in\cG_{r^1,H}}\dotsc
    \sum\limits_{g^d\in\cG_{r^d,H}}
    \sum\limits_{s\in\cS}\com{\P^{\pi^E}(G_H^1=g^1,\dotsc,G_H^d=g^d,s_H=s)}\\
    &\qquad\qquad
    \sum\limits_{a\in\cA:r^i_H(s,a)=g'-g^i}
    \pi_\cR(a|s,g^1,\dotsc,g^d)\\
    %%%%%%%%
    &\markref{(4)}{=}\sum\limits_{g^1\in\cG_{r^1,H}}\dotsc
    \sum\limits_{g^d\in\cG_{r^d,H}}
    \sum\limits_{s\in\cS}\P^{\pi^E}(G_H^1=g^1,\dotsc,G_H^d=g^d,s_H=s)\\
    &\qquad\qquad
    \sum\limits_{a\in\cA:r^i_H(s,a)=g'-g^i}
    \com{\frac{\P^{\pi^E}(s_H=s,\;a_H=a,\;G^1_H=g^1,\;\dotsc,\;G^d_H=g^d)}{
        \P^{\pi^E}(s_H=s,\;G^1_H=g^1,\;\dotsc,\;G^d_H=g^d)}}\\
    %%%%%%%%
    &=\sum\limits_{g^1\in\cG_{r^1,H}}\dotsc
    \sum\limits_{g^d\in\cG_{r^d,H}}
    \sum\limits_{s\in\cS}\sum\limits_{a\in\cA}\com{\indic{r^i_H(s,a)=g'-g^i}}\\
    &\qquad\qquad
    \P^{\pi^E}(s_H=s,\;a_H=a,\;G^1_H=g^1,\;\dotsc,\;G^d_H=g^d)\\
    %%%%%%%%
    &=\com{\sum\limits_{g^i\in\cG_{r^i,H}}}
    \sum\limits_{s\in\cS}\sum\limits_{a\in\cA}\indic{r^i_H(s,a)=g'-g^i}
    \P^{\pi^E}(s_H=s,\;a_H=a,\;\com{G^i_H=g^i})\\
    %%%%%%%%
    &=\eta^{\pi^E}_{r^i}(g'),
  \end{align*}
  where at (1) we define symbol $\cG_{r,H}\coloneqq\{g\in[0,H-1]\,| \,\exists
\omega\in\Omega_H:\, G(\omega;r)=g\}$ for any $r$, at (2) we recognize that, by
definition, $\pi_\cR$ takes actions only depending on the current state, stage
and past rewards for any $r^i$, and we denote with brevity this fact with
$\pi_\cR(a|s,g^1,\dotsc,g^d)$, at (3) we use Lemma \ref{lemma: Psg equal Psg
set}, at (4) we use the definition of $\pi_\cR(a|s,g)$ (Eq. \ref{eq: def policy
imitate same occ meas set}) where the denominator is not 0, noting that, in that
case, the entire expression evaluates to 0.
\end{proof}

\lemmasameretdistribwithlesspoliciesset*
\begin{proof}
    For any reward $r^i\in\cR$, we can write:
  \begin{align*}
    \cW\Bigr{\eta^{\pi_{\cR^\theta}}_{r^i},\eta^{\pi^E}_{r^i}}&\markref{(1)}{\le}
    \cW\Bigr{\eta^{\pi_{\cR^\theta}}_{r^i},\com{\eta^{\pi_{\cR^\theta}}_{r^i_\theta}}}
    + \cW\Bigr{\com{\eta^{\pi_{\cR^\theta}}_{r^i_\theta}},\com{\eta^{\pi^E}_{r^i_\theta}}}
    +\cW\Bigr{\com{\eta^{\pi^E}_{r^i_\theta}},\eta^{\pi^E}_{r^i}}\\
    &\markref{(2)}{\le}
    2H\|r^i-r^i_\theta\|_\infty
    +
    \cW\Bigr{\com{\eta^{\pi_{\cR^\theta}}_{r^i_\theta}},{\eta^{\pi^E}_{r^i_\theta}}}\\
    &\markref{(3)}{\le}
    H\theta
    +
    \cW\Bigr{\com{\eta^{\pi_{\cR^\theta}}_{r^i_\theta}},{\eta^{\pi^E}_{r^i_\theta}}}\\
    &\markref{(4)}{\le}
    H\theta,
  \end{align*}
  where at (1) we apply twice the triangle's inequality, at (2) we apply twice
  Lemma \ref{lemma: different r same p}, at (3) we realize that, by definition
  of $r^i_\theta$, it holds that $\|r^i-r^i_\theta\|_\infty\le\theta/2$, and
  finally, at (4), we apply Lemma \ref{lemma: same return distribution set} with
  set $\cR^\theta$ and expert's policy $\pi^E$.
  
  The proof is concluded after having observed that $r^E\in\cR$ by hypothesis,
  and so these passages hold also for $r^E$.
\end{proof}

\begin{restatable}{lemma}{etanmequaletamset}\label{lemma: Psg equal Psg set}
Let $\cM$ be any \MDPr, $\cR=\{r^1,\dotsc,r^d\}$ any set of $d\ge1$ rewards
containing the unknown $r^E$, and $\pi\in\Pi^{\text{NM}}$ be any policy.
Let $\pi_\cR\in\Pi(\cR)$ be the policy defined as in Eq. \eqref{eq: def policy
imitate same occ meas set} for expert's policy $\pi$.
Then, for all $h\in\dsb{H}$, $s\in\cS$ and $g^1,\dotsc,g^d\in[0,h-1]$, it holds that:
  \begin{align*}
    \P^{\pi_\cR}\Bigr{G^1_h=g^1\wedge\dotsc\wedge G^d_h=g^d\wedge s_h=s}=
    \P^{\pi}\Bigr{G^1_h=g^1\wedge\dotsc\wedge G^d_h=g^d \wedge s_h=s},
  \end{align*}
  where we used $G_h^i\coloneqq\sum_{h'=1}^{h-1}r_{h'}^i(s_{h'},a_{h'})$.
\end{restatable}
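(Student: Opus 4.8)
The plan is to prove the identity by induction on the stage $h\in\dsb{H}$, mirroring the single-reward argument of Lemma~\ref{lemma: Psg equal Psg}, with the scalar cumulative reward $G_h$ replaced throughout by the $d$-tuple $\mathbf{G}_h\coloneqq(G^1_h,\dots,G^d_h)$ and the reward $r$ replaced by the family $(r^1,\dots,r^d)$. I will write $\mathbf{g}=(g^1,\dots,g^d)$ and, for a trajectory $\omega$, $\mathbf{G}(\omega)\coloneqq(G(\omega;r^1),\dots,G(\omega;r^d))$. The key structural fact I will exploit is that, by definition of $\Pi(\cR)$, the action probabilities of $\pi_\cR$ in $(s,h,\omega)$ depend on $\omega$ only through $\mathbf{G}(\omega)$, so that $\pi_\cR(a|s,\omega)$ may be written unambiguously as $\pi_\cR(a|s,\mathbf{g})$ whenever $\mathbf{G}(\omega)=\mathbf{g}$.

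For the base case $h=1$, no action has been played, hence $\mathbf{G}_1=(0,\dots,0)$ deterministically and $s_1=s_0$, so both sides equal $\indic{\mathbf{g}=(0,\dots,0)}\indic{s=s_0}$. For the inductive step, fix $h\in\{2,\dots,H\}$, assume the claim at stage $h-1$, and fix $s'\in\cS$ and an attainable $\mathbf{g}$. I will expand $\P^{\pi_\cR}(\mathbf{G}_h=\mathbf{g}\wedge s_h=s')$ as a sum over a length-$(h-1)$ trajectory $\omega$ together with $(s_{h-1},a_{h-1})=(s,a)$, restricted to those for which $\mathbf{G}(\omega)$ and $(r^i_{h-1}(s,a))_{i}$ add up componentwise to $\mathbf{g}$; the relevant values of $\mathbf{G}(\omega)$ range over a finite product of attainable-cumulative-reward sets $\prod_i\cG_{r^i,h-1}$ (finiteness is guaranteed by the tabular MDP with deterministic rewards). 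Each summand factors, by the chain rule and Markovianity of $p$, as $\P^{\pi_\cR}(\omega_{h-2}=\omega\wedge s_{h-1}=s)\cdot\pi_\cR(a|\omega,s)\cdot p_{h-1}(s'|s,a)$. Since $\pi_\cR(a|\omega,s)=\pi_\cR(a|s,\mathbf{G}(\omega))$ is constant over all $\omega$ sharing a fixed reward vector $\bar{\mathbf{g}}$, I pull it (and $p_{h-1}(s'|s,a)$) out of the inner sum over such $\omega$, which collapses to $\P^{\pi_\cR}(\mathbf{G}_{h-1}=\bar{\mathbf{g}}\wedge s_{h-1}=s)$; the induction hypothesis replaces this by $\P^{\pi}(\mathbf{G}_{h-1}=\bar{\mathbf{g}}\wedge s_{h-1}=s)$. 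Finally, substituting the definition~\eqref{eq: def policy imitate same occ meas set} of $\pi_\cR(a|s,\bar{\mathbf{g}})$: when its denominator $\P^{\pi}(\mathbf{G}_{h-1}=\bar{\mathbf{g}}\wedge s_{h-1}=s)$ is positive the product telescopes to $\P^{\pi}(\mathbf{G}_{h-1}=\bar{\mathbf{g}}\wedge s_{h-1}=s\wedge a_{h-1}=a)$, and when it is zero the summand already vanishes, so the substitution changes nothing; re-summing over $(s,a)$ and $\bar{\mathbf{g}}$ and reassembling via the chain rule and Markovianity for $\pi$ yields $\P^{\pi}(\mathbf{G}_h=\mathbf{g}\wedge s_h=s')$, closing the induction. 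Since by hypothesis $r^E\in\cR$, the claim in particular holds for the $r^E$-coordinate, which is the only consequence needed in the proofs of Lemmas~\ref{lemma: same return distribution set} and~\ref{lemma: apx policies set}.

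I expect the main obstacle to be purely notational bookkeeping: organizing the nested sums over $\prod_i\cG_{r^i,h-1}$ and over $(\omega,s,a)$ so that the ``pull the policy out of the sum'' step is visibly licensed by $\pi_\cR\in\Pi(\cR)$, and cleanly handling the degenerate case in which the normalizer in~\eqref{eq: def policy imitate same occ meas set} vanishes (where $\pi_\cR$ is defined to be uniform, but the matching probability mass is zero and so contributes nothing on either side). Conceptually nothing new is required beyond the single-reward Lemma~\ref{lemma: Psg equal Psg}; the only genuine change is propagating a $d$-tuple of cumulative rewards rather than a scalar one.
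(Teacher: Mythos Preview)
Your proposal is correct and follows essentially the same approach as the paper's own proof: induction on $h$, base case trivially verified at $h=1$, and an inductive step that expands the probability over $(\omega,s,a)$, uses the chain rule and Markovianity to factor, pulls $\pi_\cR$ out of the $\omega$-sum because it depends only on the cumulative-reward vector, applies the induction hypothesis, substitutes the definition~\eqref{eq: def policy imitate same occ meas set} (with the zero-denominator case handled exactly as you describe), and reassembles. The paper's argument is line-for-line the multi-reward analogue of Lemma~\ref{lemma: Psg equal Psg}, just as you anticipated.
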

\begin{proof}
We prove the result by induction.
  Let us begin with the base case: $h=1$. For all $s\in\cS$ and
  $g^,\dotsc,g^d\in\{0\}^d$, we have:
  \begin{align*}
    \P^{\pi_\cR}\Bigr{G^1_1=g^1\wedge\dotsc\wedge G^d_1=g^d\wedge s_1=s}&=\indic{g^1=0\wedge\dotsc\wedge g^d=0}
    \indic{s=s_0}\\&=
    \P^{\pi}\Bigr{G^1_1=g^1\wedge\dotsc\wedge G^d_1=g^d\wedge s_1=s},
  \end{align*}
  where we noticed that, for $h=1$, no action is taken yet.
  Now, let us consider any stage $h\in\{2,3,\dotsc,H\}$, and let us make the
  induction hypothesis that, for all $h'\in\dsb{h-1}$, for all $s\in\cS$ and
  $g^1,\dotsc,g^d\in[0,h'-1]^d$, it holds that:
  \begin{align*}
    \P^{\pi_\cR}\Bigr{G^1_{h'}=g^1\wedge\dotsc\wedge G^d_{h'}=g^d\wedge s_{h'}=s}=
    \P^{\pi}\Bigr{G^1_{h'}=g^1\wedge\dotsc\wedge G^d_{h'}=g^d\wedge s_{h'}=s}.
  \end{align*}
  Then, for any $s'\in\cS$ and
  $\overline{g}^1,\dotsc,\overline{g}^d\in[0,h-1]^d$, we can write:
  \begin{align*}
    &\P^{\pi_\cR}(G^1_{h}=\overline{g}^1\wedge\dotsc\wedge G^d_{h}=\overline{g}^d\wedge s_h=s')\\
    &\qquad\qquad\markref{(1)}{=}
    \sum\limits_{\substack{\omega\in\Omega_{h-1},(s,a)\in\SA:\\G(\omega;r^i)+r_{h-1}^i(s,a)=\overline{g}^i\;\forall i}}
    \P^{\pi_\cR}(\omega_{h-1}=\omega\wedge s_{h-1}=s\wedge a_{h-1}=a\wedge s_h=s')\\
    &\qquad\qquad\markref{(2)}{=}\com{\sum\limits_{g^1\in\cG_{r^1,h-1}}
    \dotsc\sum\limits_{g^d\in\cG_{r^d,h-1}}
    \sum\limits_{\substack{\omega\in\Omega_{h-1}:\\
    G(\omega;r^i)=g^i\;\forall i}}\sum\limits_{\substack{(s,a)\in\SA:\\
    r_{h-1}^i(s,a)=\overline{g}^i-g^i\;\forall i}}}\\
    &\qquad\qquad\qquad\qquad
    \P^{\pi_\cR}(\omega_{h-1}=\omega\wedge s_{h-1}=s\wedge a_{h-1}=a\wedge s_h=s')\\
    &\qquad\qquad\markref{(3)}{=}
    \sum\limits_{g^1\in\cG_{r^1,h-1}}
    \dotsc\sum\limits_{g^d\in\cG_{r^d,h-1}}
    \sum\limits_{\substack{\omega\in\Omega_{h-1}:\\
    G(\omega;r^i)=g^i\;\forall i}}\sum\limits_{\substack{(s,a)\in\SA:\\
    r_{h-1}^i(s,a)=\overline{g}^i-g^i\;\forall i}}\com{\P^{\pi_\cR}(\omega_{h-1}=\omega\wedge s_{h-1}=s)}\\
    &\qquad\qquad\qquad\qquad
    \com{\cdot\P^{\pi_\cR}(a_{h-1}=a|\omega,s)\P^{\pi_\cR}(s_h=s'|\omega,s,a)}\\
    &\qquad\qquad\markref{(4)}{=} \sum\limits_{g^1\in\cG_{r^1,h-1}}
    \dotsc\sum\limits_{g^d\in\cG_{r^d,h-1}}
    \sum\limits_{\substack{\omega\in\Omega_{h-1}:\\
    G(\omega;r^i)=g^i\;\forall i}}\sum\limits_{\substack{(s,a)\in\SA:\\
    r_{h-1}^i(s,a)=\overline{g}^i-g^i\;\forall i}}\P^{\pi_\cR}(\omega_{h-1}=\omega\wedge s_{h-1}=s)\\
    &\qquad\qquad\qquad\qquad
    \cdot\P^{\pi_\cR}(a_{h-1}=a|\omega,s)\com{p_{h-1}(s'|s,a)}\\
    &\qquad\qquad\markref{(5)}{=}
    \sum\limits_{g^1\in\cG_{r^1,h-1}}\dotsc\sum\limits_{g^d\in\cG_{r^d,h-1}}
    \sum\limits_{\substack{\omega\in\Omega_{h-1}:\\G(\omega;r^i)=g^i\;\forall i}}
    \sum\limits_{\substack{(s,a)\in\SA:\\r_{h-1}^i(s,a)=\overline{g}^i-g^i\;\forall i}}
    \P^{\pi_\cR}(\omega_{h-1}=\omega\wedge s_{h-1}=s)\\
    &\qquad\qquad\qquad\qquad
    \cdot\com{\pi_\cR(a|\omega,s)}p_{h-1}(s'|s,a)\\
    &\qquad\qquad\markref{(6)}{=}\sum\limits_{g^1\in\cG_{r^1,h-1}}
    \dotsc\sum\limits_{g^d\in\cG_{r^d,h-1}}
    \com{
      \sum\limits_{\substack{(s,a)\in\SA:\\r_{h-1}^i(s,a)=\overline{g}^i-g^i\;\forall i}}
    \sum\limits_{\substack{\omega\in\Omega_{h-1}:\\G(\omega;r^i)=g^i\;\forall i}}
    }
    \P^{\pi_\cR}(\omega_{h-1}=\omega\wedge s_{h-1}=s)\\
    &\qquad\qquad\qquad\qquad
    \cdot\pi_\cR(a|\omega,s)p_{h-1}(s'|s,a)\\
    &\qquad\qquad\markref{(7)}{=}\sum\limits_{g^1\in\cG_{r^1,h-1}}
    \dotsc\sum\limits_{g^d\in\cG_{r^d,h-1}}
    \sum\limits_{\substack{(s,a)\in\SA:\\r_{h-1}^i(s,a)=\overline{g}^i-g^i\;\forall i}}\\
    &\qquad\qquad\qquad\qquad
    \com{\pi_\cR(a|g^1,\dotsc,g^d,s)p_{h-1}(s'|s,a)}
    \sum\limits_{\substack{\omega\in\Omega_{h-1}:\\G(\omega;r^i)=g^i\;\forall i}}
    \P^{\pi_\cR}(\omega_{h-1}=\omega\wedge s_{h-1}=s)\\
    &\qquad\qquad=\sum\limits_{g^1\in\cG_{r^1,h-1}}
    \dotsc\sum\limits_{g^d\in\cG_{r^d,h-1}}
    \sum\limits_{\substack{(s,a)\in\SA:\\r_{h-1}^i(s,a)=\overline{g}^i-g^i\;\forall i}}
    \pi_\cR(a|g^1,\dotsc,g^d,s)p_{h-1}(s'|s,a)
    \\&\qquad\qquad\qquad\qquad
    \com{\P^{\pi_\cR}(G_{h-1}^1=g^1\wedge\dotsc\wedge G_{h-1}^d=g^d\wedge s_{h-1}=s)}\\
    &\qquad\qquad\markref{(8)}{=}\sum\limits_{g^1\in\cG_{r^1,h-1}}
    \dotsc\sum\limits_{g^d\in\cG_{r^d,h-1}}
    \sum\limits_{\substack{(s,a)\in\SA:\\r_{h-1}^i(s,a)=\overline{g}^i-g^i\;\forall i}}
    \pi_\cR(a|g^1,\dotsc,g^d,s)p_{h-1}(s'|s,a)
    \\&\qquad\qquad\qquad\qquad
   \com{\P^{\pi}}(G_{h-1}^1=g^1\wedge\dotsc\wedge G_{h-1}^d=g^d\wedge s_{h-1}=s)\\
    &\qquad\qquad\markref{(9)}{=}\sum\limits_{g^1\in\cG_{r^1,h-1}}
    \dotsc\sum\limits_{g^d\in\cG_{r^d,h-1}}
    \sum\limits_{\substack{(s,a)\in\SA:\\r_{h-1}^i(s,a)=\overline{g}^i-g^i\;\forall i}}
    \\&\qquad\qquad\qquad\qquad
    \com{\frac{\P^{\pi}(G_{h-1}^1=g^1\wedge\dotsc\wedge G_{h-1}^d=g^d\wedge s_{h-1}=s\wedge a_{h-1}=a)}{
        \P^{\pi}(G_{h-1}^1=g^1\wedge\dotsc\wedge G_{h-1}^d=g^d\wedge s_{h-1}=s)
    }}\\
    &\qquad\qquad\qquad\qquad
    \cdot p_{h-1}(s'|s,a)\P^{\pi}(G_{h-1}^1=g^1\wedge\dotsc\wedge G_{h-1}^d=g^d\wedge s_{h-1}=s)\\
    &\qquad\qquad=\sum\limits_{g^1\in\cG_{r^1,h-1}}
    \dotsc\sum\limits_{g^d\in\cG_{r^d,h-1}}
    \sum\limits_{\substack{(s,a)\in\SA:\\r_{h-1}^i(s,a)=\overline{g}^i-g^i\;\forall i}}
    \\&\qquad\qquad\qquad\qquad
    \P^{\pi}(G_{h-1}^1=g^1\wedge\dotsc\wedge G_{h-1}^d=g^d\wedge s_{h-1}=s\wedge a_{h-1}=a)
     p_{h-1}(s'|s,a)\\
    &\qquad\qquad=\sum\limits_{g^1\in\cG_{r^1,h-1}}
    \dotsc\sum\limits_{g^d\in\cG_{r^d,h-1}}
    \sum\limits_{\substack{(s,a)\in\SA:\\r_{h-1}^i(s,a)=\overline{g}^i-g^i\;\forall i}}
    \\&\qquad\qquad\qquad\qquad
    \com{\P^{\pi}(G_{h-1}^1=g^1\wedge\dotsc\wedge G_{h-1}^d=g^d\wedge s_{h-1}=s\wedge a_{h-1}=a\wedge s_h=s')}\\
    &\qquad\qquad=
    \P^{\pi}(G_{h}^1=\overline{g}^1\wedge\dotsc\wedge G_{h}^d=\overline{g}^d\wedge s_h=s'),
  \end{align*}
  where at (1) we use symbol $\omega_{h''}$ to denote the random trajectory long
  $h''$ stages, i.e., whose realizations belong to $\Omega_{h''}$, for any
  $h''\in\dsb{H}$.
  At (2) we define symbols $\cG_{r,h}\coloneqq\{g\in[0,h-1]\,| \,\exists
\omega\in\Omega_h:\, G(\omega;r)=g\}$ for any $r$,
at (3) we use the chain rule of conditional probabilities, at (4) we use the
Markovianity of the environment, at (5) we note that
$\P^{\pi_\cR}(a_{h-1}=a|\omega,s)$ actually is $\pi_\cR(a|\omega,s)$, at (6) we
exchange the two summations, at (7) we recognize that, by definition,
$\pi_\cR(a|\omega,s)$ takes on the same value for all the trajectories $\omega$
with the same value of return for all rewards $r^i\in\cR$, and thus we can bring
this quantity outside the summation over the $\omega$. We use symbol
$\pi_\cR(a|g^1,\dotsc,g^d,s)$ to denote this fact for brevity. We do the same
also for $p_{h-1}(s'|s,a)$.
  At (8) we use the induction hypothesis, at (9) we replace $\pi_\cR(a|g^1,\dotsc,g^d,s)$ with
  its definition when $\P^{\pi}(G_{h-1}^1=g^1\wedge G_{h-1}^d=g^d\wedge s_{h-1}=s)>0$ as in
  the opposite case the entire formula takes on value zero.
\end{proof}

\subsubsection{Proof of Theorem \ref{thr: rsbc set}}
\label{apx: proofs set rewards rsbc}

\thrbcsamplecompknownrset*
\begin{proof}
We can write:
\begin{align*}
  &\cW\Bigr{\eta^{\pi^E}_{r^E},
  \eta^{\widehat{\pi}}_{r^E}}\\
  &\qquad\qquad\markref{(1)}{\le}
  \cW\Bigr{\eta^{\pi^E}_{r^E},
    \com{\eta^{\pi_{\cR^\theta}}_{r^E}}}
  +
  \cW\Bigr{\com{\eta^{\pi_{\cR^\theta}}_{r^E}},
  \com{\eta^{\pi_{\cR^\theta}}_{r^E_\theta}}}
  +
  \cW\Bigr{\com{\eta^{\pi_{\cR^\theta}}_{r^E_\theta}},
\com{\eta^{\widehat{\pi}}_{r^E_\theta}}}
  + \cW\Bigr{\com{\eta^{\widehat{\pi}}_{r^E_\theta}},
  \eta^{\widehat{\pi}}_{r^E}}\\
&\qquad\qquad\markref{(2)}{\le}
  \com{2H\theta}
  + \cW\Bigr{\eta^{\pi_{\cR^\theta}}_{r^E_\theta},
  \eta^{\widehat{\pi}}_{r^E_\theta}}\\
  &\qquad\qquad\markref{(3)}{\le}
  2H\theta
  +\com{H\Big\|}\eta^{\pi_{\cR^\theta}}_{r^E_\theta}-
  \eta^{\widehat{\pi}}_{r^E_\theta}\com{\Big\|_1}\\
  &\qquad\qquad\markref{(4)}{\le}
  2H\theta+H\sum\limits_{h\in\dsb{H}}\sum\limits_{g^1\in \cG_{r^1_\theta,h}}\dotsc
  \sum\limits_{g^d\in \cG_{r^d_\theta,h}}\sum\limits_{s\in\cS}
  \P^{\pi_{\cR^\theta}}(G_h^1=g^1\wedge\dotsc\wedge G_h^d=g^d\wedge s_{h}=s)\\
  &\qquad\qquad\qquad\qquad
  \Bign{\pi_{\cR^\theta,h}(\cdot|s,g^1,\dotsc,g^d)-\widehat{\pi}_{h}(\cdot|s,g^1,\dotsc,g^d)}_1\\
  &\qquad\qquad\markref{(5)}{=}
  2H\theta+H\sum\limits_{h\in\dsb{H}}\sum\limits_{g^1\in \cG_{r^1_\theta,h}}\dotsc
  \sum\limits_{g^d\in \cG_{r^d_\theta,h}}\sum\limits_{s\in\cS}
  \com{\P^{\pi^E}}(G_h^1=g^1\wedge\dotsc\wedge G_h^d=g^d\wedge s_{h}=s)\\
  &\qquad\qquad\qquad\qquad
  \Bign{\pi_{\cR^\theta,h}(\cdot|s,g^1,\dotsc,g^d)-\widehat{\pi}_{h}(\cdot|s,g^1,\dotsc,g^d)}_1\\
  &\qquad\qquad\markref{(6)}{\le}
  2H\theta+H\epsilon',
\end{align*}
where at (1) we apply triangle's inequality, at (2) we apply Lemma \ref{lemma:
apx policies set} and Lemma \ref{lemma: different r same p} twice, at (3) we use
Particular Case 6.13 of \citet{Villani2008OptimalTO}, which tells us that we can
upper bound the Wasserstein distance between two distributions supported on set
$\cX$ by the diameter of $\cX$ ($\max_{x,x'\in\cX}|x-x'|$) times the one norm
between the two distributions. Since $\cX=[0,H]$ in our case, we get the
expression written above.
At (4) we apply Lemma \ref{lemma: error propagation set} with the notation
defined in that lemma with set $\cR_\theta$ and policies $\pi_{\cR^\theta}$ and
$\widehat{\pi}$. Moreover, this holds recalling that $r^E\in\cR$ and so
$r^E_\theta\in\cR^\theta$.
At (5) we use Lemma \ref{lemma: Psg equal Psg set}.
Lastly, at (6) we apply Lemma \ref{lemma: concentration set} with accuracy
$\epsilon'$.

The result follows by imposing that $\epsilon'\le\frac{\epsilon}{2H}$ and
$2H\theta\le\frac{\epsilon}{2}$, which can be achieved by taking
$\epsilon'=\frac{\epsilon}{2H}$ and $\theta=\frac{\epsilon}{4H}$, and by
observing that:
\begin{align*}
  \overline{\cG}&\coloneqq
  \sum_{h\in\dsb{H}}\prod_{i\in\dsb{d}}|\cG_{r^i_\theta,h}|\\
  &\le \sum_{h\in\dsb{H}}\prod_{i\in\dsb{d}}|\cY^\theta_h|\\
  &= \sum_{h\in\dsb{H}}|\cY^\theta_h|^d\\
  &\le \sum_{h\in\dsb{H}}(1+(h-1)/\theta)^d\\
  &\le \cO\Bigr{H(H/\theta)^d}\\
  &\markref{(10)}{=} \cO\Bigr{H(H^2/\epsilon)^d},
\end{align*}
where at (11) we used the previous choice $\theta=\frac{\epsilon}{4H}$.

Replacing into the number of samples in Lemma \ref{lemma: concentration set} (and
also $\epsilon'=\frac{\epsilon}{2H}$) we get the result:
\begin{align*}
    N\le \widetilde{\cO}\biggr{\frac{SH^{4+2d}d^2\ln\frac{1}{\delta}}
{\epsilon^{2+d}}\Bigr{A+ \ln\frac{1}{\delta}}}.
\end{align*}
By using $\widetilde{\cO}$ notation to hide logarithmic terms in
$S,A,H,\frac{1}{\epsilon},\ln\frac{1}{\delta},d$, we get the result.
\end{proof}

\begin{restatable}[Error Propagation]{lemma}{errorpropagationset}
\label{lemma: error propagation set}
Let $\cM$ be any \MDPr and $\cR=\{r^1,\dotsc,r^d\}$ any set of $d\ge1$ rewards.
For any pair of policies $\pi,\pi'\in\Pi^{\text{NM}}$ such that, for all
$h\in\dsb{H}$, $a\in\cA$, $s\in\cS$ and $\omega,\omega'\in\Omega_h$ with
$G(\omega;r^i)=G(\omega';r^i)$ $\forall i\in\dsb{d}$:
\begin{align*}
  \pi(a|\omega,s)=\pi(a|\omega',s)\qquad\wedge\qquad \pi'(a|\omega,s)=\pi'(a|\omega',s),
\end{align*}
it holds that, for any $r^i\in\cR$:
\begin{align*}
  \Big\|\eta^{\pi}_{r^i}-
  \eta^{\pi'}_{r^i}\Big\|_1
  &\le \sum\limits_{h\in\dsb{H}}\sum\limits_{g^1\in \cG_{r^1,h}}\dotsc
  \sum\limits_{g^d\in \cG_{r^d,h}}\sum\limits_{s\in\cS}
  \P^{\pi}(G_h^1=g^1\wedge\dotsc\wedge G_h^d=g^d\wedge s_{h}=s)\\
  &\qquad\qquad
  \Bign{\pi_{h}(\cdot|s,g^1,\dotsc,g^d)-\pi_{h}'(\cdot|s,g^1,\dotsc,g^d)}_1,
\end{align*}
where $\cG_{r,h}\coloneqq\{g\in[0,h-1]\,| \,\exists \omega\in\Omega_h:\,
G(\omega;r)=g\}$ for any reward $r$, $G_h^i\coloneqq\sum_{h'=1}^{h-1}
r_{h'}^i(s_{h'},a_{h'})$ denotes the \emph{random} return at stage $h$ under
reward $r^i$, and $\pi_{h}(\cdot|s,g^1,\dotsc,g^d)$ and
$\pi_{h}'(\cdot|s,g^1,\dotsc,g^d)$ denote the unique probability with which the
policies $\pi$ and $\pi'$ prescribe actions in $s$ at $h$ under any trajectory
$\omega\in\Omega_h$ with $G(\omega;r^i)=g^i$ $\forall i\in\dsb{d}$. 
\end{restatable}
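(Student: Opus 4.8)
The plan is to mirror the proof of Lemma~\ref{lemma: error propagation}, replacing the scalar cumulative reward by the $d$-tuple $(G^1_h,\dotsc,G^d_h)$ of cumulative rewards, one per reward in $\cR$. The structural fact that makes this work is exactly the hypothesis: both $\pi$ and $\pi'$ assign the same action distribution to any two histories $\omega,\omega'\in\Omega_h$ with $G(\omega;r^i)=G(\omega';r^i)$ for all $i\in\dsb{d}$, so their action distributions are well-defined functions $\pi_h(\cdot|s,g^1,\dotsc,g^d)$, $\pi_h'(\cdot|s,g^1,\dotsc,g^d)$ of $s$ and the tuple only; equivalently, $\pi$ and $\pi'$ are Markovian policies in the augmented \MDPr with state space $\cS\times\cG_{r^1}\times\dotsc\times\cG_{r^d}$, and the classical ``reduction to supervised learning'' bound applies there.

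First I would prove, by induction on $h\in\dsb{2,H}$, the intermediate inequality
\begin{align*}
  &\sum_{g^1\in\cG_{r^1,h}}\dotsc\sum_{g^d\in\cG_{r^d,h}}\sum_{s\in\cS}
  \Biga{\P^\pi(G^1_h=g^1\wedge\dotsc\wedge G^d_h=g^d\wedge s_h=s)-\P^{\pi'}(G^1_h=g^1\wedge\dotsc\wedge s_h=s)}\\
  &\qquad\le\sum_{h'\in\dsb{h-1}}\sum_{g^1\in\cG_{r^1,h'}}\dotsc\sum_{g^d\in\cG_{r^d,h'}}\sum_{s\in\cS}
  \P^\pi(G^1_{h'}=g^1\wedge\dotsc\wedge s_{h'}=s)\Bign{\pi_{h'}(\cdot|s,g^1,\dotsc,g^d)-\pi_{h'}'(\cdot|s,g^1,\dotsc,g^d)}_1.
\end{align*}
The base case $h=2$ follows since $G^i_2=r^i_1(s_1,a_1)$, $s_1\equiv s_0$, and the transition is Markovian and policy-independent; summing over the next state collapses the left-hand side to $\|\pi(\cdot|s_0)-\pi'(\cdot|s_0)\|_1$. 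For the inductive step I would expand $\P^{\pi}(G^1_h=\overline g^1\wedge\dotsc\wedge s_h=s')$ by conditioning on the stage-$(h-1)$ history, use the chain rule to split off $\pi(a|\omega,s)\,p_{h-1}(s'|s,a)$, invoke the hypothesis to replace $\pi(a|\omega,s)$ by $\pi_{h-1}(a|s,g^1,\dotsc,g^d)$ (so it no longer depends on $\omega$, hence can be pulled out of the sum over histories with a fixed reward tuple), add and subtract $\P^\pi(\omega_{h-2}=\omega\wedge s_{h-1}=s)\,\pi'(a|\omega,s)$, and apply the triangle inequality to separate a ``policy-error'' term (contributing the $h'=h-1$ summand) from a ``distribution-error'' term (the stage-$(h-1)$ discrepancy, handled by the inductive hypothesis). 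This is verbatim the computation labelled $(1)$--$(10)$ in the proof of Lemma~\ref{lemma: error propagation}, with the scalar $g$ replaced throughout by the tuple $(g^1,\dotsc,g^d)$.

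Then, for a fixed $r^i\in\cR$, I would bound $\|\eta^\pi_{r^i}-\eta^{\pi'}_{r^i}\|_1$ by expanding $\eta^\pi_{r^i}(g)=\P^\pi(G^i_H+r^i_H(s_H,a_H)=g)$, writing the marginal $\P^\pi(G^i_H=g^i\wedge s_H=s)$ as a sum of the joint probabilities over the remaining coordinates $g^j$, $j\neq i$, and repeating the same add-and-subtract argument once more at stage $H$: this yields the stage-$H$ policy-error term plus the stage-$H$ distribution discrepancy, to which the intermediate inequality above with $h=H$ applies, giving the claimed bound. The main obstacle is purely notational — keeping the $d$-fold index bookkeeping straight in the chain-rule expansion and verifying that pulling $\pi_{h-1}(a|s,g^1,\dotsc,g^d)$ and $p_{h-1}(s'|s,a)$ out of the sum over histories with a fixed reward tuple is legitimate, which is precisely where the hypothesis on $\pi,\pi'$ is used; no new idea beyond Lemma~\ref{lemma: error propagation} is required.
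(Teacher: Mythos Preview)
Your proposal is correct and follows exactly the same approach as the paper's proof: an induction on $h$ establishing the joint-distribution discrepancy bound (the paper's steps (4)--(10), generalising (1)--(10) of Lemma~\ref{lemma: error propagation} to the $d$-tuple $(g^1,\dotsc,g^d)$), followed by a final application of the same add-and-subtract argument at stage $H$ for the fixed reward $r^j\in\cR$ (the paper's step (11)).
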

%%%%%%%%%%%%%%%%%%%%%%%%%%%%%%%%%%%%
\begin{proof}
  To prove the result, we first demonstrate by induction that, for
  all $h\in\dsb{2,H}$, it holds that:
  \begin{align*}
  &\sum\limits_{g^1\in \cG_{r^1,h}}\dotsc
  \sum\limits_{g^d\in \cG_{r^d,h}}\sum\limits_{s\in\cS}
  \Big|
    \P^{\pi}(G_h^1=g^1\wedge\dotsc\wedge G_h^d=g^d\wedge s_{h}=s)\\
    &\qquad\qquad-
    \P^{\pi'}(G_h^1=g^1\wedge\dotsc\wedge G_h^d=g^d\wedge s_{h}=s)
  \Big|\\
  &\qquad\qquad
  \le \sum\limits_{h'\in\dsb{h-1}}\sum\limits_{g^1\in \cG_{r^1,h'}}\dotsc
  \sum\limits_{g^d\in \cG_{r^d,h'}}\sum\limits_{s\in\cS}
  \P^{\pi}(G_{h'}^1=g^1\wedge\dotsc\wedge G_{h'}^d=g^d\wedge s_{h'}=s)
  \\&\qquad\qquad\qquad\qquad
  \Bign{\pi_{h'}(\cdot|s,g^1,\dotsc,g^d)-\pi_{h'}'(\cdot|s,g^1,\dotsc,g^d)}_1.
\end{align*}
We begin with the base case $h=2$. We can write:
\begin{align*}
  &\sum\limits_{g^1\in \cG_{r^1,2}}\dotsc
  \sum\limits_{g^d\in \cG_{r^d,2}}\sum\limits_{s\in\cS}
  \Big|
    \P^{\pi}(G_2^1=g^1\wedge\dotsc\wedge G_2^d=g^d\wedge s_{2}=s)\\
    &\qquad\qquad-
    \P^{\pi'}(G_2^1=g^1\wedge\dotsc\wedge G_2^d=g^d\wedge s_{2}=s)
  \Big|\\
  &\qquad\qquad\markref{(1)}{=}
  \sum\limits_{g^1\in \cG_{r^1,2}}\dotsc
  \sum\limits_{g^d\in \cG_{r^d,2}}\sum\limits_{s'\in\cS}\Big|
  \sum\limits_{\substack{(s,a)\in\SA:\\r_1^i(s,a)=g^i\forall i}}\Big(
  \com{p(s'|s,a)\indic{s=s_0}\pi(a|s)}\\
  &\qquad\qquad\qquad\qquad-
  {p(s'|s,a)\indic{s=s_0}\pi'(a|s)}\Big)
  \Big|\\
  &\qquad\qquad\markref{(2)}{\le}
  \sum\limits_{g^1\in \cG_{r^1,2}}\dotsc
  \sum\limits_{g^d\in \cG_{r^d,2}}\mathop{\com{\sum}}\limits_{
    \substack{\com{a\in\cA:}\\\com{r_1^i(s_0,a)=g^i\forall i}}}
  \sum\limits_{s'\in\cS}p(s'|\com{s_0},a)\com{\Big|}
  \pi(a|\com{s_0})-\pi'(a|\com{s_0})
  \com{\Big|}\\
  &\qquad\qquad=
  \sum\limits_{g^1\in \cG_{r^1,2}}\dotsc
  \sum\limits_{g^d\in \cG_{r^d,2}}\sum\limits_{\substack{a\in\cA:\\r_1^i(s_0,a)=g^i\forall i}}
  \Big|\pi(a|s_0)-\pi'(a|s_0)\Big|\\
  &\qquad\qquad=
  \com{\sum\limits_{a\in\cA}}
  \Big|\pi(a|s_0)-\pi'(a|s_0)\Big|\\
  &\qquad\qquad=
  \Big\|\pi(\cdot|s_0)-\pi'(\cdot|s_0)\Big\|_1\\
  &\qquad\qquad\markref{(3)}{=}
  \sum\limits_{h'\in\dsb{1}}\sum\limits_{g^1\in \cG_{r^1,h'}}\dotsc
  \sum\limits_{g^d\in \cG_{r^d,h'}}\sum\limits_{s\in\cS}
  \P^{\pi}(G_{h'}^1=g^1\wedge\dotsc\wedge G_{h'}^d=g^d\wedge s_{h'}=s)\\
  &\qquad\qquad\qquad\qquad
  \Bign{\pi_{h'}(\cdot|s,g^1,\dotsc,g^d)-\pi_{h'}'(\cdot|s,g^1,\dotsc,g^d)}_1,
\end{align*}
where at (1) we realize that in $\cM$ the initial state is always $s_0$, and
that the transition model is Markovian and independent of the policy, at (2) we
apply triangle's inequality and keep only $s_0$ because of the indicator, and at
(3) we have simply rewritten the expression in a more convenient way for proving
the result.

Now, let us consider any stage $h\in\dsb{3,H}$. Let us make the inductive
hypothesis that:
  \begin{align*}
  &\sum\limits_{g^1\in \cG_{r^1,h-1}}\dotsc
  \sum\limits_{g^d\in \cG_{r^d,h-1}}\sum\limits_{s\in\cS}
  \Big|
    \P^{\pi}(G_{h-1}^1=g^1\wedge\dotsc\wedge G_{h-1}^d=g^d\wedge s_{h-1}=s)\\
    &\qquad\qquad-
    \P^{\pi'}(G_{h-1}^1=g^1\wedge\dotsc\wedge G_{h-1}^d=g^d\wedge s_{h-1}=s)
  \Big|\\
  &\qquad\qquad
  \le \sum\limits_{h'\in\dsb{h-2}}\sum\limits_{g^1\in \cG_{r^1,h'}}\dotsc
  \sum\limits_{g^d\in \cG_{r^d,h'}}\sum\limits_{s\in\cS}
  \P^{\pi}(G_{h'}^1=g^1\wedge\dotsc\wedge G_{h'}^d=g^d\wedge s_{h'}=s)
  \\&\qquad\qquad\qquad\qquad
  \Bign{\pi_{h'}(\cdot|s,g^1,\dotsc,g^d)-\pi_{h'}'(\cdot|s,g^1,\dotsc,g^d)}_1.
\end{align*}
Then, we can write (we use symbol $\omega_h$ to denote the random trajectory
$(s_1,a_1,\dotsc,s_h,a_h)$ up to stage $h$):
\begin{align*}
&
\sum\limits_{g^1\in \cG_{r^1,h}}\dotsc\sum\limits_{g^d\in \cG_{r^d,h}}
  \sum\limits_{s\in\cS}
  \Big|
    \P^{\pi}(G_{h}^1=g^1\wedge\dotsc\wedge G_{h}^d=g^d\wedge s_{h}=s)\\
    &\qquad\qquad-
    \P^{\pi'}(G_{h}^1=g^1\wedge\dotsc\wedge G_{h}^d=g^d\wedge s_{h}=s)
  \Big|\\
    %%%
  &\qquad\qquad\markref{(4)}{=}
  \sum\limits_{g^1\in \cG_{r^1,h}}\dotsc\sum\limits_{g^d\in \cG_{r^d,h}}
  \sum\limits_{s'\in\cS}
  \Big|
    \sum\limits_{\overline{g}^1\in \cG_{r^1,h-1}}\dotsc\sum\limits_{\overline{g}^d\in \cG_{r^d,h-1}}
    \sum\limits_{\substack{\omega\in\Omega_{h-1}:\\G(\omega;r^i)=\overline{g}^i\forall i}}
    \sum\limits_{\substack{(s,a)\in\SA:\\r_{h-1}^i(s,a)=g^i-\overline{g}^i\forall i}}
    \Big(\\
  &\qquad\qquad\qquad\qquad
      \P^{\pi}(\com{\omega_{h-2}=\omega\wedge s_{h-1}=s})
      \P^{\pi}(a_{h-1}=a\wedge s_{h}=s'|\com{\omega,s})\\
  &\qquad\qquad\qquad\qquad
      -\P^{\pi'}(\com{\omega_{h-2}=\omega\wedge s_{h-1}=s})
      \P^{\pi'}(a_{h-1}=a\wedge s_{h}=s'|\com{\omega,s})
    \Big)\Big|\\
    %%%
  &\qquad\qquad\markref{(5)}{=}
  \sum\limits_{g^1\in \cG_{r^1,h}}\dotsc\sum\limits_{g^d\in \cG_{r^d,h}}\sum\limits_{s'\in\cS}
  \Big|
    \sum\limits_{\overline{g}^1\in \cG_{r^1,h-1}}\dotsc\sum\limits_{\overline{g}^d\in \cG_{r^d,h-1}}
    \sum\limits_{\substack{\omega\in\Omega_{h-1}:\\G(\omega;r^i)=\overline{g}^i\forall i}}
    \sum\limits_{\substack{(s,a)\in\SA:\\r_{h-1}^i(s,a)=g^i-\overline{g}^i\forall i}}
    \Big(\\
  &\qquad\qquad\qquad\qquad
      \P^{\pi}(\omega_{h-2}=\omega\wedge s_{h-1}=s)
      \com{\pi(a|\omega,s)p_{h-1}(s'|s,a)}\\
  &\qquad\qquad\qquad\qquad
      -\P^{\pi'}(\omega_{h-2}=\omega\wedge s_{h-1}=s)
      \com{\pi'(a|\omega,s)p_{h-1}(s'|s,a)}
    \Big)\Big|\\
    %%%%%%%%
  &\qquad\qquad=
  \sum\limits_{g^1\in \cG_{r^1,h}}\dotsc\sum\limits_{g^d\in \cG_{r^d,h}}\sum\limits_{s'\in\cS}
  \Big|
    \sum\limits_{\overline{g}^1\in \cG_{r^1,h-1}}\dotsc\sum\limits_{\overline{g}^d\in \cG_{r^d,h-1}}
    \sum\limits_{\substack{\omega\in\Omega_{h-1}:\\G(\omega;r^i)=\overline{g}^i\forall i}}
    \sum\limits_{\substack{(s,a)\in\SA:\\r_{h-1}^i(s,a)=g^i-\overline{g}^i\forall i}}
    \\&\qquad\qquad\qquad\qquad
    \com{p_{h-1}(s'|s,a)}\Big(
      \P^{\pi}(\omega_{h-2}=\omega\wedge s_{h-1}=s)
      \pi(a|\omega,s)\\
  &\qquad\qquad\qquad\qquad
      -\P^{\pi'}(\omega_{h-2}=\omega\wedge s_{h-1}=s)
      \pi'(a|\omega,s)\\
  &\qquad\qquad\qquad\qquad
  \com{\pm\P^{\pi}(\omega_{h-2}=\omega\wedge s_{h-1}=s)
      \pi'(a|\omega,s)}
    \Big)\Big|\\
    %%%%%%%%%%%%%%%%%
  &\qquad\qquad\markref{(6)}{\le}
  \sum\limits_{g^1\in \cG_{r^1,h}}\dotsc\sum\limits_{g^d\in \cG_{r^d,h}}\sum\limits_{s'\in\cS}
    \sum\limits_{\overline{g}^1\in \cG_{r^1,h-1}}\dotsc\sum\limits_{\overline{g}^d\in \cG_{r^d,h-1}}
    \sum\limits_{\substack{\omega\in\Omega_{h-1}:\\G(\omega;r^i)=\overline{g}^i\forall i}}
    \sum\limits_{\substack{(s,a)\in\SA:\\r_{h-1}^i(s,a)=g^i-\overline{g}^i\forall i}}
    \\&\qquad\qquad\qquad\qquad
    p_{h-1}(s'|s,a)
  \cdot\P^{\pi}(\omega_{h-2}=\omega\wedge s_{h-1}=s)\com{\Big|
      \pi(a|\omega,s)-\pi'(a|\omega,s)\Big|}\\
  &\qquad\qquad\qquad\qquad
  +\sum\limits_{g^1\in \cG_{r^1,h}}\dotsc\sum\limits_{g^d\in \cG_{r^d,h}}\sum\limits_{s'\in\cS}
    \sum\limits_{\overline{g}^1\in \cG_{r^1,h-1}}\dotsc\sum\limits_{\overline{g}^d\in \cG_{r^d,h-1}}
    \\&\qquad\qquad\qquad\qquad
    \com{\Big|}
    \sum\limits_{\substack{\omega\in\Omega_{h-1}:\\G(\omega;r^i)=\overline{g}^i\forall i}}
    \sum\limits_{\substack{(s,a)\in\SA:\\r_{h-1}^i(s,a)=g^i-\overline{g}^i\forall i}}
    p_{h-1}(s'|s,a)\pi'(a|\omega,s)
      \\
  &\qquad\qquad\qquad\qquad
  \cdot\Bigr{\com{\P^{\pi}(\omega_{h-2}=\omega\wedge s_{h-1}=s)-
  \P^{\pi'}(\omega_{h-2}=\omega\wedge s_{h-1}=s)}}\com{\Big|}\\
  %%%%%%%%%%%%%%%%%%%%%%%%%%%%%%%%%%%%
  &\qquad\qquad\markref{(7)}{\le}
  \sum\limits_{g^1\in \cG_{r^1,h}}\dotsc\sum\limits_{g^d\in \cG_{r^d,h}}
    \sum\limits_{\overline{g}^1\in \cG_{r^1,h-1}}\dotsc\sum\limits_{\overline{g}^d\in \cG_{r^d,h-1}}
    \sum\limits_{\substack{\omega\in\Omega_{h-1}:\\G(\omega;r^i)=\overline{g}^i\forall i}}
    \sum\limits_{\substack{(s,a)\in\SA:\\r_{h-1}^i(s,a)=g^i-\overline{g}^i\forall i}}
    \\
  &\qquad\qquad\qquad\qquad
  \P^{\pi}(\omega_{h-2}=\omega\wedge s_{h-1}=s)\Big|
      \pi(a|\omega,s)-\pi'(a|\omega,s)\Big|\\
  &\qquad\qquad\qquad\qquad
  +\sum\limits_{g^1\in \cG_{r^1,h}}\dotsc\sum\limits_{g^d\in \cG_{r^d,h}}\sum\limits_{s'\in\cS}
    \sum\limits_{\overline{g}^1\in \cG_{r^1,h-1}}\dotsc\sum\limits_{\overline{g}^d\in \cG_{r^d,h-1}}
    \mathop{\com{\sum}}\limits_{\substack{\com{(s,a)\in\SA:}\\\com{r_{h-1}^i(s,a)=g^i-\overline{g}^i\forall i}}}
    \\&\qquad\qquad\qquad\qquad
    \com{p_{h-1}(s'|s,a)}
    \com{\Big|}
    \sum\limits_{\substack{\omega\in\Omega_{h-1}:\\G(\omega;r^i)=\overline{g}^i\forall i}}
    \pi'(a|\omega,s)
      \\
  &\qquad\qquad\qquad\qquad
  \cdot\Bigr{\P^{\pi}(\omega_{h-2}=\omega\wedge s_{h-1}=s)-
  \P^{\pi'}(\omega_{h-2}=\omega\wedge s_{h-1}=s)}\com{\Big|}\\
  %%%%%%%%%%%%%%%%%%%%%%%%%%%%%%%%%
  &\qquad\qquad\markref{(8)}{=}
    \sum\limits_{\overline{g}^1\in \cG_{r^1,h-1}}\dotsc\sum\limits_{\overline{g}^d\in \cG_{r^d,h-1}}
    \com{\sum\limits_{\substack{(s,a)\in\SA}}\Big|
      \pi_{h-1}(a|s,\overline{g}^1,\dotsc,\overline{g}^d)-\pi_{h-1}'(a|s,\overline{g}^1,\dotsc,\overline{g}^d)\Big|}
    \\
  &\qquad\qquad\qquad\qquad
  \cdot\sum\limits_{\substack{\omega\in\Omega_{h-1}:\\G(\omega;r^i)=\overline{g}^i\forall i}}
  \P^{\pi}(\omega_{h-2}=\omega\wedge s_{h-1}=s)\\
  &\qquad\qquad\qquad\qquad
  +\sum\limits_{g^1\in \cG_{r^1,h}}\dotsc\sum\limits_{g^d\in \cG_{r^d,h}}
    \sum\limits_{\overline{g}^1\in \cG_{r^1,h-1}}\dotsc\sum\limits_{\overline{g}^d\in \cG_{r^d,h-1}}
    \sum\limits_{\substack{(s,a)\in\SA:\\r_{h-1}^i(s,a)=g^i-\overline{g}^i\forall i}}
    \\&\qquad\qquad\qquad\qquad
    \com{\pi_{h-1}'(a|s,\overline{g}^1,\dotsc,\overline{g}^d)}\Big|
    \sum\limits_{\substack{\omega\in\Omega_{h-1}:\\G(\omega;r^i)=\overline{g}^i\forall i}}
      \\
  &\qquad\qquad\qquad\qquad
  \cdot\Bigr{\P^{\pi}(\omega_{h-2}=\omega\wedge s_{h-1}=s)-
  \P^{\pi'}(\omega_{h-2}=\omega\wedge s_{h-1}=s)}\Big|\\
  %%%%%%%%%%%%%%%%%
  &\qquad\qquad=
    \sum\limits_{\overline{g}^1\in \cG_{r^1,h-1}}\dotsc\sum\limits_{\overline{g}^d\in \cG_{r^d,h-1}}
    \sum\limits_{\substack{(s,a)\in\SA}}\Big|
      \pi_{h-1}(a|s,\overline{g}^1,\dotsc,\overline{g}^d)-\pi_{h-1}'(a|s,\overline{g}^1,\dotsc,\overline{g}^d)\Big|
      \\&\qquad\qquad\qquad\qquad
  \com{\P^{\pi}(G_{h-1}^1=\overline{g}^1\wedge\dotsc,\wedge G_{h-1}^d=\overline{g}^d\wedge s_{h-1}=s)}\\
  &\qquad\qquad\qquad\qquad
  +\sum\limits_{g^1\in \cG_{r^1,h}}\dotsc\sum\limits_{g^d\in \cG_{r^d,h}}
    \sum\limits_{\overline{g}^1\in \cG_{r^1,h-1}}\dotsc\sum\limits_{\overline{g}^d\in \cG_{r^d,h-1}}
    \sum\limits_{\substack{(s,a)\in\SA:\\r_{h-1}^i(s,a)=g^i-\overline{g}^i\forall i}}
    \\&\qquad\qquad\qquad\qquad
    \pi_{h-1}'(a|s,\overline{g}^1,\dotsc,\overline{g}^d)
  \\&\qquad\qquad\qquad\qquad
  \Big|\com{\P^{\pi}(G_{h-1}^1=\overline{g}^1\wedge\dotsc,\wedge G_{h-1}^d=\overline{g}^d\wedge s_{h-1}=s)}
  \\&\qquad\qquad\qquad\qquad
  -
  \com{\P^{\pi'}(G_{h-1}^1=\overline{g}^1\wedge\dotsc,\wedge G_{h-1}^d=\overline{g}^d\wedge s_{h-1}=s)}\Big|\\
  %%%%%%%%%%%%%%%
  &\qquad\qquad\markref{(9)}{=}
    \sum\limits_{\overline{g}^1\in \cG_{r^1,h-1}}\dotsc\sum\limits_{\overline{g}^d\in \cG_{r^d,h-1}}
    \sum\limits_{\com{s\in\cS}}
    \P^{\pi}(G_{h-1}^1=\overline{g}^1\wedge\dotsc,\wedge G_{h-1}^d=\overline{g}^d\wedge s_{h-1}=s)
    \\&\qquad\qquad\qquad\qquad
    \com{\Big\|
      \pi_{h-1}(\cdot|s,\overline{g}^1,\dotsc,\overline{g}^d)-
      \pi_{h-1}'(\cdot|s,\overline{g}^1,\dotsc,\overline{g}^d)\Big\|_1} \\
  &\qquad\qquad\qquad\qquad
  +\sum\limits_{\overline{g}^1\in \cG_{r^1,h-1}}\dotsc\sum\limits_{\overline{g}^d\in \cG_{r^d,h-1}}
  \sum\limits_{\com{s\in\cS}}\\
  &\qquad\qquad\qquad\qquad
  \Big|\P^{\pi}(G_{h-1}^1=\overline{g}^1\wedge\dotsc,\wedge G_{h-1}^d=\overline{g}^d\wedge s_{h-1}=s)
  \\&\qquad\qquad\qquad\qquad
  -
  \P^{\pi'}(G_{h-1}^1=\overline{g}^1\wedge\dotsc,\wedge G_{h-1}^d=\overline{g}^d\wedge s_{h-1}=s)\Big|\\
  %%%%%%%%%%%%%%%%%%%%%
  &\qquad\qquad\markref{(10)}{\le}
  \sum\limits_{\overline{g}^1\in \cG_{r^1,h-1}}\dotsc\sum\limits_{\overline{g}^d\in \cG_{r^d,h-1}}\sum\limits_{s\in\cS}
    \P^{\pi}(G_{h-1}^1=\overline{g}^1\wedge\dotsc,\wedge G_{h-1}^d=\overline{g}^d\wedge s_{h-1}=s)
    \\&\qquad\qquad\qquad\qquad
    \Big\|
      \pi_{h-1}(\cdot|s,\overline{g}^1,\dotsc,\overline{g}^d)-\pi_{h-1}'(\cdot|s,\overline{g}^1,\dotsc,\overline{g}^d)\Big\|_1
    \\
  &\qquad\qquad\qquad\qquad  
  +\com{\sum\limits_{h'\in\dsb{h-2}}\sum\limits_{g^1\in \cG_{r^1,h'}}\dotsc
  \sum\limits_{g^d\in \cG_{r^d,h'}}\sum\limits_{s\in\cS}}
  \\&\qquad\qquad\qquad\qquad
  \com{\P^{\pi}(G_{h'}^1=g^1\wedge\dotsc\wedge G_{h'}^d=g^d\wedge s_{h'}=s)}
  \\&\qquad\qquad\qquad\qquad
\com{\Bign{\pi_{h'}(\cdot|s,g^1,\dotsc,g^d)-\pi_{h'}'(\cdot|s,g^1,\dotsc,g^d)}_1}\\
%%%%%%%%%%%%%%%%%%%%%%%
  &\qquad\qquad=
  {\sum\limits_{h'\in\dsb{\com{h-1}}}\sum\limits_{g^1\in \cG_{r^1,h'}}\dotsc
  \sum\limits_{g^d\in \cG_{r^d,h'}}\sum\limits_{s\in\cS}}
  \\&\qquad\qquad\qquad\qquad
  {\P^{\pi}(G_{h'}^1=g^1\wedge\dotsc\wedge G_{h'}^d=g^d\wedge s_{h'}=s)}
  \\&\qquad\qquad\qquad\qquad
{\Bign{\pi_{h'}(\cdot|s,g^1,\dotsc,g^d)-\pi_{h'}'(\cdot|s,g^1,\dotsc,g^d)}_1},
\end{align*}
where at (4) we use the chain rule of conditional probabilities, at (5) we do it
again, and we recognize the policies $\pi$ and $\pi'$, and also that the
transition model is Markovian, at (6) we use triangle's inequality to split the
summations and bring the absolute value inside, at (7), in the first term, we
note that $p_{h-1}(s'|s,a)$ is the only term that depends on $s'$ and that it
sums to 1, while in the second term we exchange the order of two summations and
apply triangle's inequality to bring one inside, at (8), in the first term, we
first remove the summation on $g^i$ along with the indicator function that
forces us to consider a subset of state-action pairs, and then we exchange two
other summations and note that the policies do not depend by hypothesis on the
entire past trajectory, but just on the return so far for any $r^i\in\cR$.
Instead, in the second term, we use that $\sum_{s'\in\cS}p_{h-1}(s'|s,a)=1$, and
also that, by hypothesis, $\pi'$ does not depend on the entire past trajectory,
but just on $g^1,\dotsc,g^d$. At (9), i.a., we use that $\sum_{g^1\in
\cG_{r^1,h}}\dotsc\sum_{g^d\in
\cG_{r^d,h}}\indic{r_{h-1}^i(s,a)=g^i-\overline{g}^i\forall i}=1$ and that
$\sum_{a\in\cA}\pi_{h-1}'(a|s,\overline{g}^1,\dotsc,\overline{g}^d)=1$. Finally,
at (10), we apply the inductive hypothesis.

Thanks to this result, we can finally prove the claim in the lemma, using
passages analogous to those above, with the difference that we do not have the
summation over the states at the current stage (i.e., $H+1$). For any $r^j\in\cR$:
\begin{align*}
  &\Big\|\eta^{\pi}_{r^j}-
  \eta^{\pi'}_{r^j}\Big\|_1
  =\sum\limits_{g\in \cG_{r^j,H+1}}
  \Big|\eta^{\pi}_{r^j}(g)-\eta^{\pi'}_{r^j}(g)\Big|\\
  &\qquad\qquad=\sum\limits_{g\in \cG_{r^j,H+1}}
  \Big|\com{\P^{\pi}(G_{H+1}^j=g)}-\com{\P^{\pi'}(G_{H+1}^j=g)}\Big|\\
  %%%
    &\qquad\qquad=
  \sum\limits_{g\in \cG_{r^j,H+1}}
  \Big|
    \sum\limits_{g^1\in \cG_{r^1,H}}\dotsc\sum\limits_{g^d\in \cG_{r^d,H}}
    \sum\limits_{\substack{\omega\in\Omega_{H}:\\G(\omega;r^i)=g^i\forall i}}
    \sum\limits_{\substack{(s,a)\in\SA:\\r_{H}^j(s,a)=g-g^j}}
    \\&\qquad\qquad\qquad\qquad
    \Big(
      \P^{\pi}(\omega_{H-1}=\omega\wedge s_{H}=s)
      \pi(a|\omega,s)\\
  &\qquad\qquad\qquad\qquad
      -\P^{\pi'}(\omega_{H-1}=\omega\wedge s_{H}=s)
      \pi'(a|\omega,s)\\
  &\qquad\qquad\qquad\qquad
  \com{\pm\P^{\pi}(\omega_{H-1}=\omega\wedge s_{H}=s)
      \pi'(a|\omega,s)}
    \Big)\Big|\\
  %%%
  &\qquad\qquad\markref{(11)}{\le}
  \sum\limits_{g^1\in \cG_{r^1,H}}\dotsc\sum\limits_{g^d\in \cG_{r^d,H}}\sum\limits_{s\in\cS}
    \P^{\pi}(G_{H}^1=g^1\wedge\dotsc,\wedge G_{H}^d=g^d\wedge s_{H}=s)
    \\&\qquad\qquad\qquad\qquad
    \Big\|
      \pi_{H}(\cdot|s,g^1,\dotsc,g^d)-\pi_{H}'(\cdot|s,g^1,\dotsc,g^d)\Big\|_1
    \\
  &\qquad\qquad\qquad\qquad  
  +{\sum\limits_{h'\in\dsb{H-1}}\sum\limits_{g^1\in \cG_{r^1,h'}}\dotsc
  \sum\limits_{g^d\in \cG_{r^d,h'}}\sum\limits_{s\in\cS}}
  \\&\qquad\qquad\qquad\qquad
  {\P^{\pi}(G_{h'}^1=g^1\wedge\dotsc\wedge G_{h'}^d=g^d\wedge s_{h'}=s)}
  \\&\qquad\qquad\qquad\qquad
{\Bign{\pi_{h'}(\cdot|s,g^1,\dotsc,g^d)-\pi_{h'}'(\cdot|s,g^1,\dotsc,g^d)}_1},
\end{align*}
where at (11) we made the same passages as above from (6) on, all in one, with
the only differences that we sum only over $g\in \cG_{r^j,H+1}$ instead of doing
so for any reward of $\cR$, and also that we do not have the sum over the next
state $s'$. The result follows by summing the two terms in the last expression
written.

This concludes the proof.
\end{proof}

\begin{restatable}[Concentration]{lemma}{concentrationset}
\label{lemma: concentration set}
Let $\epsilon\in(0,H]$ and $\delta\in(0,1)$.
Let $\cM$ be any \MDPr and $\cR=\{r^1,\dotsc,r^d\}$ any set of $d\ge1$ rewards,
$\pi^E\in\Pi^{\text{NM}}$ be any expert's policy, and $\widehat{\pi}$ be the
output of Algorithm \ref{alg: rsbc set}.
Then, with probability $1-\delta$, we have that (we use the notation in Lemma
\ref{lemma: error propagation set}):
\begin{align*}
\sum\limits_{h\in\dsb{H}}\sum\limits_{g^1\in \cG_{r^1_\theta,h}}\dotsc
  \sum\limits_{g^d\in \cG_{r^d_\theta,h}}\sum\limits_{s\in\cS}
  \P^{\pi^E}&(G_h^1=g^1\wedge\dotsc\wedge G_h^d=g^d\wedge s_{h}=s)\\
  &
  \Bign{\pi_{\cR^\theta,h}(\cdot|s,g^1,\dotsc,g^d)-\widehat{\pi}_{h}(\cdot|s,g^1,\dotsc,g^d)}_1\le\epsilon,
\end{align*}
with a number of samples:
\begin{align*}
  N\le \frac{193SH\overline{\cG}\ln\frac{2S\overline{\cG}}{\delta}}
{\epsilon^2}\biggr{
  \ln\frac{2S\overline{\cG}}{\delta}+(A-1)
  \ln\Bigr{\frac{128eSH\overline{\cG}\ln\frac{2S\overline{\cG}}{\delta}}
{\epsilon^2}}
},
\end{align*}
where $\overline{\cG}\coloneqq
  \sum_{h\in\dsb{H}}\prod_{i\in\dsb{d}}|\cG_{r^i_\theta,h}|$.
\end{restatable}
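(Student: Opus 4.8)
The plan is to mirror the proof of Lemma~\ref{lemma: concentration} almost verbatim, replacing the single cumulative-reward index $g$ everywhere by the $d$-tuple $(g^1,\dotsc,g^d)$, i.e., working in the MDP whose state space is $\cS$ augmented by one copy of $\cY^\theta$ for each reward in $\cR^\theta$. First I would expand the weighted $\ell_1$ error using the definition of $\widehat{\pi}$ in Algorithm~\ref{alg: rsbc set} (Line~\ref{line: bc retrieve policy set}), splitting into the event $\sum_{a'}M_h(s,g^1,\dotsc,g^d,a')=0$, on which the summand is trivially bounded by $2$ (total variation between two probability distributions is at most $1$), and the complementary event, on which $M_h(s,g^1,\dotsc,g^d,\cdot)/\sum_{a'}M_h(s,g^1,\dotsc,g^d,a')$ is exactly the empirical estimate of the categorical distribution $\pi_{\cR^\theta,h}(\cdot|s,g^1,\dotsc,g^d)$, by Eq.~\eqref{eq: def policy imitate same occ meas set}, once we condition on which trajectories realize the augmented state $(s,g^1,\dotsc,g^d)$ at stage $h$.

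Next I would apply, conditionally on the count, the self-normalized categorical concentration inequality (Lemma~8 of \citet{kaufmann2021adaptive}) together with Pinsker's inequality to turn the resulting KL bound into an $\ell_1$ bound of order $\sqrt{(\ln(1/\delta)+(A-1)\ln(e(1+\mathrm{count}/(A-1))))/\mathrm{count}}$, and take a union bound over all $h\in\dsb{H}$, $s\in\cS$, and all tuples $(g^1,\dotsc,g^d)\in\prod_i\cG_{r^i_\theta,h}$; the number of such augmented states is exactly $S\overline{\cG}$ with $\overline{\cG}\coloneqq\sum_{h}\prod_i|\cG_{r^i_\theta,h}|$, which is precisely why this \emph{product} quantity — rather than the additive one from the single-reward case — appears inside the logarithm. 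Then I would bound each count by $N$ inside the logarithm and, separately, using a Binomial tail bound (Lemma~A.1 of \citet{xie2021bridging}) applied uniformly over augmented states to $\sum_{a'}M_h(s,g^1,\dotsc,g^d,a')\sim\mathrm{Bin}(N,\P^{\pi^E}(G_h^1=g^1,\dotsc,G_h^d=g^d,s_h=s))$, lower bound each count by a quantity of order $N\cdot\P^{\pi^E}(\dotsc)/\ln(S\overline{\cG}/\delta)$. Substituting and pulling out the $N$-dependent factor leaves $\sum_{h}\sum_{g^1,\dotsc,g^d}\sum_{s}\sqrt{\P^{\pi^E}(G_h^1=g^1,\dotsc,G_h^d=g^d,s_h=s)}$, to which I would apply Cauchy--Schwarz twice: first over the augmented states at each fixed stage $h$ (picking up $\sqrt{S\prod_i|\cG_{r^i_\theta,h}|}$ and using $\sum_{s,g^1,\dotsc,g^d}\P^{\pi^E}(\dotsc)=1$), and then over $h$, to obtain the overall factor $\sqrt{SH\overline{\cG}}$.

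Finally, setting the resulting bound (of order $\sqrt{SH\overline{\cG}\ln(S\overline{\cG}/\delta)\,(\ln(S\overline{\cG}/\delta)+(A-1)\ln(eN/(A-1)))/N}$) to be at most $\epsilon$ and solving the induced self-referential inequality for $N$ via Lemma~J.3 of \citet{lazzati2024offline} yields the claimed sample complexity, after a closing union bound to merge the two $\delta/2$ failure events. I do not expect a genuine obstacle here: the argument is a mechanical lift of Lemma~\ref{lemma: concentration} to the product augmented state space, and the only care needed is bookkeeping — making sure every union bound ranges over all $d$-tuples $(g^1,\dotsc,g^d)$ (hence the product in $\overline{\cG}$), checking that $\sum_{a'}M_h(s,g^1,\dotsc,g^d,a')$ is still Binomial with the \emph{joint} visitation probability as its parameter, and that the two Cauchy--Schwarz steps use the correct augmented-state cardinality $S\prod_i|\cG_{r^i_\theta,h}|$ at each stage $h$.
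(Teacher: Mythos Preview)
Your proposal is correct and matches the paper's proof essentially step for step: the same case split on whether the augmented-state count is zero, the same application of Lemma~8 of \citet{kaufmann2021adaptive} plus Pinsker with a union bound over all $S\overline{\cG}$ augmented states, the same Binomial lower bound via Lemma~A.1 of \citet{xie2021bridging}, Cauchy--Schwarz to extract the $\sqrt{SH\overline{\cG}}$ factor, and finally Lemma~J.3 of \citet{lazzati2024offline} to invert the self-referential inequality. The only cosmetic difference is that the paper applies Cauchy--Schwarz once over all indices $(h,s,g^1,\dotsc,g^d)$ simultaneously rather than first over augmented states at fixed $h$ and then over $h$ as you describe, but both routes yield the same $\sqrt{SH\overline{\cG}}$ factor.
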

\begin{proof}
  We can write:
  \begin{align*}
    &  \sum\limits_{h\in\dsb{H}}
    \sum\limits_{g^1\in \cG_{r^1_\theta,h}}\dotsc\sum\limits_{g^d\in \cG_{r^d_\theta,h}}
    \sum\limits_{s\in\cS}
  \P^{\pi^E}(G_h^1=g^1\wedge\dotsc\wedge G_h^d=g^d\wedge s_{h}=s)\\
  &\qquad\qquad
  \Bign{\pi_{\cR^\theta,h}(\cdot|s,g^1,\dotsc,g^d)-\widehat{\pi}_{h}(\cdot|s,g^1,\dotsc,g^d)}_1\\
  %%%%
  &\qquad\qquad=
  \sum\limits_{h\in\dsb{H}}\sum\limits_{g^1\in \cG_{r^1_\theta,h}}\dotsc\sum\limits_{g^d\in \cG_{r^d_\theta,h}}
  \sum\limits_{s\in\cS}
  \P^{\pi^E}(G_h^1=g^1\wedge\dotsc\wedge G_h^d=g^d\wedge s_{h}=s)
  \\&\qquad\qquad\qquad\qquad
  \sum\limits_{a\in\cA}\Big|\pi_{\cR^\theta,h}(a|s,g^1,\dotsc,g^d)
  -\Big(
    \frac{M_h(s,g^1,\dotsc,g^d,a)}{\sum_{a'}M_h(s,g^1,\dotsc,g^d,a')}
  \\&\qquad\qquad\qquad\qquad
    \cdot \indic{\sum_{a'}M_h(s,g^1,\dotsc,g^d,a')>0}
    +\frac{1}{A}\indic{\sum_{a'}M_h(s,g^1,\dotsc,g^d,a')=0}
  \Big)\Big|\\
  &\qquad\qquad\markref{(1)}{\le}
  \sum\limits_{h\in\dsb{H}}\sum\limits_{g^1\in \cG_{r^1_\theta,h}}\dotsc\sum\limits_{g^d\in \cG_{r^d_\theta,h}}
  \sum\limits_{s\in\cS}
  \P^{\pi^E}(G_h^1=g^1\wedge\dotsc\wedge G_h^d=g^d\wedge s_{h}=s)
  \\
  &\qquad\qquad\qquad\qquad
  \cdot \com{2\sqrt{2}\sqrt{\frac{\ln\frac{2S\overline{\cG}}{\delta}+
  (A-1)\ln\bigr{e\bigr{1+\frac{\sum_{a'}M_h(s,g^1,\dotsc,g^d,a')}{A-1}}}}{\sum_{a'}M_h(s,g^1,\dotsc,g^d,a')}}}
  \\
  &\qquad\qquad\markref{(2)}{\le}
  2\sqrt{2}\sqrt{\ln\frac{2S\overline{\cG}}{\delta}+
  (A-1)\ln\Bigr{e\Bigr{1+\frac{\com{N}}{A-1}}}}\\
  &\qquad\qquad\qquad\qquad
  \cdot \sum\limits_{h\in\dsb{H}}\sum\limits_{g^1\in \cG_{r^1_\theta,h}}\dotsc\sum\limits_{g^d\in \cG_{r^d_\theta,h}}
  \sum\limits_{s\in\cS}
  \P^{\pi^E}(G_h^1=g^1\wedge\dotsc\wedge G_h^d=g^d\wedge s_{h}=s)
  \\&\qquad\qquad\qquad\qquad
  \sqrt{\frac{1}{\sum_{a'}M_h(s,g^1,\dotsc,g^d,a')}}
  \\
  &\qquad\qquad\markref{(3)}{\le}
  2\sqrt{2}\sqrt{\ln\frac{2S\overline{\cG}}{\delta}+
  (A-1)\ln\Bigr{e\Bigr{1+\frac{N}{A-1}}}}\\
  &\qquad\qquad\qquad\qquad
  \cdot \sum\limits_{h\in\dsb{H}}\sum\limits_{g^1\in \cG_{r^1_\theta,h}}\dotsc\sum\limits_{g^d\in \cG_{r^d_\theta,h}}
  \sum\limits_{s\in\cS}
  \\&\qquad\qquad\qquad\qquad
  \sqrt{\com{\frac{8\ln\frac{2S\overline{\cG}}{\delta}\P^{\pi^E}(G_h^1=g^1\wedge\dotsc\wedge G_h^d=g^d\wedge s_{h}=s)}{N}}}
  \\
  &\qquad\qquad=
  \com{8\sqrt{\frac{\ln\frac{2S\overline{\cG}}{\delta}}{N}}}
  \sqrt{\ln\frac{2S\overline{\cG}}{\delta}+
  (A-1)\ln\Bigr{e\Bigr{1+\frac{N}{A-1}}}}\\
  &\qquad\qquad\qquad\qquad
  \cdot \sum\limits_{h\in\dsb{H}}\sum\limits_{g^1\in \cG_{r^1_\theta,h}}\dotsc\sum\limits_{g^d\in \cG_{r^d_\theta,h}}
  \sum\limits_{s\in\cS}
  \\&\qquad\qquad\qquad\qquad
  \sqrt{\com{\P^{\pi^E}(G_h^1=g^1\wedge\dotsc\wedge G_h^d=g^d\wedge s_{h}=s)}}
  \\
  &\qquad\qquad\markref{(4)}{\le}
  8\sqrt{\frac{\ln\frac{2S\overline{\cG}}{\delta}}{N}}
  \sqrt{\ln\frac{2S\overline{\cG}}{\delta}+
  (A-1)\ln\Bigr{e\Bigr{1+\frac{N}{A-1}}}}
  \com{\sqrt{S\overline{\cG}}}
  \\&\qquad\qquad\qquad\qquad
  \sqrt{\com{\sum\limits_{h\in\dsb{H}}\sum\limits_{g^1\in \cG_{r^1_\theta,h}}\dotsc\sum\limits_{g^d\in \cG_{r^d_\theta,h}}
  \sum\limits_{s\in\cS}}
  \P^{\pi^E}(G_h^1=g^1\wedge\dotsc\wedge G_h^d=g^d\wedge s_{h}=s)}
  \\
  &\qquad\qquad=
  8\sqrt{\frac{\com{S\overline{\cG}}\ln\frac{2S\overline{\cG}}{\delta}}{N}}
  \sqrt{\ln\frac{2S\overline{\cG}}{\delta}+
  (A-1)\ln\Bigr{e\Bigr{1+\frac{N}{A-1}}}}
  \sqrt{\sum\limits_{h\in\dsb{H}}\com{1}}
  \\
  &\qquad\qquad\markref{(5)}{\le}
  8\sqrt{\frac{S\com{H\overline{\cG}}\ln\frac{2S\overline{\cG}}{\delta}}{N}}
  \sqrt{\ln\frac{2S\overline{\cG}}{\delta}+
  (A-1)\ln\Bigr{e\Bigr{1+\frac{N}{A-1}}}},
  \end{align*}
  where at (1) we use that, if $\sum_{a'}M_h(s,g^1,\dotsc,g^d,a')=0$, then:
  \begin{align*}
    &\sum\limits_{a\in\cA}\Big|\pi_{\cR^\theta,h}(a|s,g^1,\dotsc,g^d)-\Bigr{
    \frac{M_{h}(s,g,a)}{\sum_{a'}M_h(s,g^1,\dotsc,g^d,a')}\indic{\sum_{a'}M_h(s,g^1,\dotsc,g^d,a')>0}
  \\&\qquad\qquad\qquad\qquad
    +\frac{1}{A}\indic{\sum_{a'}M_h(s,g^1,\dotsc,g^d,a')=0}
  }\Big|\\
  &\qquad\qquad=\sum\limits_{a\in\cA}\Big|\pi_{\cR^\theta,h}(a|s,g^1,\dotsc,g^d)-\frac{1}{A}\Big|\\
  &\qquad\qquad\le2,
  \end{align*}
  as we the total variation distance between two probability distributions
  cannot exceed 1. Instead, if $\sum_{a'}M_h(s,g^1,\dotsc,g^d,a')>0$, \emph{conditioning}
  on $\sum_{a'}M_h(s,g^1,\dotsc,g^d,a')$, at all $s,g^1,\dotsc,g^d$ where $\P^{\pi^E}(G_h=g \wedge
  s_{h}=s)>0$, we note that $M_{h}(s,g^1,\dotsc,g^d,a)/\sum_{a'}M_h(s,g^1,\dotsc,g^d,a')$ is the
  empirical vector of probabilities of $\pi_{\cR^\theta,h}(a|s,g^1,\dotsc,g^d)$ (recall its
  definition from Eq. \ref{eq: def policy imitate same occ meas set}), thus we can
  apply Lemma 8 of \citet{kaufmann2021adaptive} to get that, for any
  $\delta\in(0,1)$:
  \begin{align*}
    &\P^{\pi^E}\Big(
      KL\Bigr{\frac{M_{h}(s,g^1,\dotsc,g^d,\cdot)}{\sum_{a'}M_h(s,g^1,\dotsc,g^d,a')}\Big\| \pi_{\cR^\theta,h}(\cdot|s,g^1,\dotsc,g^d)}\\
     &\qquad\qquad \le
      \frac{\ln\frac{1}{\delta}+ (A-1)\ln\bigr{e\bigr{1+\frac{\sum_{a'}M_h(s,g^1,\dotsc,g^d,a')}{A-1}}}}{\sum_{a'}M_h(s,g^1,\dotsc,g^d,a')}
    \Big)\ge 1-\delta.
  \end{align*}
  Combining this result with the Pinsker's inequality, that tells us that
  $\|x-y\|_1\le \sqrt{2 KL(x\| y)}$, and with a union bound over all
  $h\in\dsb{H}$, $s\in\cS$,
  $g^1\in\cG_{r^1_\theta,h},\dotsc,g^d\in\cG_{r^d_\theta,h}$, we get the passage
  in (1) w.p. $1-\delta/2$. Note that we add an additional 2 for the case
  $\sum_{a'}M_h(s,g^1,\dotsc,g^d,a')=0$, and we define $\overline{\cG}\coloneqq
  \sum_{h\in\dsb{H}}\prod_{i\in\dsb{d}}|\cG_{r^i_\theta,h}|$.
  At (2) we bound $\sum_{a'}M_h(s,g^1,\dotsc,g^d,a')\le N$, and bring that
  quantity outside, at (3) we apply Lemma A.1 of \citet{xie2021bridging}, after
  having noticed that $\sum_{a'}M_h(s,g^1,\dotsc,g^d,a')\sim\text{Bin}\Bigr{ N,
  \P^{\pi^E}(G_h^1=g^1\wedge\dotsc\wedge G_h^d=g^d\wedge s_{h}=s)}$, and make it
  hold for all $s,g^1,\dotsc,g^d,h$ w.p. $1-\delta/2$.
  At (4) and (5) we apply the Cauchy-Schwarz's inequality.

  Now, we impose that this quantity is smaller than $\epsilon$:
  \begin{align*}
    &8\sqrt{\frac{SH\overline{\cG}\ln\frac{2S\overline{\cG}}{\delta}}{N}}
  \sqrt{\ln\frac{2S\overline{\cG}}{\delta}+
  (A-1)\ln\Bigr{e\Bigr{1+\frac{N}{A-1}}}}\le\epsilon\\
  &\qquad\qquad\iff
N\ge \frac{64SH\overline{\cG}\ln^2\frac{2S\overline{\cG}}{\delta}}
{\epsilon^2}
+
\frac{64SH\overline{\cG}(A-1)\ln\frac{2S\overline{\cG}}{\delta}}
{\epsilon^2}
\ln\Bigr{\frac{eN}{A-1}+e}.
  \end{align*}
Thanks to Lemma J.3 of \citet{lazzati2024offline}, we know that this inequality
is satisfied with:
\begin{align*}
  N\le \frac{128SH\overline{\cG}\ln^2\frac{2S\overline{\cG}}{\delta}}
{\epsilon^2}
+ \frac{192SH\overline{\cG}(A-1)\ln\frac{2S\overline{\cG}}{\delta}}
{\epsilon^2}
\ln\Bigr{\frac{128eSH\overline{\cG}\ln\frac{2S\overline{\cG}}{\delta}}
{\epsilon^2}}
+A-1.
\end{align*}
Rearranging and applying a final union bound concludes the proof.
\end{proof}

\subsubsection{Proof of Theorem \ref{thr: rskt set}}
\label{apx: proofs set rewards rskt}

\rsktupperboundset*
\begin{proof}
Observe that, in the same way as in the proof of Theorem \ref{thr: rskt} or
\ref{thr: est any policy all rewards}, it is simple to see that:
\begin{align*}
  \max\limits_{r\in\cR} \cW\Bigr{
      \eta^{\pi^E}_{r},\widehat{\eta}_r
  }\le H\theta/2+H\epsilon',
\end{align*}
with probability $1-\delta$, by using:
\begin{align*}
  N\le\frac{1}{2(\epsilon')^2}\ln\frac{2d}{\delta},
\end{align*}
data, where we made a union bound over the $d$ rewards in $\cR^\theta$.
Then, conditioning on this event and proceeding as in the proof of Theorem
\ref{thr: upper bound exp compl}, we have:
\begin{align*}
  \cW\Bigr{
      \eta^{\pi^E}_{r^E},\eta^{\widehat{\pi}}_{r^E}
  }&\le
  \max\limits_{r\in\cR}\cW\Bigr{
      \eta^{\pi^E}_r,\eta^{\widehat{\pi}}_r}\\
    &\markref{(1)}{\le}
    \max\limits_{r\in\cR}\cW\Bigr{
      \eta^{\pi^E}_r,\com{\widehat{\eta}_r}
    }
    +
    \max\limits_{r\in\cR}
    \cW\Bigr{
      \com{\widehat{\eta}_r},\com{\eta^{\widehat{\pi}}_{r_\theta}}
    }
    +
    \max\limits_{r\in\cR}
    \cW\Bigr{
      \com{\eta^{\widehat{\pi}}_{r_\theta}},\eta^{\widehat{\pi}}_r
    }\\
    &\markref{(2)}{\le}
    \com{H\epsilon'+H\theta}
    +
    \max\limits_{r\in\cR}
    \cW\Bigr{
      \widehat{\eta}_r,\eta^{\widehat{\pi}}_{r_\theta}
    }\\
    &\markref{(3)}{=}
    H\epsilon'+H\theta
    +
    \com{\min\limits_{\pi\in\Pi(\cR^\theta)}}\max\limits_{r\in\cR}
    \cW\Bigr{
      \widehat{\eta}_r,\com{\eta^{\pi}_{r_\theta}}
    }\\
    &\markref{(4)}{\le}
    H\epsilon'+H\theta
    +
    \max\limits_{r\in\cR}
    \cW\Bigr{
      \widehat{\eta}_r,\com{\eta^{\pi^E}_{r_\theta}}
    }\\
    &\markref{(5)}{\le}
    \com{2H\epsilon'+2H\theta},
\end{align*}
where at (1) we use triangle's inequality, at (2) we use that event above holds
and Lemma \ref{lemma: different r same p}, at (3) we use the definition of
$\widehat{\pi}$, at (4) we upper bound the minimum with a specific choice of
reward in $\Pi^{\text{NM}}$, i.e., $\pi^E$, and finally, at (5), we apply again
that event holds and Lemma \ref{lemma: different r same p}.

  If we now choose $\theta=\epsilon/(4H)$, and $\epsilon'=\epsilon/(4H)$, we get that,
with probability $1-\delta$, the claim of the theorem holds with data:
\begin{align*}
  N\le\frac{8H^2}{\epsilon^2}\ln\frac{2d}{\delta}.
\end{align*}
\end{proof}

\subsection{When $r^E$ is linear in a known feature map}\label{apx: rE
linear}

In this appendix, we consider a variant of the known-reward setting, in which
$r^E$ is unknown, but we have knowledge of a $d-$dimensional feature map
$\phi:\SA\to[-1,+1]^d$ such that the expert reward $r^E$ can be written as:
\begin{align*}
  r^E_h(s,a)=\dotp{\phi(s,a),w_h}=\sum\limits_{i\in\dsb{d}}\phi_i(s,a) w_{h,i},
\end{align*}
for some unknown vectors $w_h\in[-1,+1]^d$. Note that the linear reward setting
is common in the IL literature \citep{abbeel2004apprenticeship}.
We consider the following robust variant of RDM for this setting:
\begin{align}\label{eq: RDM variant linear}
    \widehat{\pi}\in\argmin_{\pi\in\Pi^{\text{NM}}}\max\limits_{w:\dsb{H}\to[-1,+1]^d}
  \cW\Bigr{\eta_{\phi w}^{\pi},\eta_{\phi w}^{\pi^E}},
\end{align}
where notation $\phi w$ denotes the reward obtained through the dot product
between $\phi$ and $w$.

We now sketch how this setting can be easily addressed through the technique
presented in Appendix \ref{apx: rE in finite set}.

First, consider each \emph{known} feature $\phi^i:\SA\to[-1,+1]$ as a reward
function, and define the set of ``rewards''
$\Phi\coloneqq\{\phi^1,\dotsc,\phi^d\}$. Then, consider the set of policies
$\Pi(\Phi)$ using definition in Appendix \ref{apx: rE in finite set}, and let
$\pi_{\Phi}$ be the policy defined as in Eq. \eqref{eq: def policy imitate same
occ meas set}. Then, from Lemma \ref{lemma: same return distribution
set},\footnote{Modulo some slight difference as rewards are in $[0,1]$ but
features are in $[-1,+1]$.} we have the guarantee that:
\begin{align*}
  \P^{\pi_\Phi}\Big(\sum\limits_{h=1}^H \phi^i(s_h,a_h)=g\Big)=
  \P^{\pi^E}\Big(\sum\limits_{h=1}^H \phi^i(s_h,a_h)=g\Big)
  \qquad\forall g\in[-1,+1],\forall i\in\dsb{d}.
\end{align*}
As a consequence, we have that, for any $w_h\in[-1,+1]^d$, it holds that:
\begin{align*}
  \eta^{\pi_\Phi}_{\phi w}(g)=\eta^{\pi^E}_{\phi w}(g) \qquad\forall g,
\end{align*}
since the cumulative feature map collected is the same.
Simply put, this means that set $\Pi(\Phi)$ (and also policy $\pi_\Phi$) suffice
for this ``linear'' variant of the RDM problem. However, as the feature map is
arbitrary, $\Pi(\Phi)$ might be too large. Thus, we may want to discretize.

Extend the discretization approach of Section \ref{sec: policy class} to
``rewards'' in $[-1,+1]$ and define set
$\Phi^\theta\coloneqq\{\phi^1_\theta,\dotsc,\phi^d_\theta\}$.
Then, observe that the policy $\pi_{\Phi^\theta}$ satisfies a variant of Lemma
\ref{lemma: apx policies set}:
\begin{align*}
  &\max\limits_{w:\dsb{H}\to[-1,+1]^d}
  \cW\Bigr{\eta_{\phi w}^{\pi_{\Phi^\theta}},\eta_{\phi w}^{\pi^E}}\\
  &\qquad\qquad\markref{(1)}{\le}  \max\limits_{w:\dsb{H}\to[-1,+1]^d}
  \biggr{
  \cW\Bigr{\eta_{\phi w}^{\pi_{\Phi^\theta}},\eta_{\phi_\theta w}^{\pi_{\Phi^\theta}}}
  +
  \cW\Bigr{\eta_{\phi_\theta w}^{\pi_{\Phi^\theta}},\eta_{\phi_\theta w}^{\pi^E}}
  +
  \cW\Bigr{\eta_{\phi_\theta w}^{\pi^E},\eta_{\phi_ w}^{\pi^E}}
    }\\
  &\qquad\qquad\markref{(2)}{\le}  
   2dH^2\theta
  +\max\limits_{w:\dsb{H}\to[-1,+1]^d}
  \cW\Bigr{\eta_{\phi_\theta w}^{\pi_{\Phi^\theta}},\eta_{\phi_\theta w}^{\pi^E}}\\
  &\qquad\qquad\markref{(3)}{=}  
   2dH^2\theta,
\end{align*}
where at (1) we use triangle's inequality and denote $\phi_\theta$ the feature
map which is discretized in each dimension, at (2) we apply Lemma \ref{lemma:
different r same p} twice\footnote{We upper bound with an additional factor of 2
to keep into account that now rewards are in $[-1,+1]$.} and observe that,
using Holder's inequality and the definition of discretization: $\max_{w}\|\phi
w-\phi_\theta w\|_\infty\le \max_\phi \|\phi-\phi_\theta\|_1\le d \max_\phi
\|\phi-\phi_\theta\|_\infty \le dH\theta/2$, and at (3) we use the
aforementioned property (i.e., Lemma \ref{lemma: same return distribution set}).

Therefore, the policy $\pi_{\Phi^\theta}$, and so the set $\Pi(\Phi^\theta)$,
suffice for this new robust RDM problem. It is immediate to extend \rsbc and
\rskt to this setting by simply extending Algorithms \ref{alg: rsbc set} and
\ref{alg: rskt set} using input rewards taking values in $[-1,+1]$, and we are
done. Then, by adjusting Theorems \ref{thr: rsbc set} and \ref{thr: rskt set} to
keep track of this small variation, we can have also theoretical guarantees.
Specifically, for the number of samples in Eqs. \eqref{eq: sample complexity
rsbc set} and \eqref{eq: sample complexity rskt set}, we can guarantee that the
policy output by the newly constructed algorithms has a return distribution
close to that of $\eta_{\phi w}^{\pi_{\Phi^\theta}}$ for any $w$, which in turn
has a return distribution close to that of the expert for any $w$ as shown
above. To do this for the variant of \rsbc, note that we just need to extend
the proof of Lemma \ref{lemma: error propagation set}. The crucial insight in
doing so is that both $\pi_{\Phi^\theta}$ and our estimate $\widehat{\pi}$
play actions with same probability at all trajectories with the same cumulative
discretized features. Regarding the variant of \rskt, the proof is immediate as
we just need a union bound over all the features.

\subsection{Generalization to Arbitrary Problems}
\label{apx: gen bc arb problems}

In this appendix, we sketch how to extend \rsbc and its analysis to arbitrary IL
problems of the following kind, in which we aim to find a policy $\widehat{\pi}$
that minimizes:
\begin{align*}
  \widehat{\pi}\in\argmin\limits_{\pi\in\Pi^{\text{NM}}}
  \sum\limits_{h\in\dsb{H}}\sum\limits_{x\in\cX_h}\Biga{\P^\pi(\omega_h\in x)-
  \P^{\pi^E}(\omega_h\in x)},
\end{align*}
where $\cX=\{\cX_h\}_h$ is any partition of the set of trajectories satisfying a
certain property, specifically:
\begin{align*}
  &\bigcup\limits_{x\in\cX_h} x = \Omega_{h+1}\qquad\forall h\in\dsb{H},\\
  &x\cap x'=\{\} \qquad\forall x,x'\in\cX_h,\forall h\in\dsb{H},\\
  &\forall x\in\cX_h,\; \forall (s,a)\in\SA,\;
  \exists x'\in\cX_{h+1},\; \forall \omega\in x:\;
  \omega\cdot s\cdot a \in x' \;\forall h\in\dsb{H-1},
\end{align*}
where $\omega\cdot s\cdot a$ denotes the trajectory obtained by concatenating
$\omega$ with $s,a$.

First, define:
\begin{align}\label{eq: def policy imitate same occ meas any}
    \pi_\cX(a|s,\omega)\coloneqq
      \frac{\P^{\pi^E}(s_h=s,\;a_h=a,\;\omega_{h-1}\in\cX_{h-1}(\omega))}{
        \P^{\pi^E}(s_h=s,\;\omega_{h-1}\in\cX_{h-1}(\omega))},
\end{align}
when the denominator is not 0, and $1/A$ otherwise, and $\cX_{h'}(\cdot)$
denotes the set of trajectories to which $\cdot$ belongs. Then, this policy
satisfies:
\begin{thr}
It holds that:
\begin{align*}
  \P^{\pi_\cX}(s_h=s\wedge \omega_{h-1}\in x)=\P^{\pi^E}(s_h=s\wedge \omega_{h-1}\in x) \qquad
  \forall h\in\dsb{H+1},s\in\cS,x\in\cX_{h-1}.
\end{align*}
\end{thr}
\begin{proof}
The proof follows that of Lemma \ref{lemma: Psg equal Psg} and \ref{lemma: Psg
equal Psg set}. Specifically, we prove it by induction. At $h=1$, note that it
trivially holds as $\cX_0=\emptyset$. Now, make the induction hypothesis that at
any $h'<h$, we have:
\begin{align*}
  \P^{\pi_\cX}(s_{h'}=s\wedge \omega_{h'-1}\in x)=\P^{\pi^E}(s_{h'}=s\wedge \omega_{h'-1}\in x) \qquad
  s\in\cS,x\in\cX_{h'-1}.
\end{align*}
Then, at $h$, for any $s'\in\cS$ and $x'\in\cX_{h-1}$, we can write:
\begin{align*}
  \P^{\pi_\cX}(s_h=s'\wedge \omega_{h-1}\in x')&=\sum\limits_{s,a,\omega}\indic{\omega\cdot s\cdot a\in x'}\\
  &\qquad\qquad
  \P^{\pi_\cX}(s_h=s'\wedge s_{h-1}=s\wedge a_{h-1}=a\wedge \omega_{h-2}=\omega)\\
  &=\sum\limits_{x\in\cX_{h-1}}\sum\limits_{\omega\in x}
  \sum\limits_{s,a}\indic{\omega\cdot s\cdot a\in x'}\\
  &\qquad\qquad
  p_h(s'|s,a)\pi_\cX(a|s,\omega)\P^{\pi_\cX}(s_{h-1}=s\wedge \omega_{h-2}=\omega)\\  
  &\markref{(1)}{=}\sum\limits_{x\in\cX_{h-1}}
  \sum\limits_{s,a}\indic{x\cdot s\cdot a\in x'}
  p_h(s'|s,a)\pi_\cX(a|s,x)\\
  &\qquad\qquad
  \sum\limits_{\omega\in x}\P^{\pi_\cX}(s_{h-1}=s\wedge \omega_{h-2}=\omega)\\
  &=\sum\limits_{x\in\cX_{h-1}}
  \sum\limits_{s,a}\indic{x\cdot s\cdot a\in x'}
  p_h(s'|s,a)\pi_\cX(a|s,x)\\
  &\qquad\qquad
  \P^{\pi_\cX}(s_{h-1}=s\wedge \omega_{h-2}\in x)\\
  &\markref{(2)}{=}\sum\limits_{x\in\cX_{h-1}}
  \sum\limits_{s,a}\indic{x\cdot s\cdot a\in x'}
  p_h(s'|s,a)\pi_\cX(a|s,x)\\
  &\qquad\qquad
  \P^{\pi^E}(s_{h-1}=s\wedge \omega_{h-2}\in x)\\  
  &\markref{(3)}{=}\sum\limits_{x\in\cX_{h-1}}
  \sum\limits_{s,a}\indic{x\cdot s\cdot a\in x'}
  p_h(s'|s,a)\\
  &\qquad\qquad\P^{\pi^E}(s_{h-1}=s\wedge a_{h-1}=a\wedge \omega_{h-2}\in x)\\
  &=  \P^{\pi^E}(s_h=s'\wedge \omega_{h-1}\in x'),
\end{align*}
where at (1) we use notation $x\cdot s\cdot a$ to denote that all the
trajectories in $x$ when combined to a given $s,a$ give birth to the same $x'$
by hypothesis, at (2) we use the induction hypothesis, and at (3) the definition
of $\pi_\cX$.
\end{proof}

Then, \rsbc can be easily extended by counting the number of occurrences in each
set of trajectories, and also the theoretical guarantees can be easily extended
to this setting.

\section{Additional Results and Proofs for Section \ref{sec: r unknown}}
\label{apx: sec r unknown}

\estanypolicyallrewards*
\begin{proof}
  For any reward $r:\SAH\to[0,1]$, we can write:
  \begin{align*}
    \cW\Bigr{\eta_r^{\pi^E},\widehat{\eta}_r}
    &\markref{(1)}{\le}
    \cW\Bigr{\eta_r^{\pi^E},\com{\eta_{r_\theta}^{\pi^E}}}
    +
    \cW\Bigr{\com{\eta_{r_\theta}^{\pi^E}},\widehat{\eta}_r}\\
    &\markref{(2)}{\le}
    \com{\frac{H\theta}{2}}
    +
    \cW\Bigr{\eta_{r_\theta}^{\pi^E},\widehat{\eta}_r}\\
    &\markref{(3)}{\le}
    \frac{H\theta}{2}
    +
    H\epsilon',
  \end{align*}
  where at (1) we use triangle's inequality, at (2) we apply Lemma \ref{lemma:
  different r same p}, and at (3) we use the same derivation as in the proof of
  Theorem \ref{thr: rskt}, after having noticed that the estimate
  $\widehat{\eta}_r$ is the same estimate used in Line \ref{line: kt estimate
  expert ret distrib} of \rskt. So, by the DKW inequality, we have that the last
  passage holds with probability $1-\delta$ using a number of samples:
  \begin{align*}
    N\le\frac{1}{2(\epsilon')^2}\ln\frac{2}{\delta}.
  \end{align*}
  This holds for a single reward $r:\SAH\to[0,1]$. To make this hold for any
  possible reward, observe that it suffices to guarantee that it holds for all
  the rewards in the set $\cR$, defined as the set of all reward functions
  taking on discretized values:
  \begin{align*}
    \cR\coloneqq\Bigc{r:\SAH\to\cY_2^\theta}.
  \end{align*}
  Indeed, $\cR$ represents an $H\theta/2$-covering of the set of all the
  real-valued reward functions. Therefore, the result follows through the
  application of a union bound over all the rewards in $\cR$. Since they are
  $|\cY_2^\theta|^{SAH}$, then we obtain a number of samples:
  \begin{align*}
    N\le\frac{2SAH^3}{\epsilon^2}\ln\frac{2|\cY^\theta_2|}{\delta}
    \le \widetilde{\cO}\Bigr{\frac{SAH^3}{\epsilon^2}\ln\frac{1}{\delta}},
  \end{align*}
  to guarantee that $\cW\Bigr{\eta_r^{\pi^E},\widehat{\eta}_r}\le\epsilon$ for
  all $r$, where we set $\epsilon'=\epsilon/(2H)$, $\theta=\epsilon/H$, and
  $|\cY^\theta_2|\le 1+1/\theta=1+H/\epsilon$.
\end{proof}

\upperboundexpcompl*
\begin{proof}
Define the good event $\cE$ as:
\begin{align*}
  \cE\coloneqq\biggc{\max\limits_{r:\SAH\to[0,1]}\cW\Bigr{
      \eta^{\pi^E}_r,\widehat{\eta}_r
    }\le \epsilon}.
\end{align*}
Then, under $\cE$, it holds that:
\begin{align*}
  \max\limits_{r:\SAH\to[0,1]}\cW\Bigr{
      \eta^{\pi^E}_r,\eta^{\widehat{\pi}}_r
    }&\markref{(1)}{\le}
    \max\limits_{r:\SAH\to[0,1]}\cW\Bigr{
      \eta^{\pi^E}_r,\com{\widehat{\eta}_r}
    }
    +
    \max\limits_{r:\SAH\to[0,1]}
    \cW\Bigr{
      \com{\widehat{\eta}_r},\eta^{\widehat{\pi}}_r
    }\\
    &\markref{(2)}{\le}
    \com{\epsilon}
    +
    \max\limits_{r:\SAH\to[0,1]}
    \cW\Bigr{
      \widehat{\eta}_r,\eta^{\widehat{\pi}}_r
    }\\
    &\markref{(3)}{\le}
    \epsilon
    +
    \com{\min\limits_{\pi\in\Pi^{\text{NM}}}}\max\limits_{r:\SAH\to[0,1]}
    \cW\Bigr{
      \widehat{\eta}_r,\com{\eta^{\pi}_r}
    }\\
    &\markref{(4)}{\le}
    \epsilon
    +
    \max\limits_{r:\SAH\to[0,1]}
    \cW\Bigr{
      \widehat{\eta}_r,\com{\eta^{\pi^E}_r}
    }\\
    &\markref{(5)}{\le}
    \com{2\epsilon},
\end{align*}
where at (1) we use triangle's inequality, at (2) we use that event $\cE$ holds,
at (3) we use the definition of $\widehat{\pi}$, at (4) we upper bound the
minimum with a specific choice of reward in $\Pi^{\text{NM}}$, i.e., $\pi^E$,
and finally, at (5), we apply again that event $\cE$ holds.
The proof is concluded by applying Theorem \ref{thr: est any policy all
rewards}, which shows that, with the samples in Eq. \eqref{eq: sample complexity
est any r}, the event $\cE$ holds w.p. $1-\delta$.
\end{proof}

\section{Additional Details on Numerical Simulations}
\label{apx: additional details exps}

We describe how we sample an MDP and an expert's policy in the experiments in
Appendix \ref{apx: details on sampling}. In Appendix \ref{apx: RDM with markov
policies no LP}, we discuss the (im)possibility of implementing a variant of \rskt
that works with Markovian policies, in Appendix \ref{apx: impl details mimic-md}
we provide implementation details on \mimic, in Appendix \ref{apx: example} we
provide a graphical example of expert's return distribution and its estimates
from the four algorithms considered in Section \ref{sec: num simulations}, and,
finally, in Appendices \ref{apx: details Q1} and \ref{apx: details Q2} we
provide additional details and results on the simulations conducted to address,
respectively, questions 1 and 2. We mention that all simulations took place in
some hours on a AMD Ryzen 5 5500U processor.

\subsection{Details on Sampling the MDPs and the Expert's Policies}
\label{apx: details on sampling}

All the MDPs are sampled as follows. The initial state $s_0\in\cS$ is sampled
uniformly at random from $\cS$. The reward function $r^E:\SAH\to[0,1]$ is
sampled, in each $s,a,h$, uniformly at random from a set
$\{0,\rho,2\rho,\dotsc,\floor{1/\rho}\rho\}$, whose values are controlled by a
parameter $\rho\in(0,1]$ (intuitively, the difference $\theta-\rho$ gives
insights into the approximation error).
The transition model $p$ is obtained in two steps. First, we sample it uniformly
at random in each $s,a,h$ from the simplex $\Delta^\cS$ (by sampling from the
Dirichlet distribution), but then we make the transition of each $s,a,h$
deterministic with probability 0.7 to increase the variety of the MDP.

Regarding the expert's policy, when we sample Markovian expert's policies, we
simply sample them uniformly from the simplex $\Delta^\cA$ in every
$\cS\times\dsb{H}$. Instead, when we mention ``non-Markovian'' expert's
policies, then, of course, we cannot sample them uniformly at random from
$\Pi^{\text{NM}}$ due to the curse of dimensionality. Instead, what we do is
sample randomly from a parameteric subset of $\Pi^{\text{NM}}$ defined as
follows. We map states $\{s_1,\dotsc,s_S\}=\cS$ to $\{0,\dotsc,S-1\}$, and
actions $\{a_1,\dotsc,a_A\}=\cA$ to $\{0,\dotsc,A-1\}$. Then, each policy
projects each past history $(s_1,a_1,s_2,a_2,\dotsc,s_{h},a_{h})$ to $\RR^{16}$
after having mapped it to integers and padded with zeros to reach size $2H$, by
using a projection matrix of size $(2H,16)$ randomly generated.
Then, we obtain the probabilities of playing each action by multiplying such
16-dimensional vector by a matrix of weigths of size $(16,A)$ corresponding to
the current state $s_h$ (we randomly generated one of these weight matrices for
every state).

\subsection{On Addressing Return Distribution Matching with Markovian Policies}
\label{apx: RDM with markov policies no LP}

We mention that finding the best Markovian policy $\pi'\in\Pi^{\text{M}}$ with
return distribution closest to a given array $\eta$, i.e., addressing
$\min_{\pi\in\Pi^{\text{M}}}\cW(\eta_{r^E_\theta}^{\pi},\eta)$, seems a problem
that cannot be formulated as an LP and not even as a more general convex
optimization problem, as it seems to require bilinear constraints. If so,
generating random MDPs and (non-Markovian) expert policies to understand how
much Markovian policies are outperformed by non-Markovian policies for return
distribution matching may be difficult. We believe that understanding more
in-depth this fact will be interesting for future works.

\subsection{Implementation Details of MIMIC-MD}\label{apx: impl details mimic-md}

Algorithm \mimic has been devised by
\citet{rajaraman2020fundamentalimitationlearning} but an efficient LP
formulation is provided in \citet{rajaraman2021provablybreakingquadraticerror}
(see its proof of Theorem 2). Although it concerns only deterministic expert's
policies, we extend it to stochastic policies as in the full algorithm as
follows.
Simply, in addition to the constraints for $d$ being a valid occupancy measure,
we replace constraint $d_h(s,a)=\indic{a=\pi^E_h(s)}$ (valid for deterministic
policies) with $d_h(s,a)=\frac{N_h(s,a)}{N_h(s)}\sum_{a'}d_h(s,a')$ where
$N_h(s)\neq0$ to extend to stochatic experts. Note that it is linear in $d$.

\subsection{An Example of Expert's Return Distribution and its Estimates}
\label{apx: example}

In Figure \ref{fig: experiments}, we plot the (discretized) return distribution
of the expert's policy $\eta^{\pi^E}_{r^E}$ (in blue, $\eta^E$) obtained after
having sampled at random an MDP with size $S,A,H=(3,2,8)$ and the policy as
well. We discretized it with bins at a distance of $0.1$ for computation and
plotting.

Then, we have sampled a dataset $\cD^E$ of $N=300$ trajectories from $\pi^E$,
and computed an estimate of $\eta^E$ as in Line \ref{line: kt
estimate expert ret distrib} of Alg. \ref{alg: rskt} using $\theta=0.05$ (in
green, $\widehat{\eta}$). Note that $\widehat{\eta}$ is quite close to
$\eta^E$ with this amount of trajectories.

We have then given $\cD^E$ (and potentially the transition model) in input to
the four algorithms considered in Section \ref{sec: num simulations}, i.e.,
\rsbc, \rskt, \bc and \mimic, obtaining the return distributions
$\eta^{\text{RS-BC}}$, $\eta^{\text{RS-KT}}$, $\eta^{\text{BC}}$ and
$\eta^{\text{MIMIC-MD}}$.

Observe that \rsbc tends to approximate $\eta^E$ directly, while \rskt tries to
match $\widehat{\eta}$ instead. Anyway, both return distributions are close to
$\eta^E$. Instead, \bc and \mimic, by working with Markovian policies, are
biased, and their return distributions are not that close to $\eta^E$.

\begin{figure}[!h]
  \centering
  \begin{minipage}[t!]{0.9\textwidth}
    \centering
    \includegraphics[width=0.98\linewidth]{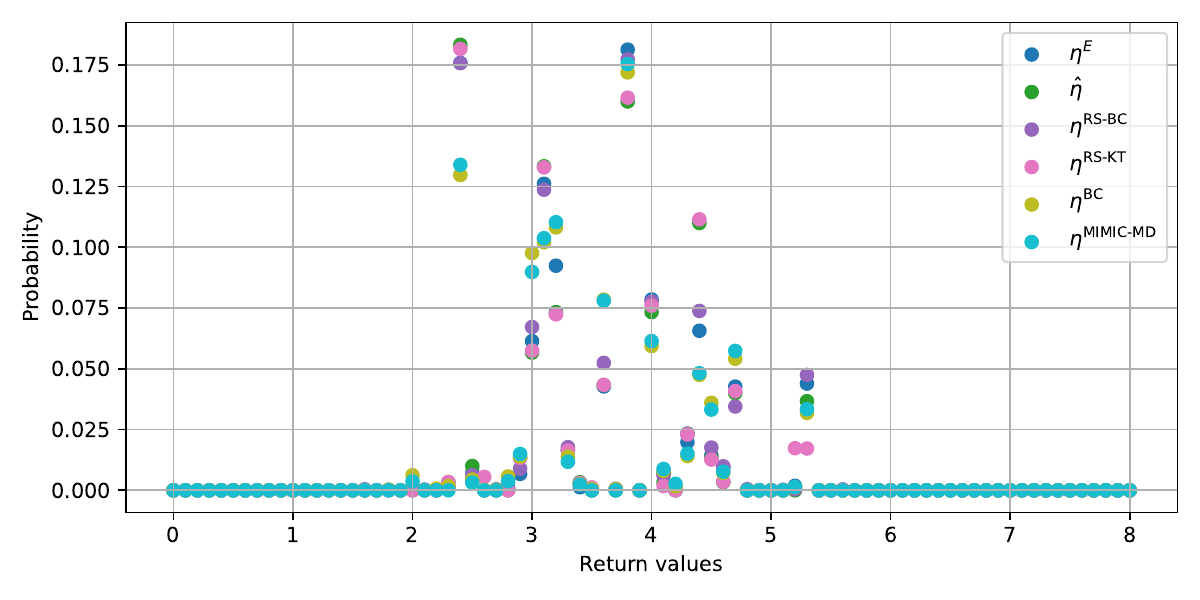}
\end{minipage}
\caption{An example of expert'r return distribution and various estimates
computed with \rsbc, \rskt, \bc and \mimic.}
\label{fig: experiments}
 \end{figure}

\subsection{Additional Details on Q1}\label{apx: details Q1}

\begin{table}[!t]
  \centering
  \resizebox{0.9\columnwidth}{!}{
  %  \scalebox{0.9}{
  \begin{tabular}{||c | c c c c c||} 
   \hline
    & $N=20$ & $N=80$ & $N=300$ & $N=1000$ & $N=10000$\\
   \hline
   \rsbc & \small \bf0.101±0.041 & \small \bf0.059±0.017 & \small \bf0.032±0.011 & \small \bf0.016±0.006 & \small \bf0.005±0.002\\
   \hline
    \rskt & \small 0.164±0.04 & \small 0.084±0.021 & \small 0.052±0.013 &
    \small 0.033±0.01 & \small  0.02±0.006 \\
   \hline
    \bc & \small  \bf0.104±0.041 & \small \bf0.058±0.018 & \small \bf0.035±0.012 & \small 0.024±0.01 & \small 0.018±0.009 \\
   \hline
    \mimic & \small 0.139±0.058 & \small 0.079±0.028 & \small 0.045±0.016 & \small 0.029±0.012 & \small 0.019±0.009 \\
   \hline
  \end{tabular}%
  % }
  }
  \caption{Results of simulation with $S,A,H=(50,5,5)$ for Q1.}
\label{table: exp 3}
\end{table}

\begin{table}[!t]
  \centering
  \resizebox{0.9\columnwidth}{!}{
  %  \scalebox{0.9}{
  \begin{tabular}{||c | c c c c c||} 
   \hline
    & $N=20$ & $N=80$ & $N=300$ & $N=1000$ & $N=10000$\\
   \hline
   \rsbc & \small \bf0.193±0.086 & \small \bf0.087±0.035 & \small \bf0.046±0.019 & \small \bf0.027±0.01 & \small \bf0.012±0.005\\
   \hline
    \rskt & \small 0.223±0.066 & \small 0.115±0.034 & \small 0.072±0.023 & \small 0.053±0.017 & \small  0.041±0.018 \\
   \hline
    \bc & \small  0.208±0.08 & \small 0.162±0.083 & \small 0.156±0.086 & \small 0.151±0.086 & \small 0.151±0.085 \\
   \hline
    \mimic & \small 0.265±0.106 & \small 0.18±0.086 & \small 0.159±0.084 & \small 0.153±0.087 & \small 0.15±0.085 \\
   \hline
  \end{tabular}%
  % }
  }
  \caption{Results of simulation with $S,A,H=(2,2,20)$ for Q1.}
\label{table: exp 6}
\end{table}

For simulations, we set $\rho=0.03<\theta$, meaning that there is some
approximation error.

Regarding Table \ref{table: exp 3}, we would like to discuss some points. First,
we mention that the increase in size of $S$ and $A$ is not sufficiently big to
permit to \rskt to outperform the sample complexity of \rsbc, as discussed in
Q4.
Second, increasing $S,A,H$ makes \rskt much more time-consuming, as it requires
solving an LP with much more variables and constraints.
Third, \rskt performs comparably to \bc and \mimic, but this is due to an
increment of approximation error due to the increase of $S,A$, as clear from
Table \ref{table: exp 5}, where in absence of approximation error \rskt
outperforms \bc and \mimic. Note that this is not the fact for \rsbc, which
seems more robust to approximation error for this problem size (intuitively, the
reason is that it is strictly more expressive than \bc for any choice of
$\theta$).

Regarding Table \ref{table: exp 6}, we mention that a larger $H$ implies a
larger approximation error in particular for \rskt, as clear from Table
\ref{table: exp 8}, where in absence of approximation error \rskt outperforms
\bc and \mimic.

\subsection{Additional Details on Q2}\label{apx: details Q2}

\begin{table}[!t]
  \centering
  \resizebox{0.9\columnwidth}{!}{
  %  \scalebox{0.9}{
  \begin{tabular}{||c | c c c c c||} 
   \hline
    & $N=20$ & $N=80$ & $N=300$ & $N=1000$ & $N=10000$\\
   \hline
   \rsbc & \small \bf 0.081±0.036 & \small \bf0.035±0.016 & \small \bf0.019±0.011 & \small \bf0.011±0.005 & \small \bf0.004±0.002\\
   \hline
    \rskt & \small 0.095±0.042 & \small 0.043±0.019 & \small 0.024±0.012 & \small \bf0.013±0.005 & \small \bf0.004±0.002 \\
   \hline
    \bc & \small 0.104±0.053 & \small 0.076±0.048 & \small 0.068±0.048 & \small 0.066±0.049 & \small 0.065±0.049 \\
   \hline
    \mimic & \small 0.13±0.063 & \small 0.085±0.046 & \small 0.071±0.046 & \small 0.068±0.049 & \small 0.065±0.049 \\
   \hline
  \end{tabular}%
  }
  \caption{Results of simulation with $S,A,H=(2,2,5)$ and $\theta=\rho$ for Q2.}
\label{table: exp 1}
\end{table}

\begin{table}[!t]
  \centering
  \resizebox{0.9\columnwidth}{!}{
  %  \scalebox{0.9}{
  \begin{tabular}{||c | c c c c c||} 
   \hline
    & $N=20$ & $N=80$ & $N=300$ & $N=1000$ & $N=10000$\\
   \hline
   \rsbc & \small \bf0.088±0.026 & \small \bf0.048±0.022 & \small \bf0.025±0.011 & \small \bf0.012±0.005 & \small\bf 0.004±0.002\\
   \hline
    \rskt & \small 0.135±0.037 & \small 0.068±0.02 & \small 0.034±0.01 & \small 0.019±0.005 & \small \bf 0.006±0.002 \\
   \hline
    \bc & \small  \bf0.092±0.032 & \small 0.054±0.029 & \small 0.038±0.023 & \small 0.031±0.022 & \small 0.028±0.024 \\
   \hline
    \mimic & \small 0.118±0.042 & \small 0.066±0.028 & \small 0.044±0.023 & \small 0.033±0.021 & \small 0.029±0.024 \\
   \hline
  \end{tabular}%
  % }
  }
  \caption{Results of simulation with $S,A,H=(20,3,5)$ for Q2.}
\label{table: exp 5}
\end{table}

\begin{table}[!t]
  \centering
  \resizebox{0.9\columnwidth}{!}{
  %  \scalebox{0.9}{
  \begin{tabular}{||c | c c c c c||} 
   \hline
    & $N=20$ & $N=80$ & $N=300$ & $N=1000$ & $N=10000$\\
   \hline
   \rsbc & \small \bf 0.177±0.067 & \small \bf 0.091±0.038 & \small \bf
   0.047±0.018 & \small \bf 0.023±0.008 & \small \bf0.008±0.003\\
   \hline
    \rskt & \small 0.224±0.083 & \small 0.108±0.039 & \small 0.057±0.018 &
    \small \bf0.031±0.01 & \small \bf0.011±0.004 \\
   \hline
    \bc & \small 0.196±0.104 & \small 0.159±0.102 & \small 0.148±0.103 & \small
    0.145±0.104 & \small 0.144±0.106 \\
   \hline
    \mimic & \small 0.246±0.115 & \small 0.174±0.103 & \small 0.151±0.103 & \small 0.145±0.104 & \small 0.144±0.106 \\
   \hline
  \end{tabular}%
  % }
  }
  \caption{Results of simulation with $S,A,H=(2,2,20)$ for Q2.}
\label{table: exp 8}
\end{table}

The three additional simulations have all a non-Markovian expert and
$\rho=\theta$ (to enforce no approximation error) with parameters
$S,A,H,\theta\in\{(2,2,5,5e-2),(20,3,5,5e-2),(2,2,20,1e-1)\}$, and the results
are reported respectively in Tables \ref{table: exp 1}, \ref{table: exp 5} and
\ref{table: exp 8}.

By comparing these tables respectively with Table \ref{table: Q} (top),
\ref{table: exp 3} and \ref{table: exp 6}, where there is approximation error
due to $\theta=0.05>0.03=\rho$, we realize that the approximation error mostly
concerns \rskt and with larger horizons $H$ (as expected from Lemma \ref{lemma:
apx policies}, and from knowledge of how \rsbc works).

\end{document}